\documentclass{article}

\usepackage{arxiv}

\usepackage{amsmath, amssymb}
\usepackage{enumitem}
\usepackage[dvipsnames]{xcolor}
\definecolor{LinkColor}{HTML}{1F73C9}
\definecolor{CiteColor}{HTML}{3F6278}

\definecolor{myblue}{HTML}{1F73C9}
\definecolor{myred}{HTML}{B85E1A}
\definecolor{plotblue}{HTML}{1677FF}
\definecolor{plotorange}{HTML}{FA8C16}

\usepackage{amsthm}

\usepackage[utf8]{inputenc} %
\usepackage[T1]{fontenc}    %
\usepackage{hyperref}       %
\usepackage{url}            %
\usepackage{booktabs}       %
\usepackage{diagbox}
\usepackage{microtype}      %
\usepackage[nameinlink]{cleveref}       %
\usepackage{graphicx}
\usepackage{subcaption}
\usepackage{wrapfig}
\usepackage[round]{natbib}
\usepackage{algorithm}    %

\usepackage[most]{tcolorbox}

\newtcolorbox{keypropertybox}{
	colback=black!1,
	colframe=black!18,
	boxrule=0.5pt,
	arc=2pt,
	left=0.9em,
	right=0.9em,
	top=0.65em,
	bottom=0.65em,
	before skip=0.8em,
	after skip=0.8em,
}

\newtcolorbox{titledkeypropertybox}[1]{
	enhanced,
	colback=black!1,
	colframe=black!18,
	boxrule=0.5pt,
	arc=2pt,
	left=0.9em,
	right=0.9em,
	top=0.85em,
	bottom=0.65em,
	before skip=0.8em,
	after skip=0.8em,
	title={\strut #1},
	coltitle=black,
	fonttitle=\bfseries,
	attach boxed title to top left={
		xshift=1.2em,
		yshift=-\tcboxedtitleheight/2,
	},
	boxed title style={
		colback=black!1,
		colframe=black!1,
		boxrule=0pt,
		arc=0pt,
		left=0.4em,
		right=0.4em,
		top=0pt,
		bottom=0pt,
	},
}

\renewcommand{\epsilon}{\varepsilon}

\newcommand{\R}{\mathbb{R}}

\renewcommand{\P}{\mathbb{P}}

\newcommand{\E}{\mathbb{E}}
\DeclareMathOperator{\Var}{Var}

\theoremstyle{definition}
\newtheorem{proposition}{Proposition}
\newtheorem*{proposition*}{Proposition}
\newtheorem*{example}{Example}
\newtheorem{problem}{Problem}

\crefname{section}{Section}{Sections}
\Crefname{section}{Section}{Sections}
\crefname{subsection}{Section}{Sections}
\Crefname{subsection}{Section}{Sections}
\crefname{subsubsection}{Section}{Sections}
\Crefname{subsubsection}{Section}{Sections}
\crefname{appendix}{Appendix}{Appendices}
\Crefname{appendix}{Appendix}{Appendices}
\crefname{figure}{Fig.}{Figs.}
\Crefname{figure}{Fig.}{Figs.}
\crefname{table}{Table}{Tables}
\Crefname{table}{Table}{Tables}
\crefname{equation}{Eq.}{Eqs.}
\Crefname{equation}{Eq.}{Eqs.}
\crefname{proposition}{Proposition}{Propositions}
\Crefname{proposition}{Proposition}{Propositions}

\newcommand{\obsspace}{\mathcal{Y}}
\newcommand{\actspace}{\mathcal{U}}

\title{Robust Control under Stationary Ambiguity}

\date{}

\usepackage{authblk}

\makeatletter
\renewcommand\AB@affilsepx{ \quad }
\makeatother
\author[1,2]{Konrad J. Mueller\thanks{Corresponding author: \texttt{k.mueller23@imperial.ac.uk}}}
\author[2]{Amira Akkari}
\author[2]{Ben Wood}
\author[1,3]{Lukas Gonon}

\affil[1]{Imperial College London}
\affil[2]{JPMorgan Chase \& Co.\thanks{Opinions expressed in this paper are those of the authors, and do not necessarily reflect the view of JPMorgan Chase~\&~Co.}}
\affil[3]{University of St.\ Gallen}

\renewcommand{\headeright}{}
\renewcommand{\undertitle}{}
\renewcommand{\shorttitle}{}

\hypersetup{
pdftitle={Robust Control under Stationary Ambiguity},
pdfsubject={robust control, ambiguity, hedging},
pdfauthor={Konrad J. Mueller, Amira Akkari, Ben Wood, Lukas Gonon},
pdfkeywords={robust control, stationary ambiguity, domain randomization, hedging},
colorlinks=true,
citecolor=CiteColor,
linkcolor=LinkColor,
urlcolor=CiteColor,
}

\usepackage{titletoc}

\newcommand{\AppendixToC}{%
  \startcontents[appendix]%
  \printcontents[appendix]{}{1}{%
    \section*{Appendix}%
    \vspace{-0.75em}%
    \hrule%
    \vspace{0.5em}%
  }%
  \vspace{1em}
  \hrule
  \vspace{1em}
}

\begin{document}
\maketitle

\begin{abstract}

Control policies optimized in simulation can perform poorly in the real system
when the parameters~$x$ of the simulator are estimated from limited data
but the resulting parameter uncertainty is not represented inside the simulation.
A common way to incorporate such \emph{ambiguity}
is to simulate each trajectory of the system
under a randomly drawn value for $x$.
Since the policy cannot observe the drawn value,
it must initially choose controls that perform well across many possible parameter values.
However, if the policy progressively observes the system,
it can often gradually infer the value of $x$, so that ambiguity vanishes.
Over time, the policy then specializes to its estimate of $x$ and loses its robustness.
This is undesirable in many real systems, where latent factors are expected to shift.
In financial markets, for example,
a policy hedging a derivative payoff should remain robust to
changes in the volatility regime.
To induce such continual robustness,
we propose training policies in simulators where ambiguity
varies with the system's state but does not systematically decay over time.
We formalize this requirement as \emph{stationary~ambiguity}:
the simulator should induce a stationary filter process over the latent state.
We show how to construct such simulators and demonstrate, on hedging problems,
that policies trained under stationary ambiguity
preserve robustness to latent factors over time,
leading to strong performance on real market data.
As a modeling principle, stationary ambiguity informs many simulator design decisions:
which models make realistic simulators,
how their parameters should be randomized,
and how simulator and policy should be initialized.
While our experiments focus on hedging,
stationary ambiguity may also be useful for other sequential control problems
driven by exogenous stochastic processes with shifting latent structure.

\end{abstract}

\section{Introduction}\label{sec:intro}

\begin{wrapfigure}[19]{r}{0.5\textwidth}
	\vspace{-1em}
	\centering
	\includegraphics[width=\linewidth]{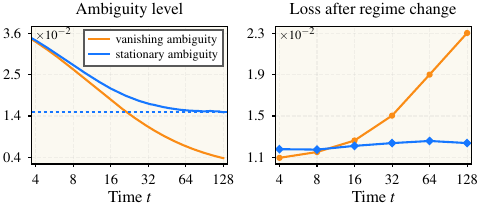}
	\vspace{-1.5em}
	\caption{
	Ambiguity decay leads to robustness decay
	in a hedging problem with uncertain volatility.
	We train one policy in a simulator with
	\textcolor{plotorange}{vanishing ambiguity}
	and one in a simulator with
	\textcolor{plotblue}{stationary ambiguity}.
	\emph{Left:} simulation-induced ambiguity over time;
	under stationary ambiguity, the level settles in steady state (dashed) after warm-up.
	\emph{Right:} worst-case loss in a regime-shift stress test.
	The policy first observes $t$ returns under a volatility regime $x^{\mathrm{pre}}$.
	At time $t$, the hedging problem starts and the regime shifts to
	$x^{\mathrm{eval}}$.
	If $t$ is large and most ambiguity has vanished,
	the policy has specialized to $x^{\mathrm{pre}}$
	and is less robust to the regime change.}
	\label{fig:fig1}
	\vspace{-1em}
\end{wrapfigure}

Finding effective policies for stochastic control problems is often challenging
when data from the real system is limited.
A common approach is then to optimize a policy
in a simulator that emulates the real system.
The simulator usually depends on parameters~$x$
that have to be estimated from the limited data
and are therefore subject to parameter uncertainty (\emph{ambiguity}).
Simply plugging a parameter estimate $\hat x$ into the simulator
and optimizing the policy under this plug-in simulator
treats $\hat x$ as known and ignores this ambiguity.
The resulting policy can become specialized
to the dynamics induced by $\hat x$
and perform poorly when deployed on the real system.

A common way to obtain a more robust policy
is to optimize in a simulator in which $x$ is treated as random
\citep{vidyasagar_2001_rando,tobin_2017_domai}.
For example, suppose that the control problem depends on an exogenous
stochastic process~$Y$,
and that the simulator generates $Y$ as a Markov process
with parameter $x$.
One way to introduce ambiguity into the simulation
is to replace the plug-in value $\hat x$
by a random parameter sampled from a distribution~$\nu$:
\begin{equation*}
	Y_t \mid Y_{t-1}
	\sim p(\cdot \mid Y_{t-1}, \hat x)
	\quad
	\xrightarrow{\ \text{randomize } x \ }
	\quad
	X \sim \nu,
	\quad
	Y_t \mid (Y_{t-1}, X)
	\sim p(\cdot \mid Y_{t-1}, X).
\end{equation*}
A policy trained in this randomized simulator
observes the process $Y$,
but not the sampled parameter $X$.
At the start of a simulated path,
the policy therefore faces ambiguity about the realized parameter value,
so it has to choose actions that perform well across
a range of plausible parameter values.
But because the policy progressively observes $Y$,
it can often estimate $X$ with increasing precision over time.
The policy then faces less ambiguity about $X$
and increasingly specializes its actions to its estimate of $X$.

Under this \emph{vanishing ambiguity},
the policy's initial robustness to the uncertain parameter
is therefore progressively lost (\cref{fig:fig1}).
This loss of robustness is undesirable when we expect
the parameters that best describe the real system to change over time.
For example, many control problems in finance
require simulating paths of asset prices or returns.
The simulator parameters that describe current market conditions
often drift or jump
\citep{lamoureux_1990_persi,ang_2012_regim,dangl_2012_predi},
so beliefs about those parameters should not be expected to concentrate over time
\citep{epstein_2007_learn,ju_2012_ambig,nagel_2022_asset}.

In such domains,
the policy should thus maintain its robustness to the simulator parameters
over time.
To achieve this, we argue that
ambiguity should be able to vary stochastically with the state of the simulated system,
but not systematically with the clock of the control problem.
We formalize this as follows:
consider a simulator
with a latent process $X$ and an observable process $Y$,
such that the joint process $(X, Y)$ is Markov.
We characterize ambiguity through the filter
and require the simulation setup to induce a stationary filter process:
\begin{titledkeypropertybox}{Stationary ambiguity}
	\begin{center}
	The simulator $(X,Y)$ induces a stationary filter process
	$(\pi_t)_t$,
	where
	$\pi_t := \mathbb{P}(X_t \in \cdot \mid Y_{\leq t})$.
	\end{center}
\end{titledkeypropertybox}
Under stationary ambiguity, the average amount of ambiguity about $X_t$,
measured by $\mathbb E[\Var(X_t \mid Y_{\leq t})]$,
is constant across $t$ in the stationary regime
(dashed line in left panel of \cref{fig:fig1}).
Stationary ambiguity therefore rules out the systematic ambiguity decay
induced by the randomization scheme above.

Throughout the paper,
we study how simulation-implied ambiguity dynamics affect policies
and show that policies trained under stationary ambiguity
maintain their robustness to latent factors over time.
We examine these effects in derivative hedging problems,
which are stochastic control problems that are fundamental to financial risk management
and commonly solved using simulated data \citep{buehler_2019_deep,wiese_2021_multia}.
Robustness to uncertain simulator parameters is especially important
for hedging problems,
because relevant real data is often limited
\citep{cohen_2023_black,limmer_2024_robus,he_2025_distr}.
Our contributions can be summarized as follows:

\begin{enumerate}
	\item We study how vanishing ambiguity in simulation
	affects policies and their robustness.
	In an optimal investment problem (\cref{sec:setup})
	and across three hedging problems (\cref{sec:synth_experiments}),
	we show that policies optimized under vanishing ambiguity
	specialize to their estimate of the latent simulator parameters
	and lose robustness over time.
	\item We introduce the modeling principle of stationary ambiguity,
	under which ambiguity
	does not systematically vary with the clock of the control problem.
	We show how to approximate stationary ambiguity in practice
	by using a jointly stationary Markov simulator
	and providing warm-up observations to the policy
	(\cref{sec:ambiguity}).
	\item We show that policies optimized under stationary ambiguity
	maintain robustness to latent simulator parameters over time
	(\cref{sec:setup,sec:synth_experiments}).
	In a large-scale hedging case study,
	we find that these policies outperform those optimized
	under vanishing ambiguity when deployed on real market data
	(\cref{sec:real_data}).
	\item We outline implications of stationary ambiguity
	for commonly used simulation schemes
	and give a practical recipe for choosing, initializing,
	and randomizing simulators under this principle
	(\cref{sec:practical_recommendations}).
\end{enumerate}

Stationary ambiguity can guide the design and use of
simulators with unobserved latent states,
even when the simulator itself is well established.
For example,
in the \citet{heston_1993_a} stochastic volatility model,
it is common practice to initialize the latent variance process $V$
deterministically ($V_0 = v_0$).
The policy then faces no ambiguity about $V_0$
but becomes ambiguous about the variance level at later times $t > 0$.
This violates stationary ambiguity and makes the policy overconfident
at the start of the control problem.
In \cref{sec:practical_recommendations},
we discuss how to initialize the variance process
and warm up the policy
so that ambiguity is approximately stationary from $t=0$ onward.

Moreover,
stationary ambiguity may be useful for
many real-world control problems beyond hedging and finance.
For instance, many inventory, storage, and queueing problems
depend on an exogenous stochastic process~$Y$
that the policy progressively observes but cannot affect.
When the dynamics of $Y$ are uncertain and the simulator parameters
that best describe them are expected to change over time,
stationary ambiguity is a natural modeling assumption.

\section{Simulator randomization with vanishing ambiguity}\label{sec:setup}
Throughout the paper,
random variables are defined on an underlying probability space
$(\Omega,\mathcal E,\mathbb P)$.
We use $\mathcal X$ to denote a latent state space
and $\mathcal Y$ to denote an observation space.
Both $\mathcal X$ and $\mathcal Y$ are assumed to be Polish spaces.

\subsection{Simulator-based policy optimization}

We study control problems that involve an exogenous stochastic process $Y$,
meaning the controller cannot affect it via its actions.
In financial control problems, such as optimal investment and hedging,
$Y_t$ may be the return of a traded asset over period $t$.
Many control problems of this form
have traditionally been studied under structured dynamics for~$Y$,
where optimal policies can be characterized analytically or
by specialized numerical methods.
More recently, it has become practical to approximate optimal policies
under broader dynamics for $Y$ by:
(i)~specifying a simulator for $Y$,
(ii)~parameterizing the policy as a neural network $f_\theta$,
and (iii)~optimizing $\theta$ by stochastic gradient descent on simulated paths of $Y$
\citep{bertsekas_2008_neuro,heess_2015_learna,han_2016_deep,buehler_2019_deep}.

The ability to optimize policies under broad simulator classes
still leaves the question of which simulator to train on.
A common approach is to choose a simulator class with parameters $x$,
compute an estimate~$\hat x$ from historical data,
and train the policy under the simulator with $x$ fixed at $\hat x$.
Such a \emph{plug-in} approach is simple,
but treats the fitted parameter $\hat x$ as known.
In reality, $\hat x$ is estimated from limited historical data and is therefore uncertain.
We call this uncertainty about the simulator parameter \emph{ambiguity}.
When the optimal policy is sensitive to the value of $x$,
training under the plug-in simulator
can produce a policy that is overly specialized to the estimate $\hat x$
and performs poorly when deployed on the real system
whose dynamics may differ from those implied by $\hat x$.

This observation motivates a variety of methods that aim
to make the learned policy robust to the simulator and its parameters
\citep{zhou_1998_essen,iyengar_2005_robus,wiesemann_2013_robus,delage_2010_distr,morimoto_2005_robus,pinto_2017_robus}.
One natural approach is to attempt to
incorporate ambiguity about $x$ directly into the simulation.
If the simulator is run with a fixed value of $x$, then the policy faces
no ambiguity about that parameter and can specialize to it.
To reflect ambiguity about $x$, one must therefore vary the parameter
in the simulation without revealing the realized value to the policy.
This is commonly referred to as \emph{randomizing} the
simulator \citep{tobin_2017_domai,peng_2018_simto}.
In such a randomized simulator it is no longer optimal
to specialize to a single value of $x$.
The policy should therefore learn to make decisions that perform well
across a range of plausible parameter values.

\subsection{Vanishing ambiguity from static randomization}\label{subsec:non_stationary_amb}
The simplest way to randomize is to draw the parameter once at the
beginning of each simulated path and keep it fixed over time
(\emph{static randomization}).
For example, consider a simulator with parameter $x \in \mathcal{X}$
and $\mathcal{Y}$-valued observations $Y_{1}, \dots, Y_T$
that are sampled i.i.d.\ from a parametric distribution~$p(\cdot \mid x)$.
Under a static randomization of this simulator,
the parameter~$x$ is replaced
by a random variable $X$ that is drawn at the start of the path:
\begin{equation}\label{eq:slm}
	X \sim \nu,
	\qquad
	Y_t \mid X \sim p(\cdot \mid X),
	\quad t \ge 1.
\end{equation}
We call this randomized simulator the \emph{static latent model} (SLM).
The realized draw of $X$ is not observable,
but we assume the policy can progressively observe the process $Y$.
The uncertainty about $X$
is therefore characterized through the posterior distribution
\begin{equation*}
	\mathbb{P}(X \in \cdot \mid \mathcal{G}_t),
	\quad \text{where} \quad
	\mathcal{G}_t = \sigma(Y_1, \dots, Y_t) .
\end{equation*}
At time $t = 0$, the policy has not made any observations yet
($\mathcal{G}_0$ is trivial),
and the posterior distribution coincides with the randomization distribution $\nu$.
In this sense, the static randomization scheme successfully transfers
the modeling ambiguity about $x$, as described by the distribution $\nu$,
into ambiguity faced by the policy within the simulation.

Because all observations $Y_{1:t}$ are generated under the same realization of $X$,
each new observation provides additional information about the realized value.
In many simulators, the policy can therefore infer that value
with increasing precision over time.
More precisely, under an additional identifiability assumption,
the posterior distribution concentrates over time
\citep{doob1949,miller_2018_aa}.
A detailed statement is given in \cref{app:subsec:main_proofs}.
\begin{proposition}[Doob's theorem]\label{prop:slm_vanishing_amb}
	Consider the static latent model of \cref{eq:slm}.
	Let $\varphi:\mathcal X\to\mathbb R$ be measurable with
	$\varphi(X)\in L^2$.
	If the model is identifiable so that
	${p(\cdot \mid x) \neq p(\cdot \mid x')}$
	whenever $x \neq x'$,
	then, $\mathbb{P}$-almost surely as $t \to \infty$,
	\begin{equation*}
		\mathbb{P}(X \in \cdot \mid \mathcal{G}_t) \Rightarrow \delta_{X}
		\quad
		\text{and}
		\quad
		\Var(\varphi(X) \mid \mathcal{G}_t)
		\rightarrow 0 ,
	\end{equation*}
	where $\Rightarrow$ denotes weak convergence.
\end{proposition}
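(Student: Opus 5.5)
The plan is to follow Doob's martingale argument. First we show that $X$ is, up to a null set, measurable with respect to $\mathcal G_\infty := \sigma(Y_1, Y_2, \dots)$. After that, both claims follow from martingale convergence.

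\textbf{Step 1: $X$ is $\mathcal G_\infty$-measurable a.s.} Conditional on $X = x$, the $Y_t$ are i.i.d.\ with law $p(\cdot\mid x)$. Since $\mathcal Y$ is Polish, fix a countable convergence-determining class $\{g_k\}$ of bounded continuous functions on $\mathcal Y$.

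By the strong law of large numbers applied conditionally on $X$, almost surely for every $k$
\[
\frac1t\sum_{s=1}^t g_k(Y_s) \to \int g_k \, dp(\cdot\mid X).
\]
So the empirical measure $\hat p_t = \frac1t\sum_{s\le t}\delta_{Y_s}$ converges weakly to $p(\cdot\mid X)$ a.s. This can also be obtained from Varadarajan's theorem conditionally on $X$.

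Hence $p(\cdot\mid X)$ is a $\mathcal G_\infty$-measurable random element of the space of probability measures on $\mathcal Y$. By identifiability, $x\mapsto p(\cdot\mid x)$ is injective. We assume it is measurable, which is implicit in the model being a kernel.

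Since $\mathcal X$ and the space of probability measures are Polish, the Lusin--Souslin theorem says an injective Borel map has a Borel image and a Borel inverse on that image. So $X = \psi(p(\cdot\mid X))$ for a Borel $\psi$, and $X$ is $\mathcal G_\infty$-measurable modulo null sets.

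\textbf{Step 2: convergence of conditional moments.} For $\varphi(X)\in L^2$, the sequences $M_t := \E[\varphi(X)\mid\mathcal G_t]$ and $N_t := \E[\varphi(X)^2\mid\mathcal G_t]$ are uniformly integrable martingales. Lévy's upward theorem gives, almost surely,
\[
M_t \to \E[\varphi(X)\mid\mathcal G_\infty] = \varphi(X), \qquad N_t \to \varphi(X)^2 ,
\]
using Step 1 for the identification of the limits. Therefore
\[
\Var(\varphi(X)\mid\mathcal G_t) = N_t - M_t^2 \to 0 \quad \text{a.s.}
\]
Here we use regular conditional distributions, which exist because $\mathcal X$ is Polish.

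\textbf{Step 3: weak convergence of the posterior.} Take a countable convergence-determining family $\{h_j\}$ of bounded continuous functions on $\mathcal X$. Applying Step 2's martingale argument to each $h_j$ gives
\[
\int h_j \, d\mathbb P(X\in\cdot\mid\mathcal G_t) \to h_j(X) \quad \text{for all } j,
\]
outside a single null set obtained as a countable union. It follows that $\mathbb P(X\in\cdot\mid\mathcal G_t)\Rightarrow\delta_X$ almost surely.

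\textbf{Main obstacle.} The delicate step is Step 1, namely the measurable inversion of $x\mapsto p(\cdot\mid x)$ and the conditional SLLN holding uniformly over the countable class. Identifiability alone gives injectivity, and Lusin--Souslin supplies measurability of the inverse. If this route fails, the fallback is Doob's original formulation as cited in \citet{miller_2018_aa}, which gives posterior consistency for $\nu$-a.e.\ $x$; integrating over $\nu$ then yields the $\mathbb P$-a.s.\ statement.
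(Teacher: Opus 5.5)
Your proof is correct, and it rests on the same Doob martingale argument as the paper. The difference is that the paper does not carry that argument out.

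The paper's proof cites Miller's Theorem~2.4, which says that posterior mass on every open neighborhood of the true parameter tends to one for $\nu$-a.e.\ parameter. It turns this into $\mathbb P$-a.s.\ weak convergence to $\delta_X$ via the Portmanteau theorem. For the variance it cites Miller's Theorem~2.2, applied to $\varphi$ and $\varphi^2$. This is exactly your stated fallback, and your Step~2 mirrors the $\varphi$/$\varphi^2$ step.

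What you do differently is prove the content of those citations. The empirical-measure limit plus Lusin--Souslin shows that $X$ is a.s.\ equal to a $\mathcal G_\infty$-measurable variable, after which L\'evy's upward theorem does all the work. You obtain weak convergence from a countable convergence-determining class on $\mathcal X$, with a single null set for all test functions, rather than from neighborhood concentration. So you never need the step ``consistency for $\nu$-a.e.\ parameter, then integrate over $\nu$.''

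The paper's route is shorter and hands the measurable-inversion subtleties to the reference. Yours is self-contained and shows exactly where identifiability, measurability of the kernel, and the Polish assumptions enter.

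Two technical points are worth stating explicitly:
\begin{itemize}
\item Define the limit of the empirical measures as a $\mathcal G_\infty$-measurable random element of the space of probability measures on $\mathcal Y$. Do this on the measurable event where the sequence is Cauchy in a complete metric, with an arbitrary fixed value elsewhere. Then ``$p(\cdot\mid X)$ is $\mathcal G_\infty$-measurable'' holds in the a.s.\ sense.
\item The conditional strong law transfers to $\mathbb P$ because the convergence event is jointly measurable in $(x,y)$, so you can integrate it against $\nu$.
\end{itemize}
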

The posterior variance $\Var(\varphi(X) \mid \mathcal{G}_t)$
can be interpreted as the \emph{amount} of ambiguity
about a property $\varphi$ of the latent parameter at time $t$.
In this sense,
the ambiguity introduced by static randomization
is progressively resolved as more observations are revealed.
We refer to this systematic decay of ambiguity over time as
\emph{vanishing ambiguity}.
While \cref{prop:slm_vanishing_amb} is stated for an i.i.d.\ simulator
with identifiable parameter $x$,
the issue of ambiguity decay due to static randomization is more general.
Indeed, for any simulator for the path $Y$ with parameter $x$,
static randomization implies that for any $\varphi(X) \in L^2$,
\begin{equation*}
	\mathbb E\bigl[\Var(\varphi(X) \mid \mathcal G_t)\bigr]
	\; \le \;
	\mathbb E\bigl[\Var(\varphi(X) \mid \mathcal G_s)\bigr],
	\qquad t \ge s,
\end{equation*}
by the law of total variance.
Under static randomization, ambiguity about any property $\varphi(X)$
of the latent parameter is therefore non-increasing in expectation.

Such ambiguity dynamics are unrealistic for systems
in which the latent regime that best describes the current dynamics may shift over time.
In particular, financial markets are non-episodic and our models of those markets
are subject to parameter drift and regime changes.
How much ambiguity is present at any given time depends on the current market
conditions and does not decay systematically over time.
Similar observations have been made in prior works
\citep{epstein_2007_learn,ju_2012_ambig}; see discussion in \cref{subsec:related_work}.
Optimizing a policy under static randomization therefore
creates a \emph{sim-to-real gap} in the ambiguity dynamics.
During training, the policy learns to act with increasing confidence
on its estimate of the latent regime over time.
During deployment, the policy may face regime changes
after it has specialized to its inferred regime,
leading it to make decisions that are poorly suited to the new regime.

\subsection{Why ambiguity dynamics matter for control}\label{subsec:opt_inv_example}

\begin{figure}[t]
	\centering
	\includegraphics[width=\textwidth]{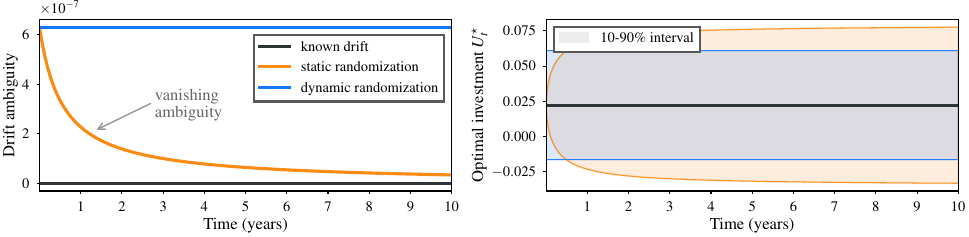}
	\caption{Vanishing ambiguity leads to specialization of optimal investment.
	\emph{Left:} drift ambiguity, measured by
	$\Var(X \mid \mathcal G_t)$ under static randomization
	and $\Var(X_t \mid \mathcal G_t)$ under dynamic randomization;
	known drift has zero ambiguity.
	\emph{Right:} $10$--$90\%$ interval of the optimal investment
	$U_t^\star$ across paths.
	Under static randomization, observations gradually reveal the fixed latent drift
	over time, so $U_t^\star$ specializes to a pathwise estimate of $X$
	and its distribution spreads out.
	Under dynamic randomization, both $\Var(X_t \mid \mathcal G_t)$ and the
	distribution of $U_t^\star$ are constant over time.}
	\label{fig:opt_inv}
\end{figure}

We illustrate the effect of ambiguity dynamics on controls
in a simple optimal investment problem.
Consider an asset with return process $Y$
and let $U_t$ denote the position invested into the asset at time $t$.
Let $\mathbb G$ be the filtration generated by $Y$.
The objective is
\begin{equation*}
	\inf_{U \; \mathbb G\text{-predictable}}
	\quad
	-
	\sum_{t=1}^T
	\mathbb E\bigl[
			u(U_t Y_t)
	\bigr],
	\qquad
	u(r) := 1 - \exp(-\lambda r),
\end{equation*}
with risk aversion $\lambda > 0$.\footnote{
This objective is additive across time,
so that the optimal investment at time $t$ is independent of investments at other dates.
We make this choice to isolate the effect of ambiguity dynamics on investment decisions.}
Consider a simple parametric simulator for the return process $Y$, in which
the returns are i.i.d.\ with law $N(x,\sigma^2)$.
We compare the base simulator under the assumption that the drift is known
($x = \bar x$),
with two randomization strategies that incorporate
uncertainty about the drift into the simulation.
First, we use the static randomization scheme of \cref{eq:slm},
sampling a static drift $X \sim N(\bar x, \tau^2)$ on each path for $Y$.
Second, we use an AR(1) process to
vary the drift over time (\emph{dynamic randomization}).
The randomized simulator is therefore the linear Gaussian state-space model (SSM)
\begin{equation*}
	X_t = \bar x + \phi(X_{t-1}-\bar x) + \varepsilon_t,
	\qquad
	Y_t = X_t + \eta_t,
	\qquad t \in \mathbb Z,
\end{equation*}
where $|\phi|<1$ and $\varepsilon_t,\eta_t$ are independent centered Gaussian
noises with $\Var(\eta_t)=\sigma^2$.
We run this simulator on $\mathbb Z$ in its stationary regime with full observation
history and
choose $\Var(\varepsilon_t)$ so that
$\Var(X_t \mid \mathcal G_t)=\tau^2$ for all $t$.
We provide all details in \cref{app:sec:optimal_investment_example}.

The left panel of \cref{fig:opt_inv} shows the drift ambiguity,
as measured by
$\Var(X \mid \mathcal{G}_t)$ for the static simulator
and
$\Var(X_t \mid \mathcal{G}_t)$ for the dynamic simulator.
When the drift is known, there is no ambiguity.
Under static randomization, ambiguity vanishes
as shown in \cref{prop:slm_vanishing_amb}.
Under dynamic randomization,
the same positive amount of ambiguity is present at all times
as the variation in the latent drift offsets the information gained from observing $Y$.

The right panel of \cref{fig:opt_inv} shows the distribution
of the optimal investment $U_t^\star$ across paths under all three simulators.
When the drift is known, $U_t^\star$ is deterministic and constant over time.
Under static randomization, the policy gradually estimates the drift realization
and specializes its investment to its estimate on each path.
Under dynamic randomization, the distribution of $U_t^\star$ does not change with time.
Pathwise, $U_t^\star$ varies with the drift estimate over time,
but it does not become systematically more specialized to it.

\begin{figure}[t]
	\centering
	\includegraphics[width=\textwidth]{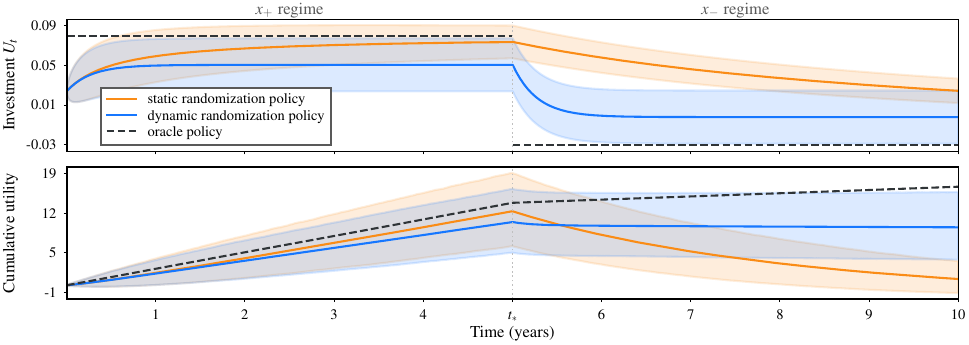}
	\caption{Static randomization policy fails to adapt to a regime switch.
	We evaluate the policies on scenarios where the latent drift switches from
	$x_+$ to $x_-$ at time $t_\ast$,
	where $x_{+}$ and $x_{-}$ are the $90\%$ and $10\%$ quantiles of $N(\bar x, \tau^2)$.
	\emph{Top:}
	mean and $10$--$90\%$ interval of $U_t$ across draws of $Y$.
	\emph{Bottom:} mean and $10$--$90\%$ interval of
	cumulative utility $\sum_{s \leq t}u(U_s Y_s)$.
	The static-randomization policy reacts slowly to the regime change,
	resulting in lower terminal cumulative utility.}
	\label{fig:opt_inv_regime_switch}
\end{figure}

The progressive specialization of the static randomization policy
is only appropriate if the latent drift is constant over time.
If latent market conditions can change,
the specialization may result in poor performance.
To illustrate this point, we evaluate the two
policies on a simple regime-switch scenario, in which the latent drift is
given by $X_t = x_+ > 0$ for $t \leq t_\ast$ and by $X_t = x_- < 0$ for $t > t_\ast$.
The regimes $x_{+}$ and $x_{-}$ are the $90\%$ and $10\%$ quantiles of the
randomization distribution $N(\bar x, \tau^2)$.
We also compare to the oracle policy that observes the drift process.

The top panel of \cref{fig:opt_inv_regime_switch} shows the investment $U_t$
and the bottom panel shows cumulative utility.
The oracle policy is deterministic and switches its investment
instantly once the regime changes.
Both the SLM and the SSM policies increase their investment on average during the first half
because most observed returns are high.
The SLM policy specializes strongly to its regime estimate,
with mean investment almost reaching the oracle investment.
The SSM policy specializes less aggressively,
resulting in lower mean cumulative utility at time $t_\ast$.

After $t_\ast$, returns are generated under the negative-drift regime,
which pulls both policies towards lower investment.
The SLM policy adjusts slowly because, even long after $t_\ast$, its drift estimate
is still affected by observations prior to $t_\ast$.
Because the SSM policy was optimized in a simulator in which regimes can change,
it puts less weight on older observations when estimating the drift.
The policy therefore reduces its investment much faster and accumulates substantially
higher utility on average over the full horizon.
Because ambiguity has largely decayed in the SLM by time $t_\ast$,
the SLM policy is not robust to the drift change.
In the SSM, ambiguity is present at all times,
and the SSM policy thus maintains robustness
with respect to changes in the latent drift regime.
In the next section, we generalize this observation
and make precise how ambiguity should be modeled
to induce such continually robust policies.

\section{Stationary ambiguity in simulation}\label{sec:ambiguity}

\subsection{Setup}\label{subsec:setup}

In this section, we introduce the general class of
stochastic control problems studied in this paper.
We assume a discrete time setting with time index set $\mathcal T$,
for which we study two choices:
(a) infinite past observations, so that $\mathcal T = \mathbb Z$,
and (b) a finite history of $H \in \mathbb N_0$ observations before time $t=1$,
so that $\mathcal T = \{t \in \mathbb Z : t \ge 1-H\}$.
Let $X = (X_t)_{t \in \mathcal T}$ and $Y = (Y_t)_{t \in \mathcal T}$
be stochastic processes,
taking values in the Polish spaces $\mathcal{X}$ and $\mathcal{Y}$ respectively,
such that the joint process
$(X_t, Y_t)_{t \in \mathcal T}$ is a Markov process.\footnote{
This is sometimes called a pairwise Markov model \citep{pieczynski_2003_pairw}.}
Let $\mathbb{G}$ denote the filtration generated by $Y$, so that
\begin{equation*}
\mathbb{G} = (\mathcal{G}_t)_{t \in \mathbb{Z}},
\quad \text{where} \quad
\mathcal{G}_t := \sigma(Y_s:s \in \mathcal{T}, s\le t) .
\end{equation*}
We consider a finite horizon stochastic control problem
over the interval $\{1, \dots, T\}$
with control process $(U_t)_{t = 1, \dots, T}$
taking values in an action space $\mathcal U$.
The control process has to be \emph{predictable} with respect to
the filtration $\mathbb G$, meaning that
$U_{t + 1}$ has to be $\mathcal{G}_t$-measurable
for all $t \in \{0,\dots,T-1\}$.
In particular, we assume that the control process $U$ does not affect
the law of $(X,Y)$.
We consider stochastic control problems of the form
\begin{equation}\label{eq:control_problem}
	\inf_{
		U \; \mathbb{G}\text{-predictable}
	}
	\quad
	\mathcal{R}\bigl(\, \ell\bigl(Y_{1:T}, U\bigr)\, \bigr).
\end{equation}
Here
$\ell : \obsspace^{T} \times \actspace^{T} \to \R$
computes the loss associated with a realized observation path and control sequence.
The functional $\mathcal{R}$
maps the resulting random loss to a scalar objective
and may for instance be a risk or deviation measure.

\paragraph{Simulation-implied ambiguity.}
The process $X$ is generally not observable to the policy,
meaning that $X_t$ is generally not $\mathcal{G}_t$-measurable.
Still, the observations $Y_{\leq t}$ may contain information about $X_t$,
as captured by the filter
\begin{equation*}
	\pi_t := \mathbb P(X_t \in \cdot \mid \mathcal G_t),
	\qquad t \in \mathcal T .
\end{equation*}
Because $\pi_t$ is the full conditional law of $X_t$ given the available
information~$\mathcal G_t$, it captures what any
$\mathbb G$-predictable policy can infer about the latent state
before deciding on the control $U_{t + 1}$.
In this sense,
the filter~$\pi_t$ characterizes the ambiguity
over the latent state $X_t$ at time $t$ in a policy-independent way
that depends only on the dynamics of $(X, Y)$.
Accordingly, we study ambiguity
via the infinite-dimensional filter process $\pi = (\pi_t)_{t \in \mathcal T}$
or scalar summaries of it.
For instance, to quantify the amount of ambiguity about a particular
property $\varphi(X_t)$ of the latent state, where
$\varphi : \mathcal X \to \mathbb R$ is measurable and
$\varphi(X_t) \in L^2$, we use the conditional variance
\begin{equation*}
	\mathcal{V}_t^\varphi := \Var(\varphi(X_t) \mid \mathcal G_t) .
\end{equation*}
Large values of $\mathcal{V}_t^\varphi$ indicate high
uncertainty about $\varphi(X_t)$ at time $t$, whereas small values indicate little
uncertainty.
Both $\pi_t$ and $\mathcal{V}_t^\varphi$ generalize the notions
of posterior distribution and posterior variance that we used to analyze
ambiguity in \cref{sec:setup}.
In particular, for the statically randomized simulator of \cref{eq:slm},
$\pi_t$ reduces to the posterior distribution of the latent parameter given
the observations up to time $t$.

\begin{keypropertybox}
The measure-valued filter process $\pi = (\pi_t)_{t \in \mathcal T}$
describes the ambiguity in the simulator $(X, Y)$
under the filtration $\mathbb{G}$.
We quantify how much uncertainty is present in the simulation
about a property~$\varphi(X_t)$
via the conditional variance process $\mathcal{V}^\varphi = (\mathcal{V}_t^\varphi)_{t \in \mathcal{T}}$.
\end{keypropertybox}

\subsection{Stationary ambiguity}

In \cref{sec:setup}, we argued that vanishing ambiguity,
as induced by static randomization,
is an unrealistic and undesirable model of ambiguity
for many real-world control problems with an exogenous noise process $Y$.
In the language of the present section, vanishing ambiguity means that the filter becomes
systematically more concentrated as the episode progresses.
We argue that more generally, systematic variations of the filter process
with the clock of the control problem are usually undesirable.
For example, financial markets are non-episodic, so ambiguity should be driven by
market conditions rather than by the age of the control problem.
We therefore propose that the simulator should induce a stationary filter process.
We formalize this requirement on the bi-infinite time axis
$\mathcal T=\mathbb Z$
and discuss simulators started at a finite time in
\cref{subsec:approx_stationarity}.

\begin{keypropertybox}
The simulator should
induce \textbf{stationary ambiguity}, meaning that its filter process~$\pi$
should be strictly stationary:
\begin{equation*}
	(\pi_t,\pi_{t+1},\dots,\pi_{t+k})
	\stackrel{d}{=}
	(\pi_{t+s},\pi_{t+s+1},\dots,\pi_{t+s+k}) ,
\end{equation*}
for all $t,s \in \mathbb Z$ and $k \ge 0$.
\end{keypropertybox}

Stationarity of $\pi$ implies that
for any measurable $\varphi:\mathcal X\to\mathbb R$ with $\varphi(X_0)\in L^2$,
the conditional variance process~$(\mathcal V_t^\varphi)_{t\in\mathbb Z}$
is also stationary.
The assumption therefore still allows for the amount of ambiguity
to vary over time on any given path of $(X, Y)$.
For instance, in financial market simulators, one can
model that during turbulent market phases,
ambiguity is higher than during calmer market phases.
However, the marginal laws of the filter and ambiguity levels
cannot vary with the clock of the control problem,
that is, with the position $t$ within the control horizon.

When $\mathcal{T} = \mathbb{Z}$,
a standard sufficient condition for filter stationarity
is that the joint process $(X, Y)$ is stationary.
We discuss a more general version of this statement
that applies to information structures other than the filtration $\mathbb{G}$,
in \cref{app:sec:proofs}.
\begin{proposition}[Sufficient condition for filter stationarity]
\label{prop:amb_stationary}
	Consider the setup from \cref{subsec:setup} with
	$\mathcal T=\mathbb Z$.
	Suppose that the joint process $(X_t,Y_t)_{t \in \mathbb{Z}}$
	is strictly stationary.
	Then the filter process
	$(\pi_t)_{t\in\mathbb Z}$ has a strictly stationary version.
\end{proposition}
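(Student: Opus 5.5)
The plan is to realize the filter as a fixed measurable functional of the observation path. Then stationarity of $\pi$ follows from stationarity of $Y$, which is a coordinate projection of the stationary process $(X,Y)$.

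\textbf{Step 1: a single kernel from the path to $\mathcal X$.} Consider the random element $Z := (Y_s)_{s\le 0}$ in $\mathcal Y^{\mathbb N}$, a Polish space under the product topology. Since $\mathcal X$ is Polish, there is a regular conditional distribution of $X_0$ given $Z$. This means a probability kernel $K$ from $\mathcal Y^{\mathbb N}$ to $\mathcal X$ with
\begin{equation*}
K(Z,\cdot) = \mathbb P(X_0\in\cdot\mid \mathcal G_0) \quad \text{a.s.}
\end{equation*}
Here we use that $\mathcal G_0=\sigma(Z)$.

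\textbf{Step 2: define the candidate version.} Let $\theta_t$ denote the shift, and set
\begin{equation*}
\tilde\pi_t := K\bigl((Y_{t+s})_{s\le 0},\cdot\bigr).
\end{equation*}
We must show that $\tilde\pi_t$ is a version of $\mathbb P(X_t\in\cdot\mid\mathcal G_t)$ for every $t$.

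Fix a Borel set $A\subseteq \mathcal X$ and a bounded measurable $g$ on $\mathcal Y^{\mathbb N}$. By strict stationarity, $(X_t,(Y_{t+s})_{s\le 0})$ and $(X_0,Z)$ have the same law. Indeed, these are images of the same measurable map applied to shifted copies of $(X,Y)$, and the finite-dimensional distributions agree, which determines the law on the product space. Hence
\begin{equation*}
\mathbb E\bigl[\mathbf 1_A(X_t)\,g(Y_{\le t})\bigr]
= \mathbb E\bigl[\mathbf 1_A(X_0)\,g(Z)\bigr]
= \mathbb E\bigl[K(Z,A)\,g(Z)\bigr]
= \mathbb E\bigl[K(Y_{\le t},A)\,g(Y_{\le t})\bigr].
\end{equation*}
Since every bounded $\mathcal G_t$-measurable variable has the form $g(Y_{\le t})$, this identity shows $\tilde\pi_t(A)=\mathbb P(X_t\in A\mid\mathcal G_t)$ a.s. Because $\mathcal X$ is Polish, one countable generating $\pi$-system suffices. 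So $\tilde\pi_t$ is a regular version of $\pi_t$.

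\textbf{Step 3: stationarity of $\tilde\pi$.} The vector $(\tilde\pi_{t},\dots,\tilde\pi_{t+k})$ is a fixed measurable map $\Phi_k$ of the path $(Y_{t+k+s})_{s\le 0}$. Here $\Phi_k$ applies $K$ to suitable truncated shifts, and it takes values in $\mathcal P(\mathcal X)^{k+1}$ with the weak-topology Borel $\sigma$-algebra. Measurability of $\mu\mapsto\mu(A)$ generating that $\sigma$-algebra ensures that $\Phi_k$ is measurable. The law of $(Y_{t+k+s})_{s\le 0}$ does not depend on $t$ by stationarity. Therefore
\begin{equation*}
(\tilde\pi_t,\dots,\tilde\pi_{t+k}) \stackrel{d}{=} (\tilde\pi_{t+s},\dots,\tilde\pi_{t+s+k}).
\end{equation*}

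\textbf{Main obstacle.} The main obstacle is measure-theoretic bookkeeping rather than any deep estimate. The key point is that one must use a \emph{single} kernel $K$ for all $t$: choosing versions of $\pi_t$ independently could break stationarity, which is why the statement only claims that a stationary version exists. The remaining care is to check that the equality in law of infinite-dimensional path segments follows from finite-dimensional stationarity, and that measurability holds on $\mathcal P(\mathcal X)$. The Markov property is not needed here.
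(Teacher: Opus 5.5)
Your proof is correct and is essentially the paper's argument: the paper fixes one version $\pi_0$ of $\mathbb P(X_0\in\cdot\mid\mathcal F_0)$ on the canonical path space and sets $\tilde\pi_t:=\pi_0\circ\vartheta^t$, which by Doob--Dynkin is exactly your $K(Y_{\le t},\cdot)$. The only difference is that you transfer the conditional-expectation identity through the equality in law of $(X_t,Y_{\le t})$ and $(X_0,Y_{\le 0})$ rather than through shift-invariance of $\mathbb P$. This means you do not need $(X,Y)$ to be realized as the canonical process, while the paper's shift formulation extends directly to any information structure with $\mathcal F_t=\vartheta^{-t}\mathcal F_0$.
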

When $(X, Y)$ is stationary, we write $\mu$ for the common law of $(X_t, Y_t)$.
Intuitively,
stationarity of $(X,Y)$ ensures that, in distribution, the problem of
estimating $X_t$ from the observation process is the same at every time $t$.
Because $\mathcal G_t$ contains an infinite past at every
time $t$, the amount of information available for this estimation problem is
also the same at every time.
As a result, the process $\pi$ is stationary.

\paragraph{Optimal investment revisited.}
In the optimal investment example of \cref{subsec:opt_inv_example},
the statically randomized simulator can be written as a simulator $(X, Y)$
with a latent process that is constant over time:
\begin{equation*}
		X_1 \sim \nu, \quad
		X_t \mid X_{t-1} \sim \delta_{X_{t-1}} \quad \text{for} \quad t \ge 2.
\end{equation*}
This simulator is initialized at time $t = 1$ and not in the infinite past.
The sigma-fields $\sigma(Y_1,\dots,Y_t)$
are generated by an increasing number of observations over time
and thus increasingly informative about the static latent state.

If the same simulator is instead placed on the bi-infinite time axis $\mathbb Z$,
then \cref{prop:amb_stationary} applies
and the induced filter process is stationary.
But since in this example,
$p$ satisfies the identifiability assumption of \cref{prop:slm_vanishing_amb},
the infinite observation history reveals the latent state, so that
\begin{equation*}
	\pi_t = \delta_{X_1} \quad
	\text{and} \quad
	\mathcal{V}_t^\varphi = 0
	\qquad \text{almost surely for all } t \in \mathbb Z .
\end{equation*}
Ambiguity is stationary here only because it is absent at all times.
Under static randomization on the bi-infinite time axis,
all ambiguity has already vanished by the time the investment problem starts.
Thus, stationarity of $\pi$ should not be understood as sufficient
for a good ambiguity representation.

The dynamically randomized simulator of \cref{subsec:opt_inv_example}
is the linear Gaussian SSM,
defined on the bi-infinite time axis $\mathbb Z$ in its stationary regime.
Hence, $(X,Y)$ is stationary and \cref{prop:amb_stationary} applies:
the filter process $\pi$ and
$\Var(X_t \mid \mathcal G_t)$ are stationary.
In fact,
$\Var(X_t \mid \mathcal G_t)$ is constant over time
(left panel of \cref{fig:opt_inv}),
which is specific to the linear Gaussian SSM
and does not follow from filter stationarity in general.

\paragraph{Stronger ambiguity conditions.}
Filter stationarity should not be understood as a sufficient condition
for representing ambiguity realistically.
As seen above, stationarity still allows ambiguity to be absent at all times.
Moreover, even when ambiguity is present at all times, stochastic, and stationary,
the simulator may still not accurately represent the ambiguity faced in the real system.
For example, a simulator may be such that
\begin{equation*}
	\mathcal V_t^\varphi
	=
	v_{\mathrm{low}}\mathbf 1_{\{B=0\}}
	+
	v_{\mathrm{high}}\mathbf 1_{\{B=1\}},
	\qquad t\in\mathbb Z ,
\end{equation*}
for some Bernoulli random variable $B$
and $0 < v_{\mathrm{low}} < v_{\mathrm{high}}$.
In this simulator, some paths have low ambiguity forever,
while others have high ambiguity forever,
which is unrealistic in any domain where levels of uncertainty can change over time.
To rule out such ambiguity patterns,
one could impose the additional condition that
the filter process $\pi$ should be ergodic.
Similarly, other control problems and domains may motivate
further restrictions on~$\pi$.
Here, we focus on stationarity because practical randomization and simulation
schemes commonly violate it.

\subsection{Simulator randomization with stationary ambiguity}\label{subsec:stationary_randomization}
So far, we have treated the latent process $X$ as given.
We now return to the question of how such a process should be constructed
to randomize a parametric base simulator.
Suppose we start from a base simulator for the process $Y$ with fixed parameter~$x$.
How should we randomize $x$ so that the resulting simulator induces
stationary ambiguity?

\paragraph{Case 1: i.i.d.\ base simulator.}
First consider a base simulator in which, for each fixed parameter $x$
taking values in a parameter space $\mathcal{X}$,
the observations $Y_t \mid x$ are i.i.d.
That is,
\begin{equation*}
	Y_t = F(x,W_t),
	\qquad t\in\mathbb Z,
\end{equation*}
for an i.i.d.\ noise process $(W_t)_{t\in\mathbb Z}$ on a space
$\mathcal W$ and a measurable map
$F:\mathcal X\times\mathcal W\to\mathcal Y$.
To randomize the simulator, we replace the fixed parameter $x$ by the
$\mathcal{X}$-valued process $X$ and set
\begin{equation*}
	Y_t = F(X_t,W_t),
	\qquad t\in\mathbb Z.
\end{equation*}
If $X$ is independent of $W$ and a Markov process,
then the joint process $(X,Y)$ is Markov.
If $X$ is stationary and independent of $W$, then $(X,Y)$ is stationary,
since $Y$ is obtained by applying the same map $F$ at each
time to the stationary process $(X,W)$.
By \cref{prop:amb_stationary},
the filter $\mathbb P(X_t\in\cdot\mid\mathcal G_t)$ is therefore stationary.
Thus,
for i.i.d.\ base simulators, replacing the fixed parameter $x$ by a
stationary Markov process $X$ induces stationary ambiguity.

\paragraph{Case 2: Markov base simulator.}
Now consider a base simulator in which, for each fixed parameter
$x\in\mathcal X$, the observation process $Y$ is Markov.
We write its update as $Y_t=F(x,Y_{t-1},W_t)$, where
$(W_t)_{t\in\mathbb Z}$ is an i.i.d.\ noise process on a space
$\mathcal W$ and
$F:\mathcal X\times\mathcal Y\times\mathcal W\to\mathcal Y$
is measurable.
To randomize, we again replace the fixed parameter $x$ by the
$\mathcal X$-valued process $X$ and set
\begin{equation}\label{eq:randomized_markov_sim}
	Y_t = F(X_t,Y_{t-1},W_t),
	\qquad t\in\mathbb Z.
\end{equation}
Again, if $X$ is independent of $W$ and Markov, the joint process $(X, Y)$ is Markov.
However, stationarity of $X$ no longer directly implies stationarity of $(X,Y)$.
Even if $Y_t=F(x,Y_{t-1},W_t)$ has a stationary regime for every
fixed $x$, replacing $x$ by a stationary process $X$ does
not guarantee that $(X, Y)$ is stationary.
We therefore need stronger conditions on the update map $F$.
The following proposition,
due to \citet{stenflo_2001_marko},
gives a general sufficient condition.

\begin{proposition}\label{prop:stationary_randomization}
Assume the following.

\begin{itemize}
\item[(i)] The process $X$ is stationary and independent of $W$.

\item[(ii)]
There exist a complete separable metric $d$ on $\mathcal Y$, a point $y_\ast\in\mathcal Y$, and a constant $c\in(0,1)$ such that
\begin{equation*}
\mathbb E\big[d(F(x,y,W_0),F(x,\tilde y,W_0))\big]
\le c\,d(y,\tilde y)
\end{equation*}
for all $x\in\mathcal X$ and all $y,\tilde y\in\mathcal Y$, and
\begin{equation*}
\sup_{x\in\mathcal X}\mathbb E\big[d(y_\ast,F(x,y_\ast,W_0))\big] < \infty.
\end{equation*}
\end{itemize}
Then there exists a stationary solution $Y = (Y_t)_{t\in\mathbb Z}$ to the recursion
of \cref{eq:randomized_markov_sim}
and the joint process $(X, Y)$ is stationary.
If, in addition, $X$ is ergodic, then $(X, Y)$ is ergodic.
\end{proposition}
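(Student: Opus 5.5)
We construct the stationary solution by backward iteration (an iterated random function argument in the spirit of Stenflo), conditionally on the whole path of $X$. Fix a realization of the driving sequence $\xi_t := (X_t, W_t)$ and write $F_t(y) := F(X_t, y, W_t)$. For $n \ge 0$ and $t \in \mathbb Z$ define the backward compositions
\begin{equation*}
	Y_t^{(n)} := F_t \circ F_{t-1} \circ \cdots \circ F_{t-n+1}(y_\ast),
\end{equation*}
with $Y_t^{(0)} := y_\ast$. By construction $Y_t^{(n+1)} = F_t(Y_{t-1}^{(n)})$. The goal is to show that $Y_t^{(n)}$ converges almost surely as $n\to\infty$ to a limit $Y_t$. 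Passing to the limit in this identity would then give the recursion $Y_t = F(X_t, Y_{t-1}, W_t)$, provided $F_t$ is continuous in $y$ along the relevant subsequence. We will handle this point by working in $L^1$, using the contraction in (ii).

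The key estimate is a geometric bound on successive differences. Since $W$ is i.i.d. and independent of $X$, the variable $W_{t-k}$ is independent of $(X, W_{s})_{s \ne t-k}$. We condition on $\sigma(X)$ and on $W_{t-k+1},\dots,W_t$ step by step, applying the first inequality in (ii) at each step with $x$ frozen. This yields
\begin{equation*}
	\mathbb E\bigl[d(Y_t^{(n+1)}, Y_t^{(n)})\bigr]
	\le c^{n}\,\sup_{x}\mathbb E\bigl[d(F(x,y_\ast,W_0), y_\ast)\bigr] =: c^n K < \infty .
\end{equation*}
The peeling argument works as follows: $Y_t^{(n+1)}$ and $Y_t^{(n)}$ are the same composition $F_t\circ\cdots\circ F_{t-n+1}$ applied to $F_{t-n}(y_\ast)$ and to $y_\ast$ respectively. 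Each outer map contributes a factor $c$ in conditional expectation, by independence of $W_s$ from the inner arguments and from $X$. The innermost distance is bounded by $K$ using the second condition in (ii). Summing, $\sum_n \mathbb E[d(Y_t^{(n+1)},Y_t^{(n)})] < \infty$, so $\sum_n d(Y_t^{(n+1)},Y_t^{(n)})<\infty$ a.s. The sequence is therefore Cauchy a.s., and by completeness of $(\mathcal Y,d)$ it converges to some $Y_t$. The recursion holds a.s. because $\mathbb E[d(F_t(Y^{(n)}_{t-1}),F_t(Y_{t-1}))] \le c\,\mathbb E[d(Y^{(n)}_{t-1},Y_{t-1})]\to 0$, again by conditional independence of $W_t$ from $(X, Y_{t-1}, Y_{t-1}^{(n)})$. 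Here $Y_{t-1}$ is a function of $X$ and $W_{\le t-1}$ only.

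For stationarity, note that $Y_t = G(\xi_t, \xi_{t-1}, \dots)$ for a single measurable map $G$, namely the a.s. limit of the backward compositions. This $G$ does not depend on $t$. The process $(\xi_t)_{t\in\mathbb Z}$ is stationary, since $X$ is stationary and independent of the i.i.d. $W$. Hence $(X_t,Y_t)_t = (\mathrm{pr}_1\xi_t, G(\theta^t\xi))$ is a fixed measurable factor of the shifted sequence, and is therefore stationary. For ergodicity: if $X$ is ergodic, then $\xi=(X,W)$ is ergodic, because the product of an ergodic process with an independent i.i.d. (hence mixing) process is ergodic. Shift-commuting factors of ergodic processes are ergodic, so $(X,Y)$ is ergodic.

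The main obstacle is making the conditional-contraction step rigorous. Condition (ii) is stated for deterministic $x$ and $y$, but it must be applied with $x=X_{t-k}$ random and with random arguments $y$ that depend on other coordinates. The plan is to use the independence of $W_{t-k}$ from $\sigma(X, W_s : s\neq t-k)$ together with a regular conditional distribution (or Fubini on the product space), so that (ii) can be applied pointwise in the conditioning variables. A related technical point is that $G$ is only defined on a full-measure set; we will define it as $\limsup$-type measurable selection, or set it to $y_\ast$ off the convergence set, to keep it measurable and shift-equivariant.
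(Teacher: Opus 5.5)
Your proposal is correct and follows the paper's proof essentially step for step. You use backward compositions driven by $(X,W)$, obtain stationarity of $(X,Y)$ as a shift-commuting measurable factor of $(X,W)$, verify the recursion through the conditional contraction given $\sigma(X,(W_s)_{s\le t-1})$ plus $L^1$ convergence, and get ergodicity by the factor argument.

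The only difference is that you prove the almost sure and $L^1(d)$ convergence of the backward iterates yourself from the telescoping bound $\mathbb E[d(Y_t^{(n+1)},Y_t^{(n)})]\le c^nK$. That bound also yields $\mathbb E[d(Y_t^{(n)},Y_t)]\le c^nK/(1-c)$, which is the $L^1$ rate your recursion step uses implicitly. You also justify ergodicity of $(X,W)$ via the ergodic-times-mixing argument, whereas the paper cites Stenflo's Theorem~1 for the convergence and simply asserts ergodicity of $(X,W)$.
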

These conditions are satisfied by many models of practical interest.
In \cref{app:subsec:randomized_heston_proofs},
we verify them for the discretized Heston model \citep{heston_1993_a}
used in our experiments in \cref{subsec:synth_hedging_problems}.
We discuss how to verify the same property
for more general simulator formulations,
beyond the form of \cref{eq:randomized_markov_sim},
in \cref{sec:practical_recommendations}.

\paragraph{Refresh latent model.}
When condition~(ii) is satisfied,
replacing $x$ with any stationary Markov process $X$ independent of $W$
results in a stationary Markov simulator $(X, Y)$.
We call such a randomization scheme a \emph{stationary randomization}.
In our experiments,
we consider the following simple process for~$X$:
\begin{equation}\label{eq:refresh_markov_chain}
	X_t \mid X_{t-1}
	\sim
	(1 - \alpha) \delta_{X_{t-1}} + \alpha \nu,
	\qquad t \in \mathbb{Z},
\end{equation}
where $\alpha \in [0,1]$  is the probability with which the latent parameter
may refresh at each point in time.
This transition kernel has invariant distribution $\nu$,
so initializing $X$ from $\nu$ gives a stationary chain.
We call the resulting randomized simulator $(X, Y)$ the
\emph{refresh latent model} (RLM).
The static latent model corresponds to the special case $\alpha = 0$.

The RLM is practically appealing because it applies
to both discrete and continuous parameter spaces $\mathcal X$
and introduces only a single additional parameter $\alpha$
relative to the SLM.
Like the SLM,
the RLM introduces ambiguity without distorting the dynamics of the base simulator
too much.
Each path segment between two refresh times
is generated exactly by the base simulator under a fixed parameter value.
This is desirable because the base simulator is usually a reasonable model
of the real dynamics,
whereas large distortions would make the randomized simulator unrealistic.

The key difference between the RLM with $\alpha > 0$ and the SLM ($\alpha = 0$) is that,
since the latent process changes occasionally,
observations from the distant past are unlikely to be informative
about the current value of $X_t$.
Therefore, the filter does not concentrate over time
and ambiguity about the latent process generally remains present.

\subsection{Policy optimization under stationary ambiguity}\label{subsec:approx_stationarity}
\Cref{prop:amb_stationary} guarantees filter stationarity
when $(X,Y)$ is stationary
and defined on the bi-infinite time axis $\mathcal{T} = \mathbb{Z}$.
This suggests that policy optimization under stationary ambiguity
can be achieved as follows:
choose stationary Markov dynamics for $(X,Y)$,
simulate paths with an infinite past,
and let the policy observe $Y_{-\infty:t}$ before choosing $U_{t+1}$.
If the policy observed only $Y_{1:t}$, then the ambiguity faced by the policy
would be described by $\mathbb P(X_t\in\cdot\mid Y_{1:t})$,
which is generally not stationary.
In practice, we approximate this idealized scheme by a finite warm-up:
we start the simulation $H$ steps before the control problem begins at $t=1$
and let the policy observe $Y_{1-H:t}$ before choosing $U_{t+1}$.
We use the following finite-warm-up scheme to approximate stationary ambiguity
(details in \cref{subsec:place_in_stationary}):
\begin{keypropertybox}
\begin{enumerate}[leftmargin=2em]
	\item Pick stationary Markov dynamics for $(X,Y)$ with stationary distribution $\mu$.
	\item Fix a warm-up length $H\in\mathbb N_0$
			and initialize the joint process at time $t = 1 - H$
			in the stationary distribution: $(X_{1 - H}, Y_{1 - H}) \sim \mu$.
			Forward simulate to obtain the trajectory $(X_t, Y_t)_{t = 1 - H}^T$.
	\item Let the policy observe the full available history $Y_{1-H:t}$
			when choosing $U_{t + 1}$.
\end{enumerate}
\end{keypropertybox}

To see why ambiguity in the finite-warm-up simulator behaves similarly to
ambiguity in the infinite-past simulator,
note that for every fixed $t\ge 0$, the
sigma-fields $\sigma(Y_{1-H:t})$ increase to $\sigma(Y_{-\infty:t})$ as
$H\to\infty$.
For any measurable $\varphi$ with $\varphi(X_t)\in L^2$,
the martingale convergence theorem gives\footnote{
The sequence of sigma-fields $\{\sigma(Y_{1-H:t})\}_{H = 0, 1, \dots}$
adds observations backwards in calendar time, but is increasing in $H$.
Applying martingale convergence to $\varphi(X_t)$ in $L^2$
and to $\varphi(X_t)^2$ in $L^1$ gives the claim.}
\begin{equation*}
\mathbb E\bigl[
	\bigl|
	\Var(\varphi(X_t)\mid Y_{-\infty:t})
	- \Var(\varphi(X_t)\mid Y_{1-H:t})
	\bigr|
\bigr]
\xrightarrow[H\to\infty]{} 0 .
\end{equation*}
For each fixed $t$, the amount of ambiguity about the property $\varphi(X_t)$
in the finite-warm-up simulator converges to that in the
infinite-past simulator as $H\to\infty$.
However,
for a chosen warm-up length $H < \infty$,
this asymptotic statement does not quantify the difference
in ambiguity between the finite-warm-up and infinite-past simulators.

Useful finite-warm-up guarantees require stronger assumptions on~$(X,Y)$.
One important class with such guarantees is the class of hidden Markov models,
in which $X$ is a Markov process and,
given the current latent state $X_t$,
the current observation $Y_t$ is independent of the past
$(X_{<t},Y_{<t})$.
For many hidden Markov models,
the effect of the remote past on the filter decays as more recent
observations are accumulated.
This property is called \emph{stability} or \emph{forgetting}
of the filter
\citep{douc_2009_forge,vanhandel_2009_the,vanhandel_2009_unifo}.
Concretely, one often obtains convergence of the form
\begin{equation*}
	\bigl\|
			\mathbb P(X_t\in\cdot\mid Y_{-\infty:t})
			-
			\mathbb P(X_t\in\cdot\mid Y_{1-H:t})
	\bigr\|_{\mathrm{TV}}
	\xrightarrow[H\to\infty]{} 0
	\qquad \text{a.s.}
\end{equation*}
Under suitable additional conditions on the hidden Markov model,
this convergence admits quantitative rates,
which are often exponential \citep{legland_2000_expon,douc_2009_forge}.
Such exponential forgetting results are also available beyond hidden Markov models,
for example, for certain pairwise Markov models \citep{lember_2021_expon}.

\section{Effects of ambiguity dynamics on hedging policies}\label{sec:synth_experiments}

In this section, we analyze the effects of ambiguity
on policies for hedging problems.
As in \cref{subsec:opt_inv_example},
we find that policies trained under static randomization
are robust initially
but specialize to the realized latent regime,
making them highly sensitive to regime changes.
Policies trained under stationary ambiguity
maintain robustness over time.

\subsection{Hedging problems with parameter ambiguity}\label{subsec:hedging_problems}

Throughout this section, we consider hedging problems over the time period $\{1,\dots,T\}$,
where one time step corresponds to
$\Delta t = 1 / 250$ units of calendar time and $T=64$.
Let $S$ denote a $d$-dimensional price process,
where $d$ is the number of tradable stocks.
The process $S$ is defined on the time axis $\{-H,\dots,T\}$
with $S_{-H}= \mathbf{1}$,
where $H \geq 0$ is the length of a pre-trading period
during which the policy observes the price process but cannot trade yet.
Denote the normalized price process by
\begin{equation*}
	\tilde S_t := \frac{S_t}{S_0},
	\qquad t \in \{0,\dots,T\},
\end{equation*}
where the division is componentwise.
We consider a terminal payoff $\psi(\tilde{S}_{1:T})$
that depends on the path of the normalized price process $\tilde S$, where
$\psi : \mathbb R^{T \times d} \to \mathbb R$ is a payoff function.
We assume that the policy can only trade in $\tilde S$
and not in any other assets or derivatives,
so that $U_t \in \mathbb{R}^d$ represents the position held in $\tilde{S}$
over the time interval~$[t - 1, t]$.
We parameterize controls with a neural network policy such that
\begin{equation*}
  U_t = U_t^\theta = f_{\theta}(Y_{-H+1:t-1}),
  \qquad t=1,\dots,T ,
\end{equation*}
where, as before,
$Y$ denotes the progressively observed information process
and $f_\theta$ is a causal sequence model with parameters $\theta$.
We implement $f_\theta$ with an LSTM-based architecture
\citep{hochreiter_1997_long}.
The hedging problem is therefore
\begin{equation}\label{eq:hedging_problem}
	\min_\theta
	\quad
	\mathcal{R}(L(\theta)), \qquad
	L(\theta) = \psi(\tilde S_{1:T})
		- \sum_{t=1}^T {U^\theta_t}^\top (\tilde S_t - \tilde S_{t-1}) .
\end{equation}
We choose $\mathcal{R}$ to be the exponential spectral risk measure
\citep{acerbi_2002_spect,dowd_2011_expon}
\begin{equation*}
	\mathcal R_{\mathrm{srm}}(L)
	:=
	\int_0^1 q_L(p)\,\phi_\gamma(p)\,dp,
	\qquad
	\phi_\gamma(p)
	=
	\frac{\gamma e^{\gamma(p-1)}}{1-e^{-\gamma}},
\end{equation*}
where $q_L$ is the quantile function of $L$.
We set the risk aversion to ${\gamma = 4}$.
In \cref{subsec:add_synth_results}, we provide additional results
for the choice $\mathcal{R}(L) = \mathrm{Var}(L)$.
Following \citet{buehler_2019_deep},
we optimize $\theta$ using Monte Carlo rollouts,
backpropagation through time, and Adam \citep{kingma_2015_adam}.

\begin{table}[t]
	\centering
	\small
	\caption{Overview of the three hedging problems
	analyzed in \cref{sec:synth_experiments}.
	In each problem, we consider randomizing a base simulator
	with observed process $Y$ and uncertain parameter $x$.
	Both randomization schemes (SLM and RLM)
	use the same randomization distribution $\nu$;
	details in \cref{app:subsec:synth_details}.
	}
	\label{tab:hedging_problems}
	\begin{tabular}{@{}lllll@{}}
			\toprule
			Hedging problem & Parameter $x$ & Randomization distribution $\nu$ & Observation $Y_t$ & Payoff \\
			\midrule
			\textsc{BS-Vol}
			&
			$\sigma$
			&
			$\sigma^2 \sim \mathrm{InvGamma}$
			&
			$\log(S_t/S_{t-1})$
			&
			straddle
			\\
			\textsc{Heston-Corr}
			&
			$\rho$
			&
			$\rho = 2\eta - 1,\ \eta \sim \mathrm{Beta}$
			&
			$(\log(S_t/S_{t-1}), V_t)$
			&
			straddle
			\\
			\textsc{BS-Cov}
			&
			$(\sigma_1,\sigma_2,\varrho)$
			&
			\begin{tabular}[t]{@{}l@{}}
				$\sigma_i^2 \sim \mathrm{InvGamma},\ i=1,2$ \\
				$\varrho = 2\eta - 1,\ \eta \sim \mathrm{Beta}$
			\end{tabular}
			&
			$(\log(S_t^{(1)}/S_{t-1}^{(1)}), \log(S_t^{(2)}/S_{t-1}^{(2)}))$
			&
			worst-of call
			\\
			\bottomrule
	\end{tabular}
\end{table}
We consider three hedging problems,
\textsc{BS-Vol}, \textsc{Heston-Corr}, and \textsc{BS-Cov},
which we define below and summarize in \cref{tab:hedging_problems}.
In each problem, we start from a parametric base simulator with an uncertain
parameter $x$.
We then construct two randomized simulators
by replacing $x$ with a static or refresh latent process
(SLM or RLM).
We use the same randomization distribution $\nu$ for both SLM and RLM.

\begin{problem}[\textsc{BS-Vol}: volatility uncertainty]
In the first hedging problem, the base simulator is a
univariate geometric Brownian motion with zero drift.
The uncertain parameter is the volatility $\sigma>0$,
so that in the notation above $x:=\sigma$.
The policy observes the log-return process
\begin{equation*}
	Y_t := \log(S_t/S_{t-1}),
	\qquad
	Y_t \mid x
	\stackrel{\mathrm{iid}}{\sim}
	N\bigl(-\frac{1}{2}x^2\Delta t,\, x^2\Delta t\bigr).
\end{equation*}
Note that since the initial value of $S$ is deterministic
($S_{-H} = \mathbf{1}$),
observing $Y$ is equivalent to observing $S$.
We choose the randomization distribution $\nu$ so that,
if $x\sim\nu$, then $x^2$ has an inverse-gamma distribution
(see the top panel of \cref{fig:bs1d_prior_density_and_p1_semi_dev}).
The payoff is the at-the-money straddle
$\psi(\tilde S_{1:T}) = |\tilde S_T - 1|$.
\end{problem}

\begin{problem}[\textsc{Heston-Corr}: spot--volatility correlation uncertainty]
For the second hedging problem, we again consider a single stock,
but now with dynamics given by the stochastic volatility model
of \citet{heston_1993_a}:
\begin{align*}
	d S_t & = S_t \sqrt{V_t}\,dW_t ,                 \\
	d V_t & = \kappa (\bar v - V_t)\,dt + \xi \sqrt{V_t}\,dB_t .
\end{align*}
We treat the parameters $\kappa$, $\bar v$, and $\xi$ as known,
but consider the correlation parameter
$\rho \in (-1, 1)$ between the Brownian motions $W$ and $B$ as uncertain,
so that $x := \rho$ here.
We assume that the policy observes both the spot process $S$
and the variance process $V$ and accordingly set
\begin{equation*}
	Y_t := (\log(S_t/S_{t-1}), V_t) .
\end{equation*}
The randomization distribution $\nu$ is the law of $2\eta - 1$,
where $\eta \sim \mathrm{Beta}$.
The payoff is again the at-the-money straddle.
\end{problem}

\begin{problem}[\textsc{BS-Cov}: covariance uncertainty]
In the third hedging problem, the base simulator is a two-dimensional
geometric Brownian motion with zero drift.
Here, we consider uncertainty over the entire covariance matrix $\Sigma$,
which is determined by the two marginal volatilities
$\sigma_1,\sigma_2>0$ and the correlation $\varrho\in(-1,1)$,
so that $x := (\sigma_1,\sigma_2,\varrho)$ here.
The policy observes the two-dimensional log-return process, so that
\begin{equation*}
	Y_t
	:=
	\bigl(
			\log(S_t^{(1)} / S_{t-1}^{(1)}),
			\log(S_t^{(2)} / S_{t-1}^{(2)})
	\bigr),
\end{equation*}
which satisfies
\begin{equation*}
	Y_t \mid x
	\stackrel{\text{iid}}{\sim}
	N\biggl(
			-\frac{1}{2}
			\begin{pmatrix}
					\sigma_1^2 \\
					\sigma_2^2
			\end{pmatrix}
			\Delta t,\,
			\Sigma(x)\Delta t
	\biggr),
	\qquad
	\Sigma(x)
	=
	\begin{pmatrix}
			\sigma_1^2 & \varrho \sigma_1 \sigma_2 \\
			\varrho \sigma_1 \sigma_2 & \sigma_2^2
	\end{pmatrix}.
\end{equation*}
We choose the randomization distribution $\nu$ so that
$\sigma_1^2$ and $\sigma_2^2$ are sampled independently
from inverse-gamma distributions and $\varrho$ has the law of $2\eta-1$,
where $\eta$ is beta-distributed.
The payoff is the at-the-money worst-of call
\begin{equation*}
	\psi(\tilde S_{1:T})
	=
	\bigl(\min\{\tilde S_T^{(1)}, \tilde S_T^{(2)}\} - 1\bigr)^+ .
\end{equation*}
\end{problem}

\paragraph{Randomized simulators.}
Each of the three base simulators is amenable to stationary randomization
in the sense of \cref{subsec:stationary_randomization}.
For the two geometric Brownian motion simulators this is immediate,
since the observed return process is i.i.d.\ under each fixed parameter value.
For the Heston simulator, the observed process $Y$ is not i.i.d.,
but we show in \cref{app:subsec:randomized_heston_proofs}
that the transition map for $Y$ induced by discrete observations of the
Heston model satisfies the conditions of \cref{prop:stationary_randomization}.
This guarantees that the Heston simulator is amenable to stationary
randomization, even when jointly randomizing the four parameters
$(\kappa,\bar v,\xi,\rho)$.
Here, we only treat $\rho$ as uncertain.

For each hedging problem, we apply the SLM and RLM randomization schemes
to the base simulator using the specified randomization distribution $\nu$.
For the RLM, we use the refresh probability $\alpha=0.01$ throughout,
so that $(1 - \alpha)^T \approx 0.53$ for $T = 64$,
meaning that on roughly half of the paths under the RLM
the latent state does not jump during the hedging period.
We train one policy under the SLM and one policy under the RLM.
Since the two randomized simulators use the same distribution $\nu$,
their main difference is the ambiguity dynamics induced during training:
ambiguity vanishes under the SLM and is stationary under the RLM.
Comparing the two trained policies therefore allows us to study
the effect of simulation-implied ambiguity dynamics on learned hedging strategies.

\subsection{Neural hedging policies learn to filter}\label{subsec:neural_policies_learn_filter}
In the investment problem of \cref{subsec:opt_inv_example},
we observed from the closed-form solutions
that the optimal controls depend on the filter implied by the simulator.
For the neural hedging policies considered here,
it is no longer transparent whether the learned decision rule
uses an estimate of the latent regime.
In this subsection, we focus on the \textsc{BS-Vol} problem
and analyze whether the neural hedging policies
implicitly learn to filter during training.

For this purpose, we introduce a policy-dependent notion of ambiguity.
The LSTM-based policies have the form
$U_{t + 1} = g(t + 1,h_t)$, where $h_t$ denotes the hidden state of the LSTM
after processing observations up to time $t$.
The control $U_{t + 1}$ can therefore depend on beliefs about $X_t$
only insofar as they are captured in $h_t$.
We therefore say that a policy's ambiguity about a property
$\varphi(X_t)$ is quantified by
the conditional variance $\Var(\varphi(X_t) \mid h_t)$.\footnote{
This should be interpreted as a lower bound on the policy's ``effective'' ambiguity,
since it corresponds to the best possible prediction based on $h_t$, whereas the
policy itself may only exploit a more restricted readout of its internal state.}
The policy's average ambiguity at time $t$ is therefore given by
\begin{equation*}
	\mathcal{E}_t
	\; := \;
	\mathbb E\bigl[
		\Var\bigl(\varphi(X_t)\mid h_t\bigr)
	\bigr]
	\; = \;
	\inf_{m_t \;\text{measurable}}
	\mathbb E\bigl[
		\bigl(\varphi(X_t)-m_t(h_t)\bigr)^2
	\bigr] .
\end{equation*}
We approximate $\mathcal E_t$ by replacing the infimum over all measurable
readouts $m_t$ with a minimization over a class of neural networks
with a single hidden layer.
That is, for each time $t$, we fit a neural network $m_{w_t}$ with weights $w_t$,
called a \emph{probe},
to predict $\varphi(X_t)$ from $h_t$ and report its squared prediction error.

\begin{figure}[t]
	\centering
	\includegraphics[width=\textwidth]{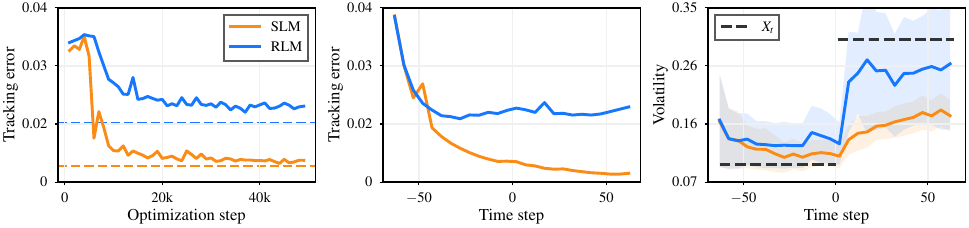}
	\caption{Neural hedging policies learn to filter.
	We probe the recurrent state $h_t$ of policies trained under the SLM and RLM
	by predicting $\log X_t$ from $h_t$.
	\emph{Left:} held-out probe error for
	$\mathcal E_0=\mathbb E[\Var(\log X_0\mid h_0)]$
	during hedging training;
	dashed lines show the minimal practically achievable tracking errors within
	each simulator.
	\emph{Middle:} after training, $\mathcal E_t$ decreases over time for the SLM policy
	but levels off for the RLM policy,
	matching the vanishing/stationary ambiguity dynamics of the simulators.
	\emph{Right:} probe-implied estimate of $X_t$ and uncertainty band
	on a path where $X_t = 0.1$ for $t \leq 0$ and $X_t = 0.3$ for $t \geq 1$.
	The RLM policy updates quickly and becomes more uncertain after the switch,
	whereas the SLM policy has a less accurate but more confident probe-implied
	belief.}
	\label{fig:model_uncertainty}
\end{figure}

The left panel of \cref{fig:model_uncertainty} shows the evolution of the estimated
average ambiguity $\mathcal E_0$ during training.
We use a warm-up period of length $H = 64$,
so that the recurrent state $h_0$ can depend on $64$ observations
of the process $Y$,
and set $\varphi(x)=\log x$, meaning that the target variable $\varphi(X_t)$
is the log-volatility.
The dashed lines indicate the lowest attainable tracking error in the
corresponding simulator.\footnote{
The dashed error lines are obtained from separate networks with the same
LSTM-based architecture as the policy, trained directly to predict $\log X_t$ from the
observation history, without the hedging objective.
}
For both policies, the estimated error decreases during training,
showing that
\textbf{the policies learn to encode information about the latent volatility
regime in their recurrent states}.
The observation that $h_t$ becomes increasingly informative about $\log X_t$
during training suggests that the learned decision rule uses
regime information when choosing $U_{t + 1}=g(t + 1, h_t)$.\footnote{
This observation does not imply that the action readout $g$
actually uses all regime information in $h_t$ \citep{hewitt_2019_desig}.}

The middle panel shows the map $t \mapsto \mathcal E_t$ at the end of training.
For the SLM policy, $\mathcal E_t$ decreases almost monotonically with~$t$.
For the RLM policy, the curve also declines initially, but levels off above zero.
Thus, the policy-dependent ambiguity level
$\mathbb E[\Var(\log X_t \mid h_t)]$
has the same qualitative dynamics as the
policy-independent level ${\mathbb E[\Var(\log X_t \mid \mathcal G_t)]}$,
which converges to zero under the SLM and to a positive limit under the RLM
(left panel of \cref{fig:fig1}).

For the right panel, we train the probe $m_{w_t}$ to return both
a mean and a standard deviation prediction for $\varphi(X_t)$.
We apply the probes $\{m_{w_t}\}_t$
to a single evaluation path $(X,Y)$
on which the latent volatility $X$ jumps at time $t = 1$.
We visualize the probe-implied estimate and uncertainty band for $X_t$,
which can be interpreted as an approximation of the
policy's internal belief distribution over the latent volatility.
Following the regime shift,
the RLM policy's probe-implied estimate moves quickly towards the
new volatility level and the uncertainty band widens.
This suggests that the RLM policy treats the observations after $t = 1$
as evidence for a possible regime change.
The SLM policy's estimate also moves towards the new volatility
level, but more slowly, while its uncertainty band stays comparatively tight.
After the regime shift, the SLM policy's belief is therefore less accurate
and more confident than that of the RLM policy.
Next, we analyze how these different filtering behaviors affect
the robustness of the two policies.

\subsection{Effects on initial and continual robustness}\label{subsec:synth_hedging_problems}

In this section, we evaluate the robustness properties of the SLM and RLM policies.
Our evaluation is based on two desiderata.
First, a robust policy should perform
well across a range of parameter values $x$ in the base simulator.
Second, the policy should maintain that robustness over time.
That is, even if the parameter drifts or shifts at some point,
the policy should still hedge effectively.
We refer to these two robustness notions
as \emph{initial} and \emph{continual} robustness
and study them on the three hedging problems
\textsc{BS-Vol}, \textsc{Heston-Corr}, and \textsc{BS-Cov}.

\paragraph{Initial robustness.}
For this evaluation,
we define each hedging problem with a pre-trading period of length $H = 0$,
so that the policy does not observe the process $Y$ prior to the start
of the trading period.
For a given simulator,
we train a policy on this $H = 0$ hedging problem and evaluate it
in the base simulator across different values of the parameter~$x$.
For a given value $x^\mathrm{eval} \in \mathcal{X}$,
we set $X_t = x^\mathrm{eval}$ for $t = 1, \dots, T$,
sample paths of $Y$ given $X$, compute the hedging loss~$L$ on each path,
and record the resulting risk $\mathcal{R}_\mathrm{eval}(L)$,
where $\mathcal{R}_\mathrm{eval}$ is a risk or deviation measure.
Repeating this for different values of $x^\mathrm{eval}$ produces the curve
\begin{equation*}
	x^\mathrm{eval} \; \mapsto \; \mathcal{R}_\mathrm{eval}(L \mid X_{1:T} = x^\mathrm{eval}),
\end{equation*}
which lets us assess how robust the policy is initially
to the latent regime, before it has made any observations.
A policy with strong initial robustness should perform well
over a broad range of plausible parameter values.
This is typically reflected in a flatter curve,
although the curve also depends on
the intrinsic difficulty of the hedging problem
under different values of~$x^\mathrm{eval}$.

\paragraph{Continual robustness.}
To study how the policy's robustness evolves over time,
we now train each policy with a pre-trading period of length $H = T$,
during which the policy observes $Y$ but does not trade.
We then evaluate the trained policy in the base simulator under a regime shift
at time $t = 1$:
for given values $x^\mathrm{pre}, x^\mathrm{eval} \in \mathcal{X}$,
we set $X_t = x^\mathrm{pre}$ for $t \leq 0$
and $X_t = x^\mathrm{eval}$ for $t \geq 1$,
sample paths of $Y$ given $X$, compute the hedging loss~$L$ on each path,
and record the resulting risk $\mathcal{R}_\mathrm{eval}(L)$.
Repeating this for different values of $x^\mathrm{eval}$ produces, for each fixed $x^\mathrm{pre}$,
the curve
\begin{equation*}
	x^\mathrm{eval} \; \mapsto \; \mathcal{R}_\mathrm{eval}(L \mid X_{-T+1:0} = x^\mathrm{pre},\; X_{1:T} = x^\mathrm{eval}) ,
\end{equation*}
which measures how the policy performs across values of $x^\mathrm{eval}$
after adapting to observations generated under $x^\mathrm{pre}$.
Varying $x^\mathrm{pre}$ then produces a family of such curves,
which characterize whether the policy maintains its robustness over time.
A continually robust policy should exhibit three properties:
the curves should lie at a low level overall, indicating good hedging performance;
each individual curve should be relatively flat in $x^\mathrm{eval}$,
indicating robustness across trading-period regimes;
and the spread across curves for different values of $x^\mathrm{pre}$ should be small,
indicating robustness to the pre-trading regime.

Because the regime shift occurs at the start of the trading period,
the pre-trading regime $x^\mathrm{pre}$
does not affect the intrinsic difficulty of the hedging problem.
The evaluation therefore isolates how past observations
affect the policy's current robustness.
It can also be viewed as measuring the robustness of later controls
$U_{T+1:2T}$ in a longer hedging problem of length $2T$.
In fact, this is how we jointly implement the initial and continual
robustness evaluations
(\cref{app:subsec:synth_details}).

We evaluate continual robustness using multiple choices for $\mathcal{R}_\mathrm{eval}$,
including both deviation measures (variance and semi-deviation)
and risk measures (conditional value-at-risk and exponential spectral risk measure).
We provide detailed results for all choices of $\mathcal{R}_\mathrm{eval}$
in \cref{app:sec:synth}.
In the main text we focus on the semi-deviation measure
$\mathcal{R}_{\mathrm{sd}}$,
defined by
\begin{equation*}
	\mathcal{R}_{\mathrm{sd}}(L)
	:=
	\bigl(
		\mathbb{E}\bigl[
			\max\{L-\mathbb{E}[L],0\}^2
		\bigr]
	\bigr)^{1/2}.
\end{equation*}
The conditional notation
$\mathcal{R}_{\mathrm{sd}}(L \mid X_{1:T} = x^\mathrm{eval})$
is understood as replacing all expectations
in the definition of $\mathcal{R}_{\mathrm{sd}}$
by the corresponding conditional expectations.

\paragraph{Both static and dynamic randomization improve initial robustness.}

\begin{wrapfigure}[21]{R}{0.5\textwidth}
	\centering
	\includegraphics[width=\linewidth]{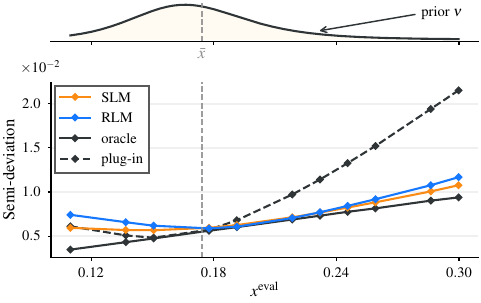}
	  \caption{Initial robustness from simulator randomization.
        \emph{Top:} prior $\nu$ over the volatility parameter.
        \emph{Bottom:} semi-deviation
		$\mathcal{R}_{\mathrm{sd}}$
		of learned and benchmark policies on GBM paths
		with fixed volatility $X_t \equiv x^\mathrm{eval}$.
		The two policies trained in randomized simulators (SLM and RLM)
        have flatter risk curves than the plug-in policy,
		meaning their performance is less sensitive to the realized volatility regime.}
		\label{fig:bs1d_prior_density_and_p1_semi_dev}
\end{wrapfigure}

\Cref{fig:bs1d_prior_density_and_p1_semi_dev} shows the initial robustness
evaluation for the \textsc{BS-Vol} hedging problem.
We compare the SLM and RLM policies with two analytic benchmark policies,
based on the Black--Scholes delta
$\Delta^{\psi}(s,\tau,\hat \sigma)$, where $s$ is the current spot, $\tau$ is
time to maturity, and $\hat \sigma$ is the volatility input.
The two benchmark policies differ in the volatility input $\hat \sigma$.
On evaluation paths with volatility $X_t\equiv x^\mathrm{eval}$,
the \emph{oracle} policy uses $\hat \sigma=x^\mathrm{eval}$,
whereas the \emph{plug-in} policy uses the prior mean
$\hat \sigma=\bar x:=\E_{x\sim\nu}[x]$.
The oracle policy therefore assumes knowledge of the true volatility regime
and provides an infeasible benchmark for the lowest attainable risk in each regime.
Under the oracle policy, the semi-deviation loss increases monotonically
as a function of the volatility level, which indicates that the
hedging problem becomes intrinsically more difficult with higher volatility.

The plug-in policy is the relevant feasible benchmark.
It performs well when the evaluated volatility regime is near
the plug-in value~$\bar x$,
matching closely the performance of the oracle policy.
For volatility regimes farther from~$\bar x$, and especially in
high-volatility regimes, the plug-in policy performs much worse
than the oracle policy.
Thus, in the \textsc{BS-Vol} hedging problem, optimizing in the base simulator
under a single parameter value produces a hedge whose performance depends
strongly on whether the realized volatility is close to $\bar x$.

Both the SLM and RLM policies achieve a much lower loss away from $\bar x$,
showing that
\textbf{hedging policies trained under simulator randomization
perform more robustly across volatility regimes}.
This robustness comes at a small but visible price.
In the low-volatility regimes $x^\mathrm{eval} \in (0.12, 0.17)$, which are close to $\bar x$,
the plug-in policy slightly outperforms both randomized policies.
The difference in performance between the SLM and the RLM policy is small,
with the SLM generally performing slightly better.
This is expected because the evaluation
paths have constant volatility and thus match the SLM dynamics more closely.

\paragraph{Stationary ambiguity preserves robustness over time.}

\begin{figure}[t]
	\centering
	\includegraphics[width=\textwidth]{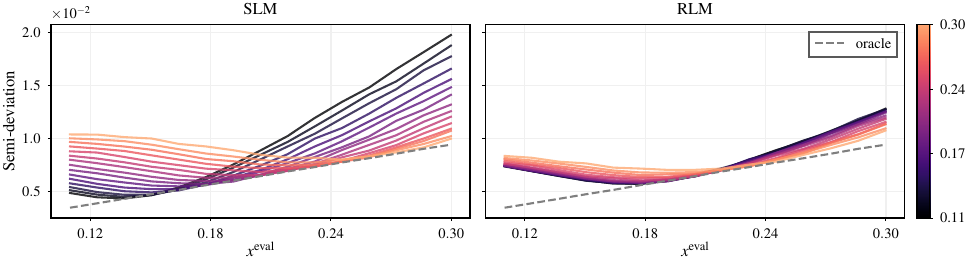}
	\caption{Continual robustness under volatility uncertainty (\textsc{BS-Vol}).
	We report $\mathcal{R}_{\mathrm{sd}}$ on paths
	in which the volatility regime shifts from $x^\mathrm{pre}$
	to $x^\mathrm{eval}$ at time $t = 1$.
	The left panel shows the performance of the SLM policy
	and the right panel that of the RLM policy.
	Colors indicate the pre-trading volatility
	$x^\mathrm{pre}$ and the horizontal axis indicates the evaluation volatility
	$x^\mathrm{eval}$.
	The dashed line shows the (infeasible) oracle benchmark.
	A larger spread across colored curves means stronger dependence on the
	volatility regime observed before trading.}
	\label{fig:bs1d_semi_dev_p1_p2}
	\vspace{-\baselineskip}
\end{figure}

Having established that both randomization schemes improve initial robustness
on \textsc{BS-Vol},
we now ask whether this robustness persists over time.
\Cref{fig:bs1d_semi_dev_p1_p2} shows the continual robustness curves
for \textsc{BS-Vol}.
The left panel corresponds to the SLM policy and shows a similar pattern
across all pre-trading regimes $x^\mathrm{pre}$.
The SLM policy performs close to optimal only when
the difference between $x^\mathrm{eval}$ and $x^\mathrm{pre}$ is small.
When $x^\mathrm{eval}$ is far from $x^\mathrm{pre}$,
the loss of the SLM policy is substantially larger than that of the oracle policy,
indicating that the SLM policy struggles to adapt to changes in the latent regime.
The SLM policy performs worst when volatility is low during the pre-trading period,
but high during the trading period.
Thus,
\textbf{the robustness of the SLM policy does not persist over time}.
After observing a pre-trading regime, the policy specializes to that regime
and becomes sensitive to regime changes.

Compared to the SLM, the RLM's continual robustness curves,
shown in the right panel,
are much less spread out across pre-trading regimes $x^\mathrm{pre}$.
For a fixed evaluation volatility $x^\mathrm{eval}$,
the loss is relatively similar across values of $x^\mathrm{pre}$.
The volatility regime observed before trading therefore has only a small
effect on performance,
meaning that
\textbf{the RLM policy keeps its robustness to the current latent regime over time}.
In particular, the RLM policy achieves a much lower loss
on paths where volatility jumps from low to high,
which is where both policies incur their largest losses.

\begin{wrapfigure}[24]{r}{0.37\textwidth}
	\centering
	\includegraphics[width=\linewidth]{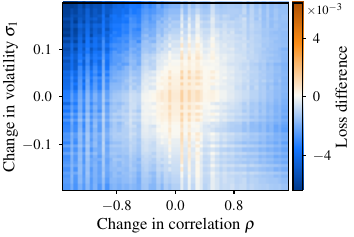}
	\caption{Continual robustness under covariance regime shifts (\textsc{BS-Cov}).
	We vary both
	$\sigma_1$ and $\varrho$ between the pre-trading and trading periods.
	Each cell groups scenarios by
	$\varrho^{\mathrm{eval}} - \varrho^{\mathrm{pre}}$
	and $\sigma_1^{\mathrm{eval}} - \sigma_1^{\mathrm{pre}}$,
	and shows the difference in semi-deviation loss between the policies
	(RLM minus SLM).
	Positive values indicate lower loss for the SLM policy;
	negative values indicate lower loss for the RLM policy.
	The RLM policy outperforms the SLM policy except
	for very small changes in the covariance regime.
	}
	\label{fig:bs2d_joint_grid}
\end{wrapfigure}

This robustness comes at the cost of some regime-specific performance.
In low-volatility regimes from the left tail of $\nu$,
the RLM policy performs notably worse than the oracle policy.
In high-volatility regimes from the right tail of $\nu$,
the relative gap between the RLM and oracle policies is smaller.
Compared with the SLM policy, the RLM policy therefore
mainly gives up performance on paths where volatility is low throughout.
This behavior is consistent with the tail-focused training objective
$\mathcal{R}_{\mathrm{srm}}$,
which favors avoiding large losses under regime shifts or high volatility
over obtaining oracle-level performance in easier low-volatility regimes.

\begin{figure}[t]
	\centering
	\includegraphics[width=\textwidth]{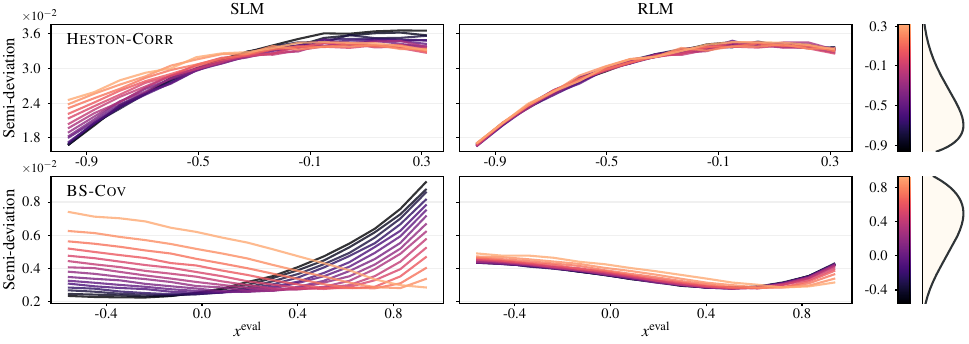}
	\caption{Continual robustness under correlation uncertainty.
	\emph{Top:} in the \textsc{Heston-Corr} problem, the spot-volatility
	correlation changes between the pre-trading and trading periods.
	\emph{Bottom:} in the \textsc{BS-Cov} problem, the asset correlation changes
	between the two periods while the marginal volatilities are fixed.
	In each row, colors indicate the pre-trading correlation and the horizontal
	axis indicates the trading-period correlation.
	The SLM curves spread out more than the RLM curves, showing that the SLM
	policies are more sensitive to the regime observed before trading.}
	\label{fig:heston_bs2d_semidev}
	\vspace{-0.5\baselineskip}
\end{figure}

\Cref{fig:heston_bs2d_semidev} repeats the continual-robustness evaluation
for the \textsc{Heston-Corr} and \textsc{BS-Cov} hedging problems.
For \textsc{Heston-Corr}, we observe that the intrinsic difficulty
of the hedging problem depends strongly on the correlation parameter $\rho$.
A large absolute correlation between spot and volatility allows
the policy to hedge volatility risk using the spot.
While the change in intrinsic difficulty is dominant,
the SLM curves still exhibit a visibly larger spread than the RLM curves,
indicating that the SLM hedge is again more sensitive to the previously
observed regime.

In the \textsc{BS-Cov} hedging problem, all three covariance parameters
$(\sigma_1,\sigma_2,\varrho)$ are randomized jointly during training.
For the plot in the bottom row of \cref{fig:heston_bs2d_semidev},
we fix the two marginal volatilities $\sigma_1, \sigma_2$ and vary
only the correlation parameter $\varrho$.
For the SLM policy, the continual robustness curves are highly spread out
whereas for the RLM policy the curves are close together.
Hence, the SLM policy is much less robust to changes
in the correlation between the two assets than the RLM policy.
As in the \textsc{BS-Vol} problem, this robustness comes with a small loss
of regime-specific performance.
When $\varrho$ is unusually low in both the pre-trading and trading periods
(see the low-$\varrho$ tail of the prior in \cref{fig:heston_bs2d_semidev}),
the SLM policy can outperform the RLM policy.

\Cref{fig:bs2d_joint_grid} gives a second visualization of
continual robustness in \textsc{BS-Cov}.
We now fix only the marginal volatility $\sigma_2$
and consider regime shifts from
$(\sigma_1^\mathrm{pre},\varrho^\mathrm{pre})$ 
to $(\sigma_1^\mathrm{eval},\varrho^\mathrm{eval})$.
For each regime shift scenario, we compute the loss of
both policies, group the results by $(\Delta \varrho,\Delta \sigma_1)$,
and plot the average loss difference
between the SLM and RLM policies as a heatmap.
The heatmap shows that when the covariance regime changes little,
the SLM has a small advantage over the RLM.
For larger regime changes, the RLM policy performs better,
and the gap generally widens with the size of the change.
We also observe that the RLM advantage is larger after decreases in
correlation than after increases of similar size.
This is natural since the worst-of call is harder to hedge under lower
correlation than under higher correlation.

\paragraph{In-simulator performance.}

\begin{wraptable}{R}{0.5\textwidth}
	\vspace{-\baselineskip}
	\centering
	\caption{In-simulator evaluation (\textsc{BS-Vol}).
	We compute $\mathcal{R}_\mathrm{srm}$ for the SLM-trained and RLM-trained policies
	on paths from the SLM simulator (SLM column) and from the RLM simulator
	(RLM column).
	The \emph{Gap} row reports SLM minus RLM, so
	\textcolor{myblue}{positive} values favor RLM
	and \textcolor{myred}{negative} values favor SLM.
	As a reference, the last two columns report points
	from the continual-robustness curves chosen such that
	the SLM--RLM gap is largest in either direction.
	\protect\footnotemark{}
	All entries are multiplied by $100$.}
	\label{tab:bs1d_semidev_p1_p2_worst}
	\small
\textsc{
\begin{tabular}{lcccc}
\toprule
\diagbox[width=5.8em,height=2.1em]
{\raisebox{-0.15em}{\hspace{-0.5em}Policy}}
{\raisebox{0.15em}{Eval.}} & SLM & RLM & \shortstack{shift\\max gap} & \shortstack{shift\\min gap} \\
\midrule
SLM & 10.31 & 10.19 & 16.42 & 5.42 \\
RLM & 10.33 & 10.15 & 14.94 & 5.93 \\
\cmidrule(lr){1-5}
\emph{Gap} & \textcolor{myred}{-0.02} & \textcolor{myblue}{0.04} & \textcolor{myblue}{1.48} & \textcolor{myred}{-0.51} \\
\bottomrule
\end{tabular}
}

\end{wraptable}

\footnotetext{For each regime-pair $(x^\mathrm{pre},x^\mathrm{eval})$
considered in the continual-robustness analysis,
we compute the SLM--RLM loss difference.
The ``max gap'' column reports the loss of each policy on the regime-pair
where this difference is largest,
and the ``min gap'' column uses the point where it is smallest.}

So far, we evaluated the policies on fixed regimes and controlled regime shifts.
We now ask how the same policies perform when evaluated directly under the
randomized simulators.
We take a trained policy and compute
$\mathcal{R}_{\mathrm{srm}}$
on paths of the SLM simulator,
meaning that the expectations in $\mathcal{R}_{\mathrm{srm}}$
are taken under SLM dynamics.
We report these evaluations in the SLM column of
\cref{tab:bs1d_semidev_p1_p2_worst}.
Analogously, the RLM column reports the same policies evaluated on paths
from the RLM simulator.
We again set $H = T$.

As expected, the SLM policy is the better policy under SLM dynamics,
and the RLM policy is the better policy under RLM dynamics,
but the gaps are small.
This may seem surprising given how differently the policies respond
to controlled regime shifts,
but the results are in fact consistent with our previous observations.
For instance, the continual robustness analysis showed that the SLM policy
fails mainly after large regime shifts.
Under the RLM simulator, such shifts have a low probability:
refreshes occur with probability $\alpha=0.01$,
and even after a refresh occurs,
the new parameter value can still be close to the previous one.
Thus, many RLM paths still look like fixed-regime paths,
where the SLM policy performs well.
This shows that aggregate simulator evaluations can miss poor robustness
properties of a policy,
and highlights the need for controlled stress tests such as the initial
and continual robustness evaluations above.
We report more detailed in-simulator results in \cref{subsec:add_synth_results}.

\paragraph{Sensitivity to refresh probability.}

\begin{wrapfigure}{R}{0.5\textwidth}
	\vspace{-1.0\baselineskip}
	\centering
	\includegraphics[width=\linewidth]{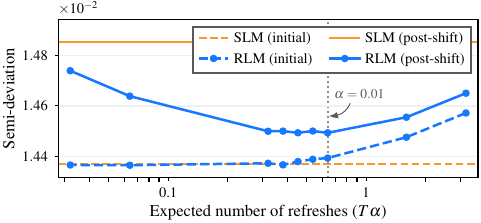}
	\vspace{-0.75\baselineskip}
	\caption{Sensitivity to the refresh probability in \textsc{BS-Vol}.
	We train RLM policies at different probabilities $\alpha$
	and report the semi-deviation~$\mathcal{R}_{\mathrm{sd}}$
	against the expected number of refreshes $T\alpha$.
	``Initial'' denotes evaluation without prior observations.
	``Post-shift'' denotes evaluation after a pre-trading period generated under an
	independently drawn volatility.}
	\label{fig:bs1d_refresh_prob_semidev}
	\vspace{-1.2\baselineskip}
\end{wrapfigure}

We study the sensitivity of the RLM policy to its additional parameter,
the refresh probability $\alpha$, in
\cref{fig:bs1d_refresh_prob_semidev}.
For the lines labeled ``initial'', we set $H = 0$
and evaluate policies on paths with constant volatility:
$X_t \equiv x^\mathrm{eval}$ for $t \geq 1$ and $x^\mathrm{eval}\sim\nu$.
For the lines labeled ``post-shift'', we set $H = T$
and evaluate policies on paths with a regime shift from
$x^\mathrm{pre}$ to $x^\mathrm{eval}$,
where $x^\mathrm{pre}$ and $x^\mathrm{eval}$ are
drawn independently from $\nu$.
The difference from the initial and continual robustness evaluations
is that here we report $\mathcal{R}_\mathrm{sd}$
averaged over parameter values drawn from~$\nu$.

The post-shift evaluation is harder for both policies
as they start trading with beliefs shaped by the wrong regime.
The SLM policy performs much worse on the post-shift evaluation than on the initial
evaluation due to its poor continual robustness.
For very small $\alpha$, the RLM policy behaves similarly,
because the RLM dynamics are then close to the SLM dynamics.
For large $\alpha$, the RLM policy performs relatively poorly in both
evaluations.
We attribute this to volatility refreshes occurring so often that simulated
paths no longer look locally like fixed-volatility geometric Brownian motion paths.
For many intermediate values of $\alpha$, including $\alpha=0.01$,
the RLM policy nearly matches the SLM policy on the initial evaluation
while achieving a much lower risk on the post-shift evaluation.
As long as $\alpha$ is not too large,
the RLM therefore induces regime-shift robustness at little cost
to the initial performance.

\paragraph{Why not i.i.d.\ randomization?}
A natural alternative to the RLM is to redraw the latent parameter
independently at each step.
Such a scheme is stationary and corresponds to the $\alpha = 1$ extreme of the RLM.
Since $X_t$ is redrawn independently at each step,
past observations carry no information about the current latent regime,
so ambiguity does not decay over time.
However, 
\cref{fig:bs1d_refresh_prob_semidev} showed
that increasing $\alpha$ towards this regime degrades performance
on fixed-parameter paths from the base simulator.
This is undesirable if we regard the base simulator
as a reasonable model of the real dynamics.
This issue could be mitigated by choosing a narrower distribution $\nu$,
but the resulting policy would then only be robust to a smaller range of parameter values.

\paragraph{Takeaway.}
Both static (SLM) and dynamic (RLM)
randomization improve the robustness of the hedging policies
with respect to the initial parameter regime.
Under static randomization, the robustness to the current latent regime
decreases as ambiguity vanishes.
Under randomization with stationary ambiguity,
this robustness is maintained over time.
We confirm in \cref{app:sec:synth} that this observation is not specific
to the training objective $\mathcal{R}_{\mathrm{srm}}$ or the payoff function $\psi$.
There, we repeat the \textsc{BS-Vol} experiment under a variance objective
and with a risk-reversal payoff.
In both cases, the qualitative picture remains the same.

\section{A case study on real market data}\label{sec:real_data}
The previous results showed how stationary ambiguity
improves continual robustness to latent simulator parameters.
In this section, we examine how this robustness affects hedging performance
in real financial markets, which are known to exhibit
parameter drift and regime changes \citep{lamoureux_1990_persi,ang_2012_regim}.
Because stationary ambiguity is a general principle for simulator design,
it cannot be validated directly on real data.
Still, we can study the effect of enforcing stationary ambiguity
on hedging performance
by training policies in simulators with and without stationary ambiguity
and backtesting them on historical market data.

\subsection{Setup}
We follow the general setup of \cref{sec:synth_experiments} and consider
hedging problems defined on a normalized spot process
$\tilde{S}$, where $\tilde{S}_0 = 1$.
We set the duration of each hedging problem to $T = 128$ trading days.
As in \textsc{BS-Vol}, we use a geometric Brownian motion as
the base simulator, set the drift to zero, and model the volatility parameter
$x := \sigma$ as uncertain.
We consider ten hedging problems that differ in the choice of the payoff
function $\psi$ (see details below).
For each hedging problem,
we train one policy under the static latent model and one under the
refresh latent model,
with both models sharing the same randomization distribution $\nu$.

\paragraph{Data.}
We use daily stock price data from the set of stocks that made up the
S\&P 100 index at the end of December 2015.
We fit the distribution $\nu$ based on
price data of these stocks from the period January 2006 to December 2015.
We backtest policies on the same set of stocks over the
period January 2016 to December 2025.
For each stock, we define a hedging problem on each overlapping
time window of length $T = 128$ trading days from the evaluation period.
At the start of the hedging period, we normalize the spot to $\tilde{S}_0 = 1$,
deploy the trained SLM and RLM policies, and compute the hedging loss.
We pool losses over all stocks and evaluation windows
and compute a set of evaluation metrics.

\paragraph{Randomization distribution.}
We choose the distribution $\nu$ based on historical levels of realized volatility.
From the training period January 2006 to December 2015,
we extract all overlapping return time series of length $T = 128$
trading days across all stocks.
We then model each of these time series as samples from the static latent model
\begin{equation*}
	X^2 \sim \mathrm{InvGamma}
    (\alpha^\mathrm{IG},\beta^\mathrm{IG}),
	\qquad
	Y_t \mid X \stackrel{\text{iid}}{\sim}
    N(-\frac{1}{2} X^2 \Delta t, X^2 \Delta t),
	\qquad t=1,\dots,T,
\end{equation*}
and estimate the parameters of the mixing distribution by maximum likelihood.
We obtain $\hat{\alpha}^\mathrm{IG} = 1.63$ and $\hat{\beta}^\mathrm{IG} = 0.07$
(see \cref{fig:real_data_sigma_prior} in \cref{app:sec:real_data}
for a visualization of $\nu$).
The resulting fitted law of $X$ defines the shared prior $\nu$ used
for both the SLM and RLM simulators.
We set the refresh probability of the RLM to $\alpha = 0.01$.

\paragraph{Warm-up.}
We use a warm-up period of $H=32$ trading days,
during which the policy observes prices but cannot trade yet.
This allows the initial hedge to depend on the recent
price history rather than being identical across stocks
and evaluation windows.
For the RLM, the warm-up brings the initial filter $\pi_0$
closer to the stationary regime as discussed in
\cref{subsec:approx_stationarity}.
For the SLM, adaptation to the recent past comes with a reduction in ambiguity.
We additionally report results for the SLM without warm-up in \cref{app:sec:real_data},
but find the performance difference to be small.

\paragraph{Benchmarks.}
We compare the SLM and RLM hedging policies to two
Black--Scholes delta-hedging benchmarks:
BS\nobreakdash-EWMA and BS\nobreakdash-HIST.
Let $\Delta^{\psi}(s,\tau, x)$ denote the
Black--Scholes delta of the payoff~$\psi$ at spot~$s$,
time-to-maturity~$\tau$, and volatility~$x$.
Both benchmarks choose the position held over
$[t-1,t)$ as
$U_t=\Delta^{\psi}(\tilde S_{t-1},\tau_t, \hat x_{t - 1})$,
with $\tau_t = \tfrac{T - t + 1}{250}$.
The benchmarks differ in how the volatility estimate $\hat x_{t - 1}$ is computed.
BS-EWMA uses an exponentially weighted moving-average volatility estimate,
while BS-HIST uses a historical volatility estimate based on equal weighting
over past observations.
Both volatility estimates are computed separately for each stock
(details in \cref{app:sec:real_data}).

\paragraph{Payoffs.}
We evaluate each method on ten payoffs:
call, put, straddle, strangle, bull call spread, butterfly, digital,
risk reversal, up-and-out call, and down-and-out put.
The first eight payoffs depend only on the terminal value $\tilde{S}_T$,
while the two barrier payoffs depend on the entire path $\tilde{S}_{1:T}$.
Definitions of all payoffs are given in \cref{app:subsec:real_data_payoffs}.

\subsection{Results}\label{subsec:real_data_results}
\begin{figure}[t]
	\centering
	\includegraphics[width=\textwidth]{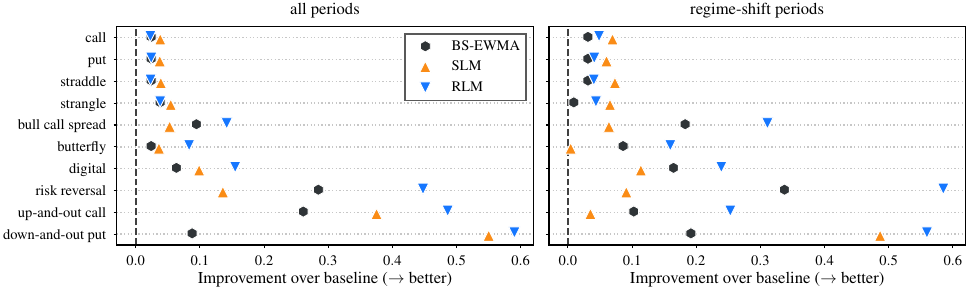}
	\caption{
        Hedging performance on S\&P 100 backtest.
		Each point reports, for one payoff and one policy,
		$\log(
		\mathcal{R}^{\mathrm{BS\text{-}HIST}}_{\mathrm{srm}} /
		\mathcal{R}_\mathrm{srm}
		)$,
		with spectral risk computed from hedging losses pooled across
		time periods and stocks.
        Positive values indicate an improvement over the BS-HIST policy.
        \emph{Left:} all evaluation windows.
        \emph{Right:} regime-shift periods.
  }
	\label{fig:real_data_results}
\end{figure}

The hedging performance of the two trained policies and the two benchmark policies
across the entire out-of-sample period and all stocks is shown in
the left panel of \cref{fig:real_data_results}.
For each payoff and each method,
we compute the spectral risk~$\mathcal{R}_\mathrm{srm}$
of the hedging losses pooled across time and stocks
and plot
$\log(
\mathcal{R}^{\mathrm{BS\text{-}HIST}}_{\mathrm{srm}} /
\mathcal{R}_\mathrm{srm}
)$,
where
$\mathcal{R}^{\mathrm{BS\text{-}HIST}}_{\mathrm{srm}}$
denotes the corresponding risk
of the BS-HIST benchmark.
Positive values indicate that the policy
achieves a lower risk and therefore improves over the benchmark policy.
The right panel shows the same relative risk pooled over a subset of evaluation windows
with large realized volatility changes.
To construct this subset,
we compute for each evaluation window the ratio of the maximum to the minimum
$21$-day rolling realized variance over the hedging period
and retain the top $10\%$ of windows by this score.

On the full sample evaluation, all four policies perform similarly
on the call, put, straddle, and strangle payoffs,
with the SLM policy having a slight edge.
On the other six payoffs, differences between methods are larger.
There, the RLM policy consistently achieves the lowest risk,
often by a substantial margin over the SLM policy.
The SLM policy mostly outperforms the baselines,
except on the risk reversal payoff,
where the simple BS-EWMA policy performs better.

On the regime-shift periods, the RLM policy again performs best overall.
The main difference from the full sample evaluation
is that the BS-EWMA benchmark now outperforms the SLM policy on five of the ten payoffs.
The poor performance of the SLM policy on the regime-shift periods
is consistent with its weak continual robustness
(\cref{subsec:synth_hedging_problems}).
In \cref{app:sec:real_data},
we report evaluations under other risk measures
and find the same qualitative pattern as in \cref{fig:real_data_results}.

\paragraph{Continuous retraining.}
A limitation of this case study is that we train a single policy
per method and payoff and deploy it across all stocks and times.
Each policy adapts to the current volatility regime only through the warm-up observations.
This is natural for the RLM policy,
because the warm-up is needed to place the simulation-implied
filter near its stationary regime.
In the SLM, it may be better to continuously retrain the policy.
At each hedge time~$t$, one could choose a new randomization distribution
$\nu_t$ from recent returns, train a new SLM policy for the remaining hedging
problem, and deploy only its first control.
Let $f^{(t)}$ denote this policy, indexed from the start of the remaining
horizon, so that the first deployed control is
$U_t = f^{(t)}_1(\tilde S_{\le t - 1})$.
The procedure could then be repeated at time $t+1$.
Such a retrained policy would likely adapt better to the current volatility regime,
since $\nu_t$ is updated over time.

While such retraining may decrease the gap between the SLM and RLM,
it generally does not make the two approaches equivalent.
In the SLM,
the control $U_t$ is still chosen on the presumption that ambiguity
will vanish over time and that the regime will not change.
How different the retrained SLM policy is from the RLM policy
depends on the particular hedging problem.\footnote{
Differences are likely large if the simulated market
lets the policy trade on future realized volatility,
for instance through options.
When the SLM policy detects a low-volatility regime,
it may choose to `sell volatility',
whereas the RLM policy anticipates possible regime shifts
and is less inclined to take such trades.
The retrained SLM policy may then perform substantially worse
than the RLM policy if the market suddenly shifts to a high-volatility regime.
}
We do not backtest such a retraining scheme here, as it would require
training a separate policy for every stock and hedge date and is
therefore computationally prohibitive.

\section{A recipe for stationary ambiguity}\label{sec:practical_recommendations}

In this section, we summarize the general recipe for policy optimization
under stationary ambiguity.
We describe
(i)~how to verify whether a given simulator induces stationary ambiguity;
(ii)~how to initialize the simulator and provide the policy with past observations
so that ambiguity is approximately stationary at the start of the control problem;
and (iii)~how to randomize a parametric base simulator
so that the randomized simulator induces stationary ambiguity.

\subsection{Does the simulator support stationary ambiguity?}\label{subsec:sim_supports_sa}

Verifying stationary ambiguity in the setup of \cref{subsec:setup}
is straightforward:
if the process $(X,Y)$ is stationary,
then \cref{prop:amb_stationary} implies that the filter is stationary.
However, many relevant control problems and simulators
are not written in the form of \cref{subsec:setup}.
For instance, financial control problems are usually defined
in terms of price processes
rather than the corresponding log-return processes (our convention).

Let us therefore start from a more general formulation.
Suppose that a simulator generates a process $S$,
which is progressively observed by the policy,
and that we are interested in a control problem of the form
\begin{equation}\label{eq:general_control_problem}
	\inf_{U\;\mathbb F\text{-predictable}}
	\quad
	\mathcal R\bigl(\tilde \ell(S_{1:T},U)\bigr),
	\qquad
	\mathcal F_t:=\sigma(S_s:s\le t).
\end{equation}
We say that the simulator of $S$ induces stationary ambiguity
for policies observing~$\mathbb F$ if
(i) problem \eqref{eq:general_control_problem} admits an equivalent formulation
as the control problem \eqref{eq:control_problem} in \cref{subsec:setup},
with observation process $Y$ and latent process $X$, and
(ii) this equivalent formulation has stationary ambiguity.
These two conditions are meant in the following sense.
Suppose that there exists a process $Y$ and
measurable maps $a_t$ and $b_t$ such that
\begin{equation*}
	Y_t = a_t(S_{\le t}),
	\quad
	S_t = b_t(Y_{\le t}),
	\qquad \text{a.s. for all $t$.}
\end{equation*}
Then, $S$ and $Y$ generate the same filtration,
so that the classes of
$\mathbb F$- and $\mathbb G$-predictable policies coincide,
where $\mathbb{G}$ again denotes the filtration generated by $Y$.
Moreover, the loss $\tilde \ell(S_{1:T},U)$ can be written,
up to almost sure equality,
in terms of $(Y_{1:T},U)$.
Hence the control problem is unchanged when written in terms of $Y$ instead of $S$.
If, in addition,
there exists a latent process $X$ such that
$(X,Y)$ is a stationary Markov process,
then \cref{prop:amb_stationary} implies that the filter
$\mathbb P(X_t\in\cdot\mid \mathcal G_t)$
is stationary.
Since $\mathcal F_t=\mathcal G_t$,
this is the same filter as
$\mathbb P(X_t\in\cdot\mid \mathcal F_t)$.
In this sense,
the original simulator of $S$ induces stationary ambiguity
for the policy observing~$\mathbb F$.

\begin{keypropertybox}
Consider a control problem defined in terms of a stochastic process $S$,
with policies adapted to the filtration~$\mathbb F$ generated by $S$.
A sufficient condition for the simulator of $S$ to induce stationary ambiguity is the existence of
\begin{enumerate}
	\item an information-equivalent observation process~$Y$;
	\item a latent process~$X$ such that $(X,Y)$ is a stationary Markov process.
\end{enumerate}
The observed process $S$ itself does not need to be stationary or Markov.
\end{keypropertybox}

The reformulation through $Y$ is only needed to establish
stationary ambiguity.
Once such a representation has been identified,
one may still simulate $S$ directly
and let the policy observe $S$.
For financial simulators in which $S$ is a price process,
a natural choice for $Y$ is often the process of
(log-)increments of $S$.

\begin{example}
Consider the Heston model from \cref{sec:synth_experiments},
but now assume that the policy observes only the price process~$S$,
while the variance process~$V$ is latent.
The price process $S$ is neither stationary nor Markov.
However,
if the initial price is deterministic,
then observing $S$ is equivalent to observing the log-return process
$Y_t:=\log(S_t/S_{t-1})$.
Moreover, the joint process $(V, Y)$ is Markov and admits a stationary regime.
The Heston simulator therefore induces stationary ambiguity,
provided that both $(V,Y)$ and the filter are in their stationary regimes.
\end{example}

\subsection{Placing simulator and policy in the stationary regime}\label{subsec:place_in_stationary}

Suppose that a given simulator can induce stationary ambiguity
in the sense of the previous section.
To ensure that the simulation-implied filter process
is indeed approximately in its stationary regime
before the first control $U_1$ is chosen,
we use the warm-up scheme from \cref{subsec:approx_stationarity}.
For a warm-up of $H>0$ observations,
we start the process $(X,Y)$ at time $1-H$,
with $(X_{1-H},Y_{1-H})$ drawn from the stationary distribution $\mu$.
We then forward simulate the process to obtain the trajectory
$(X, Y)_{t=1-H}^T$.
The policy should then be parameterized such that
$U_{t + 1}$ can depend on the trajectory $Y_{1 - H:t}$.
Besides approximating filter stationarity,
conditioning the policy on the recent path of $Y$
has the practical benefit that
it allows the same trained policy to be deployed on control problems
with different pasts.
We took advantage of this in \cref{sec:real_data},
as we backtested a single policy across many stocks and time periods.

A practical cost of warm-up is that the policy must process $H$ additional observations.
This cost can be reduced by replacing the pre-trading observations
with features that summarize their information about the latent state.
Suppose that, for each $t$,
there is a finite-dimensional statistic $B_t$ of the observed history
such that
$\P(X_t \in \cdot \mid \mathcal G_t)
\approx \P(X_t \in \cdot \mid B_t)$.
Then we can approximately recover the stationary filter regime
as follows.
We still simulate the trajectory $(X, Y)_{t=1-H}^T$
of length $H + T$.
But instead of providing the policy with the raw
pre-trading observation path $Y_{1-H:0}$,
we provide it with the summary feature $B_0$.
More generally, the policy is parameterized such that $U_{t + 1}$
can depend on the sequence $(Y_{1:t},B_{0:t})$.
In simple simulators,
$B_t$ may be a hand-chosen statistic,
such as a rolling volatility estimate.
In more complex simulators,
one can first train a separate filtering model
and provide its learned representation to the policy.

If one is only interested in deploying the policy
after one specific pre-trading history $y_{1-H:0}$,
then one may instead train a policy specialized to this past.
Concretely,
one can initialize the simulator directly at time $t = 0$ via
\begin{equation*}
	X_0 \sim \P(X_0 \in \cdot \mid Y_{1-H:0}=y_{1-H:0}),
\end{equation*}
and optimize the policy without warm-up observations.
The continuation law of $(X, Y)$ from time $t = 0$ onward
then matches that of the warmed-up simulator conditioned on $Y_{1-H:0}=y_{1-H:0}$,
so this formulation targets the same continuation problem
that the warmed-up policy faces after observing $y_{1-H:0}$.
This conditional initialization does not conflict with stationary ambiguity,
since $y_{1-H:0}$ is a realization of the random history~$Y_{1-H:0}$,
so that the initial distribution for $X_0$ is random.

Such a conditional initialization can be attractive,
since optimizing for a single past is often
easier than learning a policy that can be conditioned on different pasts.
The cost of this specialization is that one no longer obtains a single
policy that can be reused across different pasts.
Moreover, training across histories may itself help neural network policies
even when deployment ultimately concerns one fixed past,
since the policy may benefit from multi-task learning.
Finally, the conditional initialization requires sampling from the filter
$\P(X_0 \in \cdot \mid Y_{1-H:0} = y_{1-H:0})$,
which has to be estimated using separate numerical techniques.
The warm-up schemes are simpler and work well in our experiments.

\begin{keypropertybox}
\captionof{table}{
Initialization schemes for placing the simulator and policy
near the stationary regime.
}\label{tab:stationary-initialization}
\begin{center}
\vspace{-0.4em}
\begin{tabular}{@{}llll@{}}
\toprule
Method
&
Start time
&
Simulator initialization
&
Policy input for $U_t$\textsuperscript{\dag}
\\
\midrule
Warm-up
&
$1-H$
&
$(X_{1-H},Y_{1-H}) \sim \mu$
&
$Y_{\textcolor{myblue}{1-H}:t-1}$
\\
Warm-up + summary features
&
$1-H$
&
$(X_{1-H},Y_{1-H}) \sim \mu$
&
$(Y_{\textcolor{myblue}{1}:t-1},B_{0:t-1})$
\\
Conditional initialization
&
$0$
&
\begin{tabular}[t]{@{}l@{}}
$X_0 \sim \P(X_0\in\cdot\mid Y_{1-H:0}=y_{1-H:0})$ \\
$Y_0 \sim \delta_{y_0}$
\end{tabular}
&
$Y_{\textcolor{myblue}{1}:t-1}$
\\
\bottomrule
\end{tabular}
\par\vspace{0.3em}
\footnotesize
\hspace{-9em}
\textsuperscript{\dag}We use the convention $Y_{a:b}=\emptyset$ when $a>b$,
so that $U_t$ is always $\mathcal G_{t-1}$-measurable.
\end{center}
\end{keypropertybox}
\vspace{0.3em}

\begin{example}[Increasing ambiguity from deterministic initialization]
Consider again the Heston model. The variance process $V$
is commonly deterministically initialized as $V_0 = v_0$ for some constant $v_0 > 0$.
If the variance process~$V$ is unobserved by the policy,
then there is no ambiguity about the variance level at time $t = 0$,
whereas for $t > 0$ the policy is generally ambiguous about the current value
of $V_t$.
Such a systematic increase in ambiguity violates stationary ambiguity
and, like vanishing ambiguity, should be avoided
since it introduces a gap between simulation and reality.
If one believes that the current variance level can consistently be estimated accurately,
then $V$ should be observable to the policy at all times,
not just at $t = 0$.
If, on the other hand, one is generally uncertain about the current variance level,
then $V$ should be initialized via one of the three schemes
in \cref{tab:stationary-initialization},
and not deterministically.
\end{example}

This issue of increasing ambiguity due to deterministic initialization
arises in any simulator with a latent process,
including neural network simulators with a hidden state.
Any such latent state simulator should therefore be initialized
via one of the methods in \cref{tab:stationary-initialization}
(see e.g., \citet{mueller_2026_gener},
who use a conditional initialization approach
to randomly initialize the latent state of a GRU-based time series generator).

\begin{keypropertybox}
Deterministically initializing a simulator's latent state at time $t = 0$
induces a systematic increase in ambiguity after the control problem begins.
\end{keypropertybox}

\subsection{Stationary randomization}

In \cref{subsec:sim_supports_sa}, we described how to check whether
a given simulator induces stationary ambiguity.
We now ask the same question after randomizing a parametric base simulator.
Suppose that a simulator with parameter $x$ generates the process $S$,
and that this process defines the control problem as in \cref{subsec:sim_supports_sa}.
How should we randomize the parameter~$x$
so that the resulting randomized simulator induces stationary ambiguity?

As in \cref{subsec:sim_supports_sa},
let $Y$ be a process that generates the same filtration as $S$.
Let $\check X$ be a state process such that $(\check X,Y)$ is Markov.
Under a fixed parameter value $x$,
write the dynamics of $(\check X,Y)$ as
\begin{equation*}
	(\check X_t,Y_t)
	=
	F\bigl(x,(\check X_{t-1},Y_{t-1}),W_t\bigr),
\end{equation*}
where $W$ is an i.i.d.\ noise process and $F$ is a transition function.
Now replace the fixed parameter $x$ by a stochastic process $X$.
If $X$ is a stationary Markov process that is independent of $W$,
and if the map $F$ satisfies the uniform contraction and
boundedness condition of \cref{prop:stationary_randomization}
in the state variable $(\check X_t,Y_t)$,
then \cref{prop:stationary_randomization} implies that
$(X,\check X,Y)$ is stationary.
The latent state of the randomized simulator is then
$\bar X_t := (X_t,\check X_t)$,
which consists of the randomized parameter and the additional state needed
to make the base simulator Markov.
The additional state $\check X_t$ may be $\mathcal G_t$-measurable,
for instance when $(Y_t,\dots,Y_{t-k})$ is Markov under the base simulator.
It may also contain latent variables (see Heston example below).
Since $(\bar X,Y)$ is stationary,
\cref{prop:amb_stationary} implies that the filter
$\mathbb P(\bar X_t\in\cdot\mid\mathcal G_t)$
has a stationary version.
Because $Y$ generates the same filtration as $S$,
this is the filter faced by the policy in the original control problem.
Thus, the randomized simulator induces stationary ambiguity whenever
there exists a representation for which the above condition holds.

\begin{example}[Randomized Heston model with unobserved variance]
Consider the Heston model as the base simulator
and assume that the policy observes the price process
$S$, but not the variance process $V$.
If the initial value of $S$ is deterministic,
observing $S$ is equivalent to observing the
log-return process $Y_t := \log(S_t/S_{t-1})$.
Taking $\check X_t := V_t$, the pair $(\check X,Y)$ is Markov and can be written as
\begin{equation*}
	(V_t,Y_t)
	=
	F\bigl(x,(V_{t-1},Y_{t-1}),W_t\bigr),
\end{equation*}
for the Heston parameter vector $x=(\kappa,\bar v,\xi,\rho)$.
If $x$ is replaced by a stationary Markov process $X$ taking values in a compact
parameter set with $\kappa,\bar v,\xi$ bounded away from zero,
then \cref{app:subsec:randomized_heston_proofs} verifies the required
condition on $F$.
Thus $(X,V,Y)$ is stationary, and \cref{prop:amb_stationary} implies stationary
ambiguity for the latent state $\bar X_t = (X_t,V_t)$.
\end{example}

\paragraph{Choosing the randomization process $X$.}
The main modeling choice for randomizing a given base simulator
is the stochastic process used for $X$.
Stationary ambiguity only restricts $X$ to be a stationary Markov process
whose values lie in the parameter region
on which the uniform condition for $F$ holds.
In this paper, we considered two simple classes for the process $X$.
In \cref{subsec:opt_inv_example},
we randomized with an autoregressive process of order one
so that the latent parameter drifts continuously.
In our numerical experiments, we used the refresh latent model,
in which the latent parameter jumps occasionally to a new value.
The RLM is a natural default choice
because it
introduces only a single additional parameter compared to the SLM,
and preserves the dynamics of the base simulator
between parameter refreshes.

For the RLM, the two modeling choices are the stationary law $\nu$
and the refresh probability $\alpha$.
Choosing these quantities is part of the broader question
of what simulator is most appropriate for the real data problem,
and we do not attempt to provide a general rule for this modeling choice.
Instead, we give one concrete example of how to randomize the
volatility parameter of a geometric Brownian motion
via the RLM in \cref{sec:real_data}.
There, we fit $\nu$ by treating historical $128$-day return windows
as draws from the SLM.
We then choose $\alpha$
small enough that the base simulator is not overly distorted,
but large enough that parameter refreshes occur with non-negligible
probability during training.
Similar approaches can be applied to other base simulators
and uncertain parameters.

\section{Alternative approaches and related work}

\subsection{Ambiguity from information restrictions}\label{subsec:info_restrictions}
An alternative response to vanishing ambiguity is not to modify the
simulator, but to restrict the information available to the policy.
Since allowing the policy to use all observable information leads to
ambiguity reduction through filtering,
it may seem natural to restrict policies to a form such as
\begin{equation*}
	U_t = f(t, I_{t - 1}) ,
\end{equation*}
where $I_t$ is a $\mathcal{G}_t$-measurable feature vector.
For any measurable $\varphi$ with $\varphi(X_t) \in L^2$,
the law of total variance gives
\begin{equation*}
	\E\bigl[
			\Var(\varphi(X_t) \mid I_t)
	\bigr]
	\geq
	\E\bigl[
			\Var(\varphi(X_t) \mid \mathcal{G}_t)
	\bigr] .
\end{equation*}
Thus, on average, the information-restricted policy faces at least as much
ambiguity about the latent state as a policy with access to the full observation
history.
Even if the latent process $X$ is static,
ambiguity may then persist as $t \to \infty$.

While the additional ambiguity may make the policy more robust in some regards,
the information restriction fundamentally changes the control problem.
In the real problem, the full observation history is available through $\mathbb{G}$,
but the policy is optimized for a surrogate
problem in which part of this information has been deliberately removed.
The resulting sim-to-real gap can have undesirable effects on the learned policy.
For example, consider again modeling uncertainty about volatility
in the Black--Scholes model through static randomization
(SLM variant of \textsc{BS-Vol}),
and parameterize the policy as $U_t = f(t, S_{t - 1})$,
where $S$ is the stock price process.
Observing only the current stock price
prevents the policy from accurately estimating
and adapting to the current volatility regime.
Deploying such a policy may result in the following behavior.
In low-volatility periods,
not adapting to the current volatility regime makes the policy act more defensively than a
fully informed policy, which can look like a robustness benefit.
In high-volatility periods, the information restriction may have the opposite effect:
if the policy fails to recognize that volatility is high,
it may under-hedge at a time when this is most costly.\footnote{
In this example,
ambiguity still completely vanishes over time,
although at a slower rate than under full information.
Let $S$ be a geometric Brownian motion with $S_0 = 1$,
known drift $\mu = 0$,
and volatility $\sigma$.
Then,
$\tfrac{1}{t} \log S_t \to - \tfrac{1}{2} \sigma^2$
almost surely.
That is, the volatility $\sigma$
can be precisely identified as $t \to \infty$,
even under the non-increasing information structure
$\mathbb{F}$ with $\mathcal{F}_t = \sigma(S_t)$.
}

\begin{keypropertybox}
Restricting the policy's information artificially inflates ambiguity,
creating a gap between simulation and reality.
\end{keypropertybox}

\subsection{Vanishing ambiguity in other robustness schemes}\label{sec:other_robustness_schemes}

The issue of weak continual robustness due to vanishing ambiguity
can arise in robust optimization schemes other than static randomization
that also treat the simulator parameter as an unobserved but fixed quantity.
To see this, we revisit the optimal investment example from
\cref{subsec:opt_inv_example}
and compare two minimax-type approaches commonly used to make policies robust to
uncertain simulator parameters.
First, we consider the worst-case optimization
\begin{equation*}
        \inf_U \sup_{x\in[\underline x,\overline x]} \;
        - \sum_{t=1}^T \E_x\bigl[1 - \exp(- \lambda U_tY_t)\bigr] ,
\end{equation*}
where $\mathbb{E}_x$ denotes expectation under
$Y_t \stackrel{\text{iid}}{\sim} N(x, \sigma^2)$.
This problem admits the trivial solution
\begin{equation*}
        U_t^\star
        =
        \frac{x_\dagger}{\lambda\sigma^2},
        \qquad
        x_\dagger \in \mathrm{argmin}_{x\in[\underline x,\overline x]} |x| .
\end{equation*}
That is, the optimal policy is simply the known-drift optimal
policy evaluated at the drift in $[\underline x,\overline x]$ closest to zero.
This minimax policy does not specialize to observed returns,
but is overly conservative.
If $0\in[\underline x,\overline x]$
(e.g., both negative and positive drifts are plausible),
then $x_\dagger=0$
and hence $U_t^\star\equiv 0$.
In other words, the robust solution recommends never investing,
even if we believe that a positive drift is more likely.

\begin{wrapfigure}[11]{r}{0.5\textwidth}
	\vspace{-\baselineskip}
	\centering
	\includegraphics[width=\linewidth]{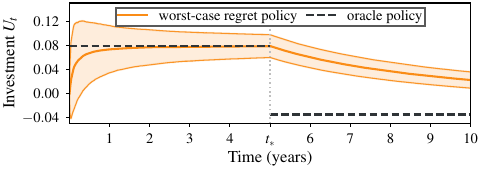}
	\caption{Minimax regret policy fails to adapt to a regime shift.
	We evaluate the minimax regret policy (\cref{eq:worst_regret})
	on the regime shift scenario of \cref{subsec:opt_inv_example}.
	The plot shows mean and $10$--$90\%$ interval of $U_t$ across draws of $Y$.
	}
	\label{fig:worst_case_regret_regime_switch_control_quantiles}
\end{wrapfigure}

A less conservative alternative is to take the worst case over \emph{regret}
\citep{savage_1951_the,xu_2009_param,cacador_2022_a}.
Regret removes the effect that some drifts are intrinsically less favorable
than others by comparing the policy's utility to that of the
oracle action $a^\star(x)=x/(\lambda\sigma^2)$.
The minimax-regret objective is
\begin{equation}\label{eq:worst_regret}
		\inf_U \sup_{x\in[\underline x,\overline x]} \;
		\sum_{t=1}^T
		\mathbb E_x\bigl[
				j_x(a^\star(x))-j_x(U_t)
		\bigr] ,
\end{equation}
where ${j_x(u):=\mathbb E_x[1-\exp(-\lambda uY_t)]}$.
We show in \cref{app:sec:optimal_investment_example}
that near-optimal policies for \cref{eq:worst_regret}
have a simple asymptotic characterization.
Let $U^{(T)}$ denote controls from such a policy for horizon $T$.
For every fixed drift $x\in[\underline x,\overline x]$ and every $\epsilon>0$,
\begin{equation*}
		\frac1T\sum_{t=1}^T
		\mathbb P_x\bigl(
				|U_t^{(T)}-a^\star(x)|>\epsilon
		\bigr)
		\longrightarrow 0 .
\end{equation*}
The fraction of times at which
$U_t^{(T)}$ differs noticeably from the known-drift action $a^\star(x)$ goes
to zero as the horizon grows.
The policy must therefore specialize to the latent parameter $x$ over time.
Worst-case regret over static parameters
therefore has the same qualitative effect on the policy as static randomization
and both methods share the same failure mode.
\Cref{fig:worst_case_regret_regime_switch_control_quantiles}
visualizes the behavior of a policy, numerically optimized
to solve \cref{eq:worst_regret},
on the regime shift scenario of \cref{subsec:opt_inv_example}.
We find that, just like the SLM policy, the
minimax-regret policy overly specializes to the initially observed drift
and fails to adapt to the regime change.

\subsection{Related work}\label{subsec:related_work}

\paragraph{Ambiguity and learning in finance.}
Stationary ambiguity has a motivation similar to that of the work of
\citet{epstein_2007_learn},
who argue that environments, such as financial markets, are often too ``complicated''
for agents to identify all the environment's ambiguous parameters.
They propose a model of how agents learn about their environment
that allows for some parameters to be identified,
but requires ambiguity over a set of likelihoods on the system's future evolution
to persist over time.
Unlike our work, they study decisions under the worst admissible likelihood,
so they do not specify a probabilistic model for the latent factors that generate
this persistent ambiguity.\footnote{
For example, they study an investment problem in which
an asset's mean log return is equal to $\theta + X_t$,
where $\theta$ is an identifiable parameter and
$X_t \in [-a,a]$ is an unobserved ambiguous component.
Since agents optimize decisions under the worst admissible value of $X_t$,
they do not treat $X$ as a stochastic process with a specified law.
}
See also \citet{epstein_2008_ambig} and \citet{leippold_2008_learn}.

The works of \citet{hansen_2007_recur} and \citet{ju_2012_ambig}
are conceptually closer to ours.
In both works, ambiguity arises due to a latent stochastic process
and is represented by the filter.
However, their focus is on how a decision maker should respond to such ambiguity,
and less on how such ambiguity arises.
\citet{ju_2012_ambig} study recursive smooth ambiguity preferences
\citep{klibanoff_2005_a,klibanoff_2009_recura},
which could also be used in the hedging problems studied here
by choosing the hedge $U_{t + 1}$ as the maximizer of
\begin{equation*}
V_t \; = \;
\max_{a \in \actspace}
\; \;
v^{-1}\Bigl(
\E_{x \sim \pi_t}
\Bigl[
v\bigl(
u^{-1}\bigl(
\E[u(V_{t + 1}) \mid X_t = x, U_{t + 1} = a]
\bigr)
\bigr)
\Bigr]
\Bigr) ,
\end{equation*}
with $V_T = -L$, where $L$ is the terminal hedging loss.
Here, the utility functions $u$ and $v$ separately account
for risk and ambiguity aversion.
Our argument for stationary ambiguity is independent of the decision criterion
and compatible with preferences such as the above
or the approach of \citet{hansen_2007_recur},
who induce robustness with respect to $X_t$ through regularized perturbations
of the filter $\pi_t$.
Practically, optimizing a policy under such dynamic criteria
is challenging and requires explicitly computing $\pi_t$.
The static objectives
used in our experiments do not distinguish between ambiguity and risk,
treating the randomized simulator as a compound lottery.
We refer to \citet{guidolin_2013_ambig} and \citet{ilut_2022_model}
for detailed reviews of models of ambiguity preferences.

\citet{nagel_2022_asset} study empirical asset pricing and,
similar to us,
argue that learning effects in static latent-type models
make parameter uncertainty decay too quickly.
They address this by directly imposing a constant-gain rule with which agents update
their beliefs.\footnote{
In their model, agents update their beliefs about the economy's mean growth rate
using an update rule that has the form of a steady-state Kalman-filter recursion.
Since the Kalman filter has exponential forgetting, the agent's memory fades over time.
}
In contrast, we do not impose a rule on how the controller should update beliefs.
We start with the simulator and require it to induce a stationary filter,
which is less specific than the learning rule imposed by \citet{nagel_2022_asset};
see also \citet{dangl_2022_conse}.
Similarly, \citet{zimper_2011_do} imposes a non-standard belief updating rule
under which ambiguity need not vanish.

Many works have studied the effects of learning about a random latent drift
on optimal investment policies
\citep{lakner_1995_utili,brennan_1998_the,karatzas_2001_bayes,bauerle_2024_optima}.
The optimal investment problem in which we study vanishing ambiguity
in \cref{subsec:opt_inv_example} is deliberately kept simple
for illustrative purposes.

\paragraph{Hidden Markov models in finance.}
We argue for simulators with latent state processes that induce
a stationary filter process.
Hidden Markov models, if placed in the stationary regime,
often satisfy this condition and have been used as ``simulators''
across many control problems in finance,
including optimal investment
\citep{honda_2003_optim,rieder_2005_portf,liu_2011_dynam}
and hedging problems
\citep{naik_1993_optio,dimasi_1995_meanv,ghosh_2009_risk,elliott_2023_hedgi}.
Several works have studied regime-switching variants of the Heston model,
including
\citet{elliott_2007_prici,goutte_2013_prici,papanicolaou_2014_a,elliott_2016_hesto}.

\paragraph{Robust hedging.}
The problem of obtaining robust hedging strategies has classically been studied
using worst-case approaches over uncertainty sets, such as intervals for the
instantaneous volatility in Black--Scholes-type models
\citep{avellaneda_1995_prici,avellaneda_1996_managa,lyons_1995_uncera}
or bounds on cumulative realized volatility
\citep{mykland_2000_conse}.
\citet{herrmann_2017_model} and \citet{herrmann_2017_hedgi}
consider regularized worst-case formulations around a reference model.

These works focus on specific models for the underlying asset prices,
such as the Black--Scholes model.
Deep hedging \citep{buehler_2019_deep}
has enabled numerical approximation of optimal hedging policies
for a much broader class of financial market simulators
\citep{buehler_2020_a,wiese_2021_multia,cont_2025_tailg,cont_2025_datad,mueller_2026_gener}.
With this flexibility,
simulator choice and robustness with respect to simulator parameters
are of particular importance \citep{cohen_2023_black}.
\citet{lutkebohmert_2022_robus} suggest randomizing the parameters $x$
of an SDE-based simulator by replacing them with an i.i.d.\ process $X$,
which can result in overly conservative strategies
(\citealp{jones_2025_ambig}; see also \cref{subsec:synth_hedging_problems}).
Adversarial approaches for robustifying deep hedging strategies
have been proposed by \citet{limmer_2024_robus},
who perturb the simulator,
\citet{wu_2023_robus},
who perturb the induced terminal-wealth distribution,
and \citet{he_2025_distr},
who perturb the empirical distribution of training paths.
\citet{buehler_2025_uncer} subsample mini-batches of paths
from a fixed simulator, compute the hedging loss on each mini-batch,
and train the policy on the worst tail of these losses.
\citet{jones_2025_ambig} use clustering to construct
a mixture simulator from a given empirical path distribution,
and optimize policies under smooth ambiguity preferences \citep{klibanoff_2005_a}.

\paragraph{Domain randomization.}
Randomizing uncertain simulator parameters to improve how well a policy
optimized in simulation transfers to the real environment
is common practice in reinforcement learning
\citep[see][and references within]{muratore_2022_robot}.
The randomized parameter process is usually taken to be static
or i.i.d.\ \citep{tobin_2017_domai,rajeswaran_2017_epopta,peng_2018_simto}.

\paragraph{Exploration.}
Stationary ambiguity is a modeling principle that is most applicable
to control problems with uncontrolled simulator dynamics,
meaning that the controller cannot affect future observations through its actions.
Thus, the controller cannot reduce ambiguity
by choosing informative actions (no exploration).\footnote{
For instance, the drift of a stock return process cannot be revealed
by picking a particular trading strategy.}
This is in contrast to
many partially observable Markov decision processes
\citep{kaelbling_1998_plann,ghavamzadeh_2015_bayes}
studied in reinforcement learning,
where actions affect future observations and can be used to gather information.
For those problems, stationary ambiguity is not a fitting assumption.
Consider for instance a robot that is placed in a new room.
The robot should be able to explore the room and systematically
decrease its ambiguity over time.

\section{Conclusion}\label{sec:conclusion}

We introduced stationary ambiguity,
a modeling principle for representing ambiguity in simulators
for control problems driven by exogenous stochastic processes.
Stationary ambiguity applies most naturally in domains such as finance,
when controls depend on
an observable process $Y$ whose dynamics are expected to remain uncertain over time.
In such problems, policies should remain robust to latent factors
governing the dynamics of $Y$.
Our experiments showed that simulators with stationary ambiguity induce such
continual robustness, whereas in simulators with vanishing ambiguity
robustness decays over time.
Stationary ambiguity can be used to inform
many modeling decisions in simulation-based policy optimization,
including what simulator to use, how to randomize an existing simulator,
how to initialize the simulation and the policy,
and what information to supply to the policy.

There are several relevant directions for future work.
One is to study how ambiguity dynamics affect robustness
in a wider class of control problems, such as
realistic portfolio selection problems under market frictions \citep{boyd_2017_multi},
hedging problems in which vanilla options are tradable
\citep{cao_2023_gamma,mueller_2024_fasta,francois_2025_deep},
and applications beyond finance such as inventory control,
energy storage, and queueing.
Another direction is to study the choice of the randomization process in more detail,
as we mostly worked with the simple refresh latent model.
Finally, it would be useful to further study
the consequences of stationary ambiguity
for the design of neural network-based simulators.

\clearpage
\section*{Acknowledgments}
KM is supported by JPMorgan Chase \& Co. through the EPSRC Centre for Doctoral Training in Mathematics of Random Systems: Analysis, Modelling and Simulation (EPSRC Grant EP/S023925/1).
{\small
\bibliography{references_current}
}
\clearpage
\appendix
\crefalias{section}{appendix}
\crefalias{subsection}{appendix}
\crefalias{subsubsection}{appendix}
\raggedbottom
\AppendixToC
\clearpage
\section{Additional details and results for the hedging problems}\label{app:sec:synth}

In this section, we provide additional details and results
for the three hedging problems studied in \cref{sec:synth_experiments}.

\subsection{Experiment details}\label{app:subsec:synth_details}

\subsubsection{Hedging problems}

The base simulator for each of the three hedging problems
is described in \cref{subsec:hedging_problems}.
We consider two randomized versions of each base simulator (SLM and RLM).
We use the same randomization distribution $\nu$ for both SLM and RLM
(see \cref{tab:synth_randomization_distributions}).
For all three hedging problems,
we use the refresh probability $\alpha = 0.01$
for the RLM simulator.

\begin{table}[h]
	\centering
	\small
	\caption{Randomization distributions used in \cref{sec:synth_experiments}.
			We write $\mathrm{InvGamma}(a,b)$ for an inverse-gamma distribution
			with shape $a$ and scale $b$, and $\mathrm{Beta}(a,b)$ for a beta
			distribution with shape parameters $a$ and $b$.
			For \textsc{BS-Cov}, each draw from $\nu$ samples
			$\sigma_1$, $\sigma_2$, and $\varrho$ independently.
			}
	\label{tab:synth_randomization_distributions}
	\begin{tabular}{@{}lll@{}}
			\toprule
			Problem & Randomized parameter & Randomization distribution $\nu$ \\
			\midrule
			\textsc{BS-Vol}
			&
			$\sigma$
			&
			$\sigma^2 \sim \mathrm{InvGamma}(5.93,\,0.16)$
			\\
			\textsc{Heston-Corr}
			&
			$\rho$
			&
			$\rho = 2\eta - 1,\quad \eta \sim \mathrm{Beta}(1.80,\,5.50)$
			\\
			\textsc{BS-Cov}
			&
			$(\sigma_1,\sigma_2,\varrho)$
			&
			\begin{tabular}[t]{@{}l@{}}
					$\sigma_i^2 \sim \mathrm{InvGamma}(5.93,\,0.16),\ i=1,2,$ \\
					$\varrho = 2\eta - 1,\quad \eta \sim \mathrm{Beta}(4.00,\,2.00)$
			\end{tabular}
			\\
			\bottomrule
	\end{tabular}
\end{table}

\paragraph{Base simulators are amenable to stationary randomization.}
In each of the three hedging problems,
the base simulator is amenable to stationary randomization
in the sense of \cref{prop:stationary_randomization}.
For \textsc{BS-Vol} and \textsc{BS-Cov}, this is straightforward.
The price process $S$ itself is not stationary.
But since $S_0$ is fixed, observing $S$ is equivalent to observing the
log-return process $Y$.
For \textsc{BS-Vol}, $Y_t=\log(S_t/S_{t-1})$.
For \textsc{BS-Cov}, $Y_t$ is the vector of log-returns.
For fixed parameter $x$, these observations have the form
\begin{equation*}
	Y_t = F(x,W_t),
\end{equation*}
with i.i.d.\ noise $W_t$ and no dependence on $Y_{t-1}$.
After replacing $x$ by a stationary randomization process $X_t$
that is independent of $W$, we get $Y_t=F(X_t,W_t)$.
Hence $(X,Y)$ is stationary, and
\cref{prop:amb_stationary} gives stationary ambiguity.

In \textsc{Heston-Corr}, we assume that the observed process is
$Y_t=(\log(S_t/S_{t-1}),V_t)$.
This process is Markov, but not i.i.d.,
because the next observation depends on the current variance level.
\cref{app:subsec:randomized_heston_proofs}
shows that the Heston transition map satisfies the contraction condition in
\cref{prop:stationary_randomization},
uniformly over compact parameter sets with $\kappa,\bar v,\xi>0$.
Hence, $(X, Y)$ is stationary and 
\cref{prop:amb_stationary} again gives stationary ambiguity.

In \textsc{Heston-Corr}, we only randomize the correlation parameter $\rho$
(see discussion below).
The other parameters of the Heston model are set to
$\kappa = 8.0$, $\bar v = 0.0625$, and $\xi = 1.0$.

\subsubsection{Evaluating initial and continual robustness}

The policies used for the initial and continual robustness analysis in
\cref{subsec:synth_hedging_problems}
differ in whether the policy is trained with a warm-up period
prior to the start of the hedging problem.
During the warm-up period, the policy observes $Y$ but cannot trade yet.
Instead of training two separate policies, one without warm-up and one with a
warm-up of length $T$, we train a single policy
on a longer hedging problem of length $2T$.
The first half represents the no-warm-up hedging problem,
and the second half represents the hedging problem after the warm-up period.

Concretely, we set $S_0 = 1$ and simulate one price path
$S_1,\dots,S_{2T}$ of length $2T$,
together with the corresponding observation path
$Y_1,\dots,Y_{2T}$.
Let $\mathbb G=(\mathcal G_t)_{t=0,\dots,2T}$ be the filtration generated by $Y$,
\begin{equation*}
	\mathcal G_t = \sigma(Y_s : s=1,\dots,t),
\end{equation*}
with $\mathcal G_0$ trivial.
We then train a single policy to produce a $\mathbb G$-predictable control process
$U=(U_1,\dots,U_{2T})$ on the full path.
This control process is then interpreted as the concatenation of
controls for two separate hedging problems defined
on the following split and normalized paths:
\begin{equation*}
	\tilde S^{(1)}_t := \frac{S_t}{S_0},
	\qquad
	\tilde S^{(2)}_t := \frac{S_{T+t}}{S_T},
	\qquad t = 0,\dots,T .
\end{equation*}
That is, we use the split paths $\tilde{S}^{(1)}$ and $\tilde{S}^{(2)}$
to define two identical and separate hedging problems
with the same payoff function $\psi$ and the same maturity $T$.
For a given control process $U=(U_1,\dots,U_{2T})$, we define the two hedging losses
\begin{align*}
	L^{(1)}(U)
	&:=
	\psi\bigl(\tilde S^{(1)}\bigr)
	-
	\sum_{t=1}^{T}
	U_t^\top
	\bigl(\tilde S^{(1)}_t-\tilde S^{(1)}_{t-1}\bigr), \\
	L^{(2)}(U)
	&:=
	\psi\bigl(\tilde S^{(2)}\bigr)
	-
	\sum_{t=1}^{T}
	U_{T+t}^\top
	\bigl(\tilde S^{(2)}_t-\tilde S^{(2)}_{t-1}\bigr).
\end{align*}
The controls $U_1, \dots, U_T$ are therefore used to hedge
the payoff $\psi(\tilde{S}^{(1)})$ by trading the normalized spot
$\tilde{S}^{(1)}$.
The controls $U_{T + 1}, \dots, U_{2T}$ are used to hedge
the payoff $\psi(\tilde{S}^{(2)})$ by trading the normalized spot
$\tilde{S}^{(2)}$.
As long as the distributions of
$\tilde{S}^{(1)}$ and $\tilde{S}^{(2)}$ are identical,
the two hedging problems only differ in the information available to the policy.
In the first hedging problem, the policy has no prior observations of the
path $S$, whereas in the second hedging problem it can observe
$S$ prior to the start of the hedging problem.

To keep both hedging problems separate,
the policy is trained to minimize
the sum of the two risks
\begin{equation*}
	\mathcal R\bigl(L^{(1)}(U)\bigr)
	+
	\mathcal R\bigl(L^{(2)}(U)\bigr) ,
\end{equation*}
so that the first term only depends on $U_{1:T}$ and the second
only depends on $U_{T + 1:2T}$.

To evaluate initial robustness, we compute $L^{(1)}$
on paths with $X_{1:T}=x^\mathrm{eval}$.
To evaluate continual robustness, we compute $L^{(2)}$
on paths with $X_{1:T}=x^\mathrm{pre}$
and $X_{T+1:2T}=x^\mathrm{eval}$.

\paragraph{Matching the two hedging problems.}
In both \textsc{BS-Vol} and \textsc{BS-Cov},
the normalized spot process is Markov.
If the latent parameter takes the same value during the hedging period,
$\tilde S^{(1)}$ and $\tilde S^{(2)}$ have the same distribution,
because both are normalized to start at $1$.
Hence, the first- and second-half hedging problems differ only in the information
available to the policy before trading starts.

In \textsc{Heston-Corr}, the normalized spot process is not Markov,
meaning that simply fixing the initial values does not guarantee
that $\tilde S^{(1)}$ and $\tilde S^{(2)}$ have the same distribution.
Still, the joint processes
$(\tilde S^{(1)}_t, V_t)_{t=0,\dots,T}$
and
$(\tilde S^{(2)}_t, V_{T+t})_{t=0,\dots,T}$
are Markov.
They have the same law whenever $\rho$ takes the same value during the hedging
period and $V_0$ is sampled from the stationary CIR distribution.
We therefore sample the initial variance from the stationary CIR
distribution induced by the fixed parameters $(\kappa,\bar v,\xi)$,
\begin{equation*}
	V_0
	\sim
	\mathrm{Gamma}\bigl(
			\frac{2\kappa\bar v}{\xi^2},
			\frac{\xi^2}{2\kappa}
	\bigr) .
\end{equation*}
This distribution does not depend on the randomized parameter $\rho$.
If $\kappa$, $\bar v$, or $\xi$ were randomized, the warm-up regime would also
change the stationary distribution of $V$ and therefore the variance level at
the start of the second hedging problem.
This would confound the continual robustness analysis.

\subsubsection{Loss functions}\label{app:subsec:loss_functions}

We report all metrics in terms of the hedging loss $L$, where larger values are worse.
Let $q_L$ denote the quantile function of $L$ and let $x_+ := \max\{x,0\}$.
The loss functionals used in the appendix figures are
\begin{align*}
	\mathcal R_{\mathrm{var}}(L)
	&:= 10^3 \times \operatorname{Var}(L), \\
	\mathcal R_{\mathrm{sd}}(L)
	&:=
	\biggl(
			\mathbb E\bigl[
					\bigl(L-\mathbb E[L]\bigr)_+^2
			\bigr]
	\biggr)^{1/2}, \\
	\mathcal R_{\mathrm{cvar}}(L)
	&:=
	\frac{1}{1-\beta}
	\int_\beta^1 q_L(p)\,dp,
	\qquad \beta = 0.95, \\
	\mathcal R_{\mathrm{srm}}(L)
	&:=
	\int_0^1 q_L(p)\,\phi_\gamma(p)\,dp,
	\qquad
	\phi_\gamma(p)
	=
	\frac{\gamma e^{\gamma(p-1)}}{1-e^{-\gamma}},
	\qquad \gamma = 4 .
\end{align*}
The policies in the experiments reported in the main text
are trained with $\mathcal R_{\mathrm{srm}}$.
In the next section, we provide results for an additional experiment on the
\textsc{BS-Vol} problem that uses a policy trained with $\mathcal R_{\mathrm{var}}$.

\clearpage

\subsection{Additional results}\label{subsec:add_synth_results}

This section reports additional continual-robustness results for the
experiments in \cref{sec:synth_experiments}.

\paragraph{Detailed and relative continual-robustness curves.}
We show the continual-robustness curves under the four evaluation criteria:
$\mathcal R_{\mathrm{var}}$,
$\mathcal R_{\mathrm{sd}}$,
$\mathcal R_{\mathrm{cvar}}$,
and $\mathcal R_{\mathrm{srm}}$.
Unless stated otherwise, the policies are trained under $\mathcal R_{\mathrm{srm}}$.

We also report relative continual-robustness curves.
These curves compare a policy's loss after a regime change
from $x^\mathrm{pre}$ to $x^\mathrm{eval}$
with the loss of the SLM policy evaluated on paths where
the pre-trading and trading regimes are both $x^\mathrm{eval}$.
For a policy $m$, write
\begin{equation*}
\begin{aligned}
	\mathcal{L}^{m}(x^\mathrm{pre}, x^\mathrm{eval})
	&:=
	\mathcal{R}_\mathrm{eval}\bigl(
			L^m
			\mid
			X_{-T+1:0} = x^\mathrm{pre},
			X_{1:T} = x^\mathrm{eval}
	\bigr), \\
	x^\mathrm{eval}
	&\mapsto
	\mathcal{L}^{m}(x^\mathrm{pre}, x^\mathrm{eval})
	-
	\mathcal{L}^{\mathrm{SLM}}(x^\mathrm{eval}, x^\mathrm{eval}) .
\end{aligned}
\end{equation*}
Since larger losses are worse, values near zero mean that policy $m$
matches the SLM policy adapted to $x^\mathrm{eval}$,
while positive values indicate worse performance.

\paragraph{Additional ablations.}

We include two ablations for \textsc{BS-Vol}.
First, we repeat the experiment with policies trained under
$\mathcal R_{\mathrm{var}}$ instead of $\mathcal R_{\mathrm{srm}}$.
We find that the policy trained under
$\mathcal R_{\mathrm{var}}$
has the same continual-robustness pattern
as the policy trained under $\mathcal R_{\mathrm{srm}}$
(\cref{fig:app:synth:bs1d_var_metric_abs_rel_overview}).
Second, we replace the straddle by a risk-reversal payoff.
As shown in \cref{fig:app:synth:bs1d_risk_reversal_metric_abs_rel_overview},
we again find the same qualitative difference between
the SLM and RLM policies that we discuss in the main text.

\paragraph{Variable warm-up length ablation.}

\Cref{fig:fig1} on the first page shows a further ablation
of the experiments in \cref{sec:synth_experiments}.
There, we repeat the \textsc{BS-Vol} regime-shift evaluation
for varying pre-trading lengths $H \in \{4,8,16,32,64,128\}$
(there called time $t$).
For each $H$, we train one SLM policy and one RLM policy
using the same training objective as in \cref{sec:synth_experiments}.
The ambiguity panel reports the exact filter quantity
$\E[\Var(\log X_H \mid \mathcal G_H)]$.
The loss panel reports
\begin{equation*}
	\max_{(x^\mathrm{pre},x^\mathrm{eval}) \in \mathcal X_{\mathrm{grid}}^2}
	\mathcal R_{\mathrm{sd}}
	\bigl(
			L
			\mid
			X_{1:H}=x^\mathrm{pre},
			X_{H+1:H+T}=x^\mathrm{eval}
	\bigr) ,
\end{equation*}
where $\mathcal X_{\mathrm{grid}}$ is the volatility grid used for the
continual robustness evaluation.
Thus, each point in the loss panel summarizes the continual-robustness curves
by their worst regime-switch loss.

\subsubsection{\textsc{BS-Vol}}

\begin{figure}[H]
	\centering
	\includegraphics[width=\textwidth]{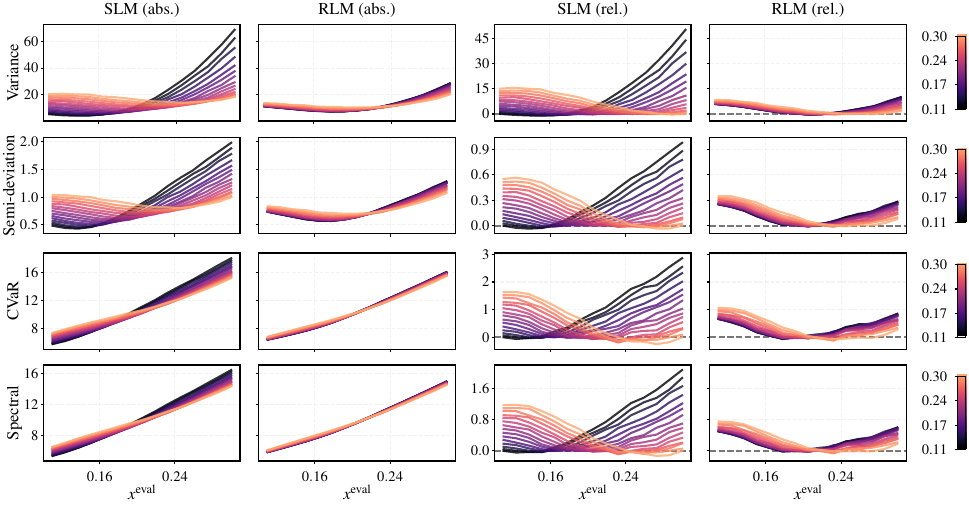}
	\caption{Continual robustness results for \textsc{BS-Vol}.}
	\label{fig:app:synth:bs1d_metric_abs_rel_overview}
\end{figure}

\subsubsection{\textsc{BS-Vol} (variance objective)}

\begin{figure}[H]
	\centering
	\includegraphics[width=\textwidth]{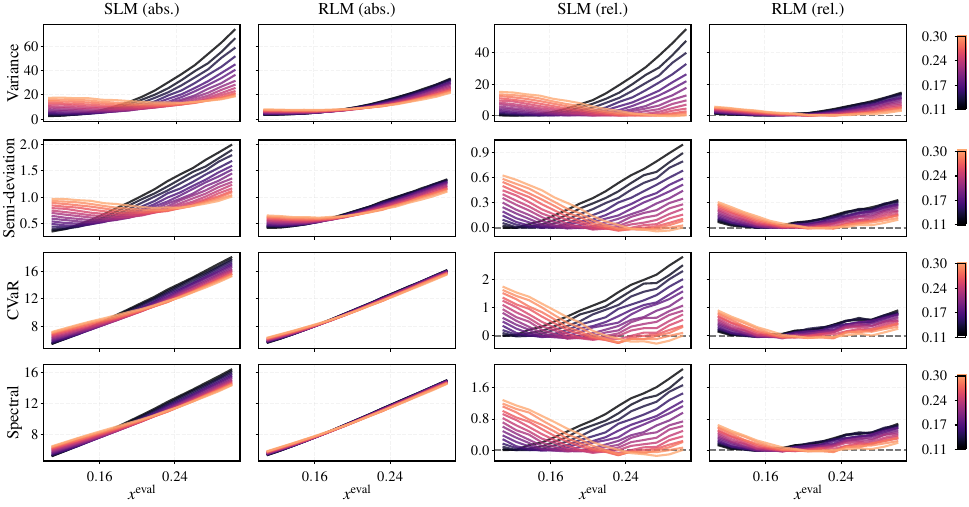}
	\caption{Continual robustness results for \textsc{BS-Vol},
	when the policy is optimized with $\mathcal R_{\mathrm{var}}$.}
	\label{fig:app:synth:bs1d_var_metric_abs_rel_overview}
\end{figure}

\subsubsection{\textsc{BS-Vol} (risk reversal)}

\begin{figure}[H]
	\centering
	\includegraphics[width=\textwidth]{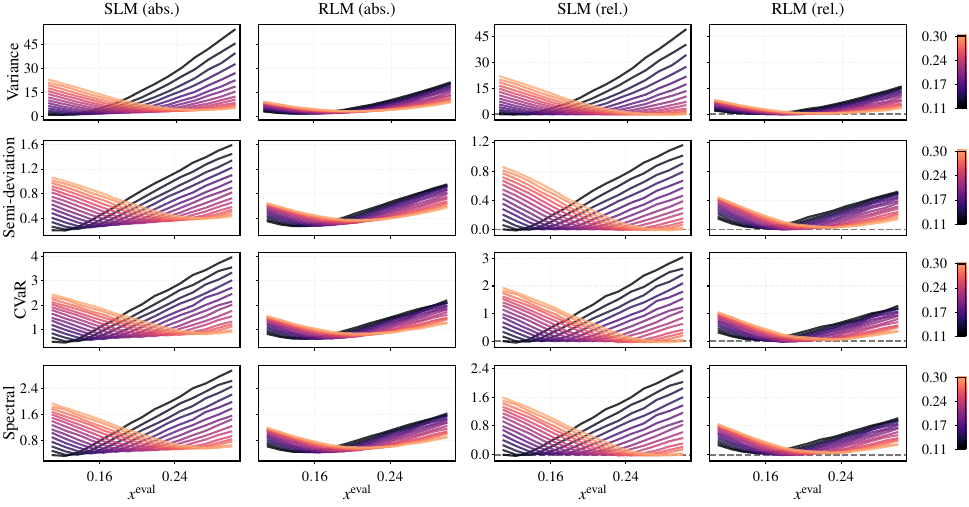}
	\caption{Continual robustness results for \textsc{BS-Vol} when the payoff is a risk reversal.}
	\label{fig:app:synth:bs1d_risk_reversal_metric_abs_rel_overview}
\end{figure}

\subsubsection{\textsc{Heston-Corr}}

\begin{figure}[H]
	\centering
	\includegraphics[width=\textwidth]{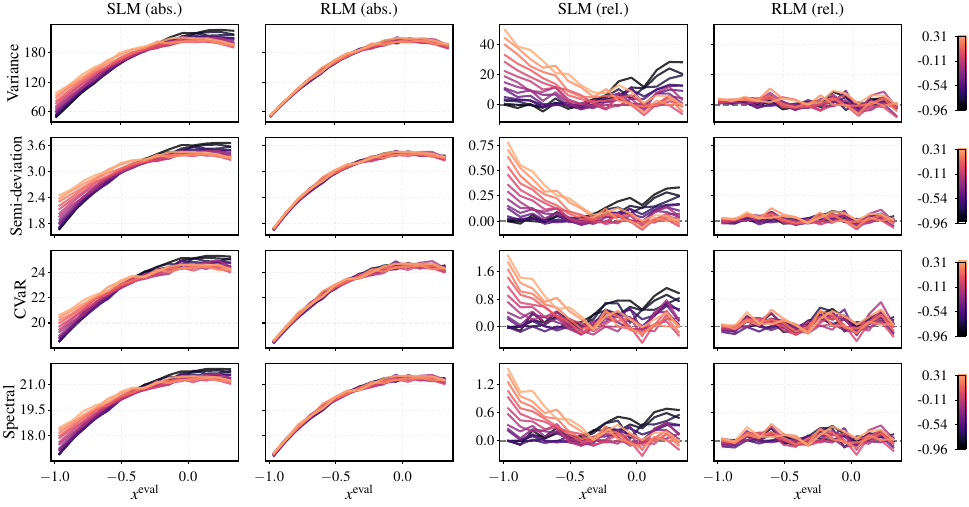}
	\caption{Continual robustness results for \textsc{Heston-Corr}.}
	\label{fig:app:synth:heston_metric_abs_rel_overview}
\end{figure}

\subsubsection{\textsc{BS-Cov}}

\begin{figure}[H]
	\centering
	\includegraphics[width=\textwidth]{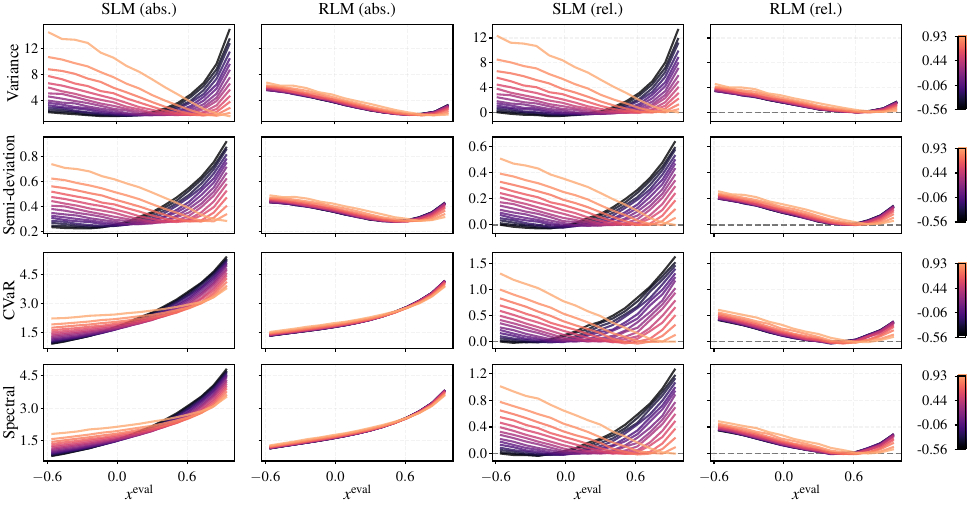}
	\caption{Continual robustness results for \textsc{BS-Cov}.}
	\label{fig:app:synth:bs2d_metric_abs_rel_overview}
\end{figure}

\subsubsection{In-simulator evaluation}

\begin{table}[H]
	\centering
	\caption{In-simulator evaluation for the three hedging problems.
	We compute $\mathcal{R}_\mathrm{srm}$ for the SLM-trained and RLM-trained policies
	on paths from the SLM simulator (SLM column) and from the RLM simulator
	(RLM column).
	The \emph{Gap} row reports SLM minus RLM, so
	\textcolor{myblue}{positive} values favor RLM
	and \textcolor{myred}{negative} values favor SLM.
	As a reference, the last two columns report points
	from the continual-robustness curves chosen such that
	the SLM--RLM gap is largest in either direction.
	\protect\footnotemark{}
	All entries are multiplied by $100$.}
	\small
\textsc{
\begin{tabular}{llcccccc}
\toprule
 & & \multicolumn{2}{c}{without warm-up} & \multicolumn{4}{c}{with warm-up} \\
\cmidrule(lr){3-4}\cmidrule(lr){5-8}
Problem & \diagbox[width=5.8em,height=2.1em]
{\raisebox{-0.15em}{\hspace{-0.5em}Policy}}
{\raisebox{0.15em}{Eval.}} & SLM & RLM & SLM & RLM & \shortstack{shift\\max gap} & \shortstack{shift\\min gap} \\
\midrule
BS-Vol & SLM & 10.34 & 10.17 & 10.31 & 10.19 & 16.42 & 5.42 \\
 & RLM & 10.35 & 10.17 & 10.33 & 10.15 & 14.94 & 5.93 \\
\cmidrule(lr){2-8}
 & \emph{Gap} & \textcolor{myred}{-0.01} & 0.00 & \textcolor{myred}{-0.02} & \textcolor{myblue}{0.04} & \textcolor{myblue}{1.48} & \textcolor{myred}{-0.51} \\
\midrule
Heston-Corr & SLM & 19.90 & 19.96 & 19.85 & 19.99 & 18.47 & 21.41 \\
 & RLM & 19.91 & 19.93 & 19.87 & 19.93 & 16.98 & 21.46 \\
\cmidrule(lr){2-8}
 & \emph{Gap} & \textcolor{myred}{-0.01} & \textcolor{myblue}{0.03} & \textcolor{myred}{-0.02} & \textcolor{myblue}{0.06} & \textcolor{myblue}{1.49} & \textcolor{myred}{-0.05} \\
\midrule
BS-Cov & SLM & 2.65 & 2.62 & 2.61 & 2.66 & 4.77 & 0.81 \\
 & RLM & 2.66 & 2.61 & 2.63 & 2.61 & 3.79 & 1.16 \\
\cmidrule(lr){2-8}
 & \emph{Gap} & \textcolor{myred}{-0.01} & \textcolor{myblue}{0.01} & \textcolor{myred}{-0.02} & \textcolor{myblue}{0.05} & \textcolor{myblue}{0.98} & \textcolor{myred}{-0.35} \\
\bottomrule
\end{tabular}
}

\end{table}

\footnotetext{For each regime-pair $(x^\mathrm{pre},x^\mathrm{eval})$
considered in the continual-robustness analysis,
we compute the SLM--RLM loss difference.
The ``max gap'' column reports the loss of each policy on the regime-pair
where this difference is largest,
and the ``min gap'' column uses the point where it is smallest.}

\clearpage
\section{Additional details and results for the real data case study}\label{app:sec:real_data}

\subsection{Experiment details}

\subsubsection{Payoffs}\label{app:subsec:real_data_payoffs}

In the real data experiments of \cref{sec:real_data}, we use the following payoffs.
All payoffs are defined on the normalized price path
$\tilde{S}_{1:T} := (\tilde{S}_1,\dots,\tilde{S}_T)$,
for which $\tilde{S}_0 = 1$.
All payoffs have a maturity of $T = 128$ trading days.
\begin{align*}
	\psi^{\mathrm{call}}(\tilde S_{1:T})
	&=
	(\tilde S_T - 1.0)^+, \\
	\psi^{\mathrm{put}}(\tilde S_{1:T})
	&=
	(1.0 - \tilde S_T)^+, \\
	\psi^{\mathrm{straddle}}(\tilde S_{1:T})
	&=
	\lvert \tilde S_T - 1.0 \rvert, \\
	\psi^{\mathrm{strangle}}(\tilde S_{1:T})
	&=
	(0.85 - \tilde S_T)^+
	+
	(\tilde S_T - 1.15)^+, \\
	\psi^{\mathrm{bull-call-spread}}(\tilde S_{1:T})
	&=
	(\tilde S_T - 0.95)^+
	-
	(\tilde S_T - 1.05)^+, \\
	\psi^{\mathrm{butterfly}}(\tilde S_{1:T})
	&=
	(\tilde S_T - 0.85)^+
	-
	2(\tilde S_T - 1.0)^+
	+
	(\tilde S_T - 1.15)^+, \\
	\psi^{\mathrm{digital-option}}(\tilde S_{1:T})
	&=
	\frac{
		(\tilde S_T - 1.09)^+
		-
		(\tilde S_T - 1.11)^+
	}{
		0.02
	}, \\
	\psi^{\mathrm{risk-reversal}}(\tilde S_{1:T})
	&=
	(0.95 - \tilde S_T)^+
	-
	(\tilde S_T - 1.05)^+, \\
	\psi^{\mathrm{up-and-out-call}}(\tilde S_{1:T})
	&=
	(\tilde S_T - 1.0)^+
	\cdot
	\mathbf{1}_{\{\max_{1\le t\le T}\tilde S_t < 1.25\}}, \\
	\psi^{\mathrm{down-and-out-put}}(\tilde S_{1:T})
	&=
	(1.0 - \tilde S_T)^+
	\cdot
	\mathbf{1}_{\{\min_{1\le t\le T}\tilde S_t > 0.75\}}.
\end{align*}

\subsubsection{Benchmark policies}
For each stock, let
$r_t := \log(S_t/S_{t-1})$
and $\Delta t := 1/250$.
In BS-EWMA, volatility is the EWMA estimator
\begin{equation*}
\hat v_t^{\mathrm{EWMA}}
=
\lambda \hat v_{t-1}^{\mathrm{EWMA}}
+
(1-\lambda)\frac{r_{t-1}^2}{\Delta t},
\qquad
\hat\sigma_t^{\mathrm{EWMA}}=\sqrt{\hat v_t^{\mathrm{EWMA}}},
\end{equation*}
with $\lambda=0.94$.
In BS-HIST, volatility is the equal-weight historical estimator
\begin{equation*}
\hat v_t^{\mathrm{HIST}}
=
\frac{1}{t - 1}\sum_{s=1}^{t - 1}\frac{r_s^2}{\Delta t},
\qquad
\hat\sigma_t^{\mathrm{HIST}}=\sqrt{\hat v_t^{\mathrm{HIST}}}.
\end{equation*}
These estimators are computed separately for each stock and then used as the
volatility input in $\Delta^{\psi}$.

\subsubsection{Randomization distribution}

\begin{wrapfigure}{r}{0.5\textwidth}
	\centering
	\includegraphics[width=\linewidth]{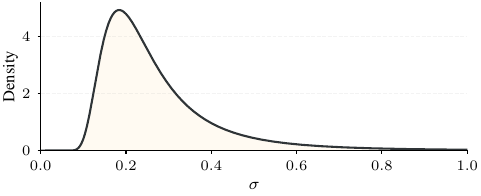}
	\caption{
		Density of the randomization distribution $\nu$
		used in the real data experiments.
	}
	\label{fig:real_data_sigma_prior}
\end{wrapfigure}

As described in \cref{sec:real_data}, we fit an inverse-gamma distribution
via maximum likelihood under the assumption that the sample paths of
$128$ days are generated under the static latent model.
The fitted parameters are
$\hat{\alpha}^\mathrm{IG} = 1.63$ and $\hat{\beta}^\mathrm{IG} = 0.07$.
The inverse-gamma distribution is the randomization distribution of the
variance, i.e., the squared latent volatility.
The resulting prior for the volatility is visualized
in \cref{fig:real_data_sigma_prior}.
The fitted distribution has a heavy upper tail.
For training, we therefore cap sampled volatility values at $1.0$,
which avoids excessive sensitivity to extremely high-volatility draws.

\clearpage 

\subsection{Additional results}

\begin{figure}[h]
	\centering
	\includegraphics[width=\textwidth]{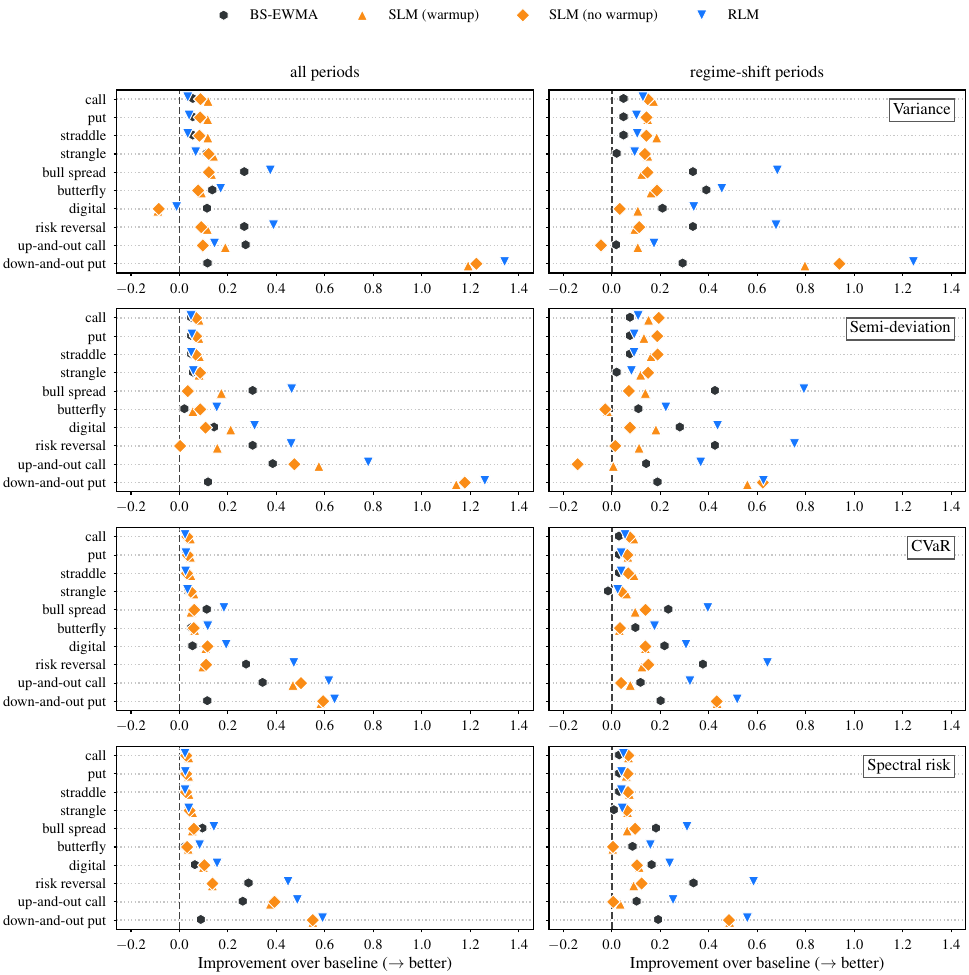}
	\caption{
		Additional results for the pooled S\&P 100 real data backtest.
		Each point reports, for one payoff and one policy,
		the log improvement over BS-HIST,
		$\log(
		\mathcal{R}^{\mathrm{BS\text{-}HIST}} /
		\mathcal{R}
		)$,
		so positive values indicate lower risk than BS-HIST.
		The first column shows results on all evaluation windows;
		the second shows results on regime-shift periods.
		Each row corresponds to a different evaluation criterion.
		In addition to the policies discussed in the main text, we show
		results for the SLM policy when no warm-up is used.
		For the SLM policy, warm-up makes the policy more adaptive to the recent
		volatility regime, but also causes ambiguity about the static latent volatility
		to decay before trading starts.
		The results are mixed on whether warm-up improves the SLM policy.
		As discussed in \cref{subsec:real_data_results},
		continual retraining with an updated randomization distribution may be better,
		but would be much more expensive.
	}
\end{figure}

\clearpage
\section{Proofs}\label{app:sec:proofs}

\subsection{Vanishing and stationary ambiguity}\label{app:subsec:main_proofs}

\subsubsection{Doob's theorem}

We now state a more detailed version of \cref{prop:slm_vanishing_amb}.
This is a version of Doob's well-known posterior consistency theorem;
see, for example, \citet{miller_2018_aa}.

\begin{proposition*}[Doob's theorem]
	Let $\mathcal X$ and $\mathcal Y$ be Polish spaces, equipped with their
	Borel $\sigma$-algebras. Let $\nu$ be a probability measure on
	$\mathcal X$, and let $p(\cdot\mid x)$ be a probability measure on
	$\mathcal Y$ for each $x\in\mathcal X$. Assume that
	$x\mapsto p(A\mid x)$ is measurable for every Borel set
	$A\subseteq\mathcal Y$. Consider the static latent model
	\begin{equation*}
		X\sim \nu,
		\qquad
		Y_1,Y_2,\ldots\mid X=x
		\stackrel{\mathrm{iid}}{\sim}p(\cdot\mid x),
	\end{equation*}
	and let $\mathbb P$ denote the induced joint law of
	$(X,Y_1,Y_2,\ldots)$ on $\mathcal X\times\mathcal Y^{\mathbb N}$.
	Let $\mathcal G_t=\sigma(Y_1,\ldots,Y_t)$. If the model is
	identifiable, in the sense that
	\begin{equation*}
		p(\cdot\mid x)\neq p(\cdot\mid x')
		\qquad
		\text{whenever } x\neq x',
	\end{equation*}
	then, $\mathbb P$-almost surely,
	\begin{equation*}
		\mathbb P(X\in\cdot\mid\mathcal G_t)
		\Rightarrow
		\delta_X ,
	\end{equation*}
	where $\Rightarrow$ denotes weak convergence.
	Moreover, for every measurable $\varphi:\mathcal X\to\mathbb R$
	with $\varphi(X)\in L^2$,
	\begin{equation*}
		\Var(\varphi(X)\mid\mathcal G_t)\to 0,
		\qquad
		\mathbb P\text{-a.s.}
	\end{equation*}
\end{proposition*}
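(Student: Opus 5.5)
The argument goes through martingale convergence: first show that $X$ is (a.s. equal to) a $\mathcal G_\infty$-measurable random variable, where $\mathcal G_\infty=\sigma(Y_1,Y_2,\ldots)$. Then use Lévy's upward theorem for countably many test functions to get weak convergence, and for $\varphi$ and $\varphi^2$ to get the variance statement.

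\emph{Step 1 (identifiability makes $X$ recoverable from the infinite sample).} Fix a countable convergence-determining family $\{f_k\}$ of bounded continuous functions on $\mathcal Y$; such a family exists because $\mathcal Y$ is Polish. Conditional on $X=x$, the strong law of large numbers gives
\[
\frac1t\sum_{s\le t} f_k(Y_s)\to \int f_k\,dp(\cdot\mid x) \quad \text{a.s. for all } k.
\]
More simply, by Varadarajan's theorem the empirical measures $\hat p_t=\frac1t\sum_{s\le t}\delta_{Y_s}$ converge weakly to $p(\cdot\mid x)$ a.s. Hence the $\mathcal G_\infty$-measurable random measure $\hat p_\infty:=\lim_t \hat p_t$, set to a fixed measure where the limit fails, equals $p(\cdot\mid X)$ $\mathbb P$-a.s. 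Identifiability says the map $\Phi: x\mapsto p(\cdot\mid x)$ from $\mathcal X$ into the Polish space $\mathcal P(\mathcal Y)$ is injective. It is also Borel, since the $\sigma$-algebra of $\mathcal P(\mathcal Y)$ is generated by the evaluations $A\mapsto p(A\mid x)$. By the Lusin--Souslin theorem, an injective Borel map between Polish spaces has Borel image and a Borel inverse on that image. Therefore
\[
X=\Phi^{-1}(\hat p_\infty) \quad \text{a.s.},
\]
so $X$ agrees a.s. with a $\mathcal G_\infty$-measurable variable. I expect this to be the main obstacle. The delicate points are the measurability of $\Phi^{-1}$, and choosing the modification on the null set so that $\Phi^{-1}(\hat p_\infty)$ is defined everywhere (send points outside the image to a fixed $x_0$).

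\emph{Step 2 (variance convergence).} Let $\varphi(X)\in L^2$. By Lévy's upward theorem, $\E[\varphi(X)\mid\mathcal G_t]\to\E[\varphi(X)\mid\mathcal G_\infty]=\varphi(X)$ a.s., using Step 1. The same theorem applied to $\varphi(X)^2\in L^1$ gives $\E[\varphi(X)^2\mid\mathcal G_t]\to\varphi(X)^2$ a.s. Consequently
\[
\Var(\varphi(X)\mid\mathcal G_t)=\E[\varphi(X)^2\mid\mathcal G_t]-\E[\varphi(X)\mid\mathcal G_t]^2\to 0 \quad \text{a.s.}
\]
These conditional expectations are taken with respect to a regular version of the posterior, which exists because $\mathcal X$ is Polish.

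\emph{Step 3 (weak convergence).} Choose a countable convergence-determining family $\{g_j\}$ of bounded continuous functions on $\mathcal X$, for instance bounded Lipschitz functions that are dense for a metric compatible with the topology. Step 2's martingale argument gives, for each $j$,
\[
\int g_j\,d\,\mathbb P(X\in\cdot\mid\mathcal G_t)\to g_j(X) \quad \text{a.s.}
\]
Intersecting the countably many full-measure events yields a single full-measure event on which this holds for all $j$. On that event, $\mathbb P(X\in\cdot\mid\mathcal G_t)\Rightarrow\delta_X$, because $\{g_j\}$ determines weak convergence.
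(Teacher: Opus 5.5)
Your argument is correct. The paper does not carry it out itself. It cites \citet[Theorem~2.4]{miller_2018_aa} for posterior concentration on open neighbourhoods of the true parameter, for $\nu$-a.e.\ parameter, and upgrades this to $\mathbb P$-a.s.\ weak convergence to $\delta_X$ via the Portmanteau theorem. For the variance claim it cites \citet[Theorem~2.2]{miller_2018_aa}, applied to $\varphi$ and $\varphi^2$.

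You instead reconstruct the classical Doob argument behind those cited results. Varadarajan's theorem, identifiability and the Lusin--Souslin theorem show that $X$ coincides a.s.\ with a $\mathcal G_\infty$-measurable variable. L\'evy's upward theorem then gives both conclusions. The citation buys brevity and hands the measure-theoretic bookkeeping (null sets, measurable inverse) to a careful reference. Your version is self-contained and shows exactly where each hypothesis enters: identifiability only as injectivity of $x\mapsto p(\cdot\mid x)$, and the Polish assumptions through Lusin--Souslin and the existence of regular conditional laws.

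One phrase in Step~3 needs tightening. Bounded Lipschitz functions are not sup-norm separable for a general compatible metric, so "dense" does not give you a countable family as stated. Either take a totally bounded compatible metric, or, more simply, use that the limit is a Dirac mass. Apply L\'evy's theorem to $\mathbf 1_{B_k}(X)$ for a countable base $\{B_k\}$ of $\mathcal X$. This gives $\pi_t(O)\to 1$ for every open $O\ni X$, where $\pi_t$ is the posterior; that is exactly the neighbourhood-concentration form the paper relies on, and it implies $\pi_t\Rightarrow\delta_X$ by Portmanteau. Also, in Step~1 the Borel $\sigma$-algebra of $\mathcal P(\mathcal Y)$ is generated by the evaluation maps $\mu\mapsto\mu(A)$, not by "$A\mapsto p(A\mid x)$".
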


\begin{proof}
The weak convergence statement follows from
\citet[Theorem~2.4]{miller_2018_aa}, which gives
posterior concentration on every open neighborhood of the true
parameter, for $\nu$-almost every true parameter. Equivalently, under
the joint law $\mathbb P$, the posterior converges weakly to
$\delta_X$ (the Portmanteau theorem).
The variance convergence follows from \citet[Theorem~2.2]{miller_2018_aa}, applied
to $\varphi$ and $\varphi^2$.
\end{proof}

\subsubsection{Filter stationarity}

We state a more general version of \cref{prop:amb_stationary}
and provide a proof for completeness.
The result is well-known
\citep{kunita_1971_asymp,vanhandel_2012_on}.
A much more difficult question is under what conditions the filter
has a unique stationary distribution
\citep[see e.g.,][]{vanhandel_2012_on}.

\begin{proposition*}[Sufficient conditions for a stationary filter]
	Let $\mathcal X$ and $\mathcal Y$ be Polish spaces, and let
	$\Omega = (\mathcal X \times \mathcal Y)^{\mathbb Z}$.
	Let $(X,Y)$ be the canonical process on the probability space
	$(\Omega,\mathcal B(\Omega),\mathbb P)$, and let
	$\vartheta:\Omega\to\Omega$ be the left shift,
	\begin{equation*}
			(\vartheta \omega)_s = \omega_{s+1}.
	\end{equation*}
	Let $\mathbb F = (\mathcal F_t)_{t\in\mathbb Z}$ be a sequence of
	$\sigma$-algebras with $\mathcal F_t\subset\mathcal B(\Omega)$ for each $t$.
	Assume that
	\begin{enumerate}[label=(\alph*)]
			\item \label{item:amb_stationary_joint_process}
					the joint process $(X,Y)$ is strictly stationary;
			\item \label{item:amb_stationary_information_structure}
					$\mathbb{F}$ satisfies
					\begin{equation*}
							\mathcal F_t
							=
							\vartheta^{-t}\mathcal F_0
							:=
							\{(\vartheta^t)^{-1}(A):A\in\mathcal F_0\},
							\qquad t\in\mathbb Z.
					\end{equation*}
	\end{enumerate}
	For $t\in\mathbb Z$, let
	\begin{equation*}
			\pi_t := \mathbb P(X_t\in\cdot\mid\mathcal F_t).
	\end{equation*}
	Then $(\pi_t)_{t\in\mathbb Z}$ has a strictly stationary version.
	Moreover, for any measurable $\varphi:\mathcal X\to\mathbb R$
	with $\varphi(X_0)\in L^2$, the conditional variance process
	\begin{equation*}
			\mathcal V_t^\varphi
			:=
			\Var(\varphi(X_t)\mid\mathcal F_t),
			\qquad t\in\mathbb Z,
	\end{equation*}
	has a strictly stationary version.
\end{proposition*}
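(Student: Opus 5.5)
The plan is to build a regular version of the filter at time $0$ and then define every $\pi_t$ by composing that single kernel with the shift. Since $\mathcal X$ is Polish, there is a regular conditional distribution of $X_0$ given $\mathcal F_0$. That is, there is a probability kernel $K:\Omega\times\mathcal B(\mathcal X)\to[0,1]$ such that:
\begin{itemize}
\item $\omega\mapsto K(\omega,\cdot)$ is $\mathcal F_0$-measurable as a map into $\mathcal P(\mathcal X)$, equipped with the Borel $\sigma$-algebra of the weak topology;
\item $K(\cdot,A)=\mathbb P(X_0\in A\mid\mathcal F_0)$ almost surely for every $A$.
\end{itemize}
We then define $\tilde\pi_t(\omega):=K(\vartheta^t\omega,\cdot)$. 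Since $X_t=X_0\circ\vartheta^t$, stationarity of the process is exactly the statement that $\vartheta$ preserves $\mathbb P$.

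\textbf{Step 1: $\tilde\pi_t$ is a version of $\pi_t$.} By (b), $\tilde\pi_t$ is $\mathcal F_t$-measurable, since it is a measurable function of $\vartheta^t$ composed with an $\mathcal F_0$-measurable map. Take $B=(\vartheta^t)^{-1}(B_0)\in\mathcal F_t$ with $B_0\in\mathcal F_0$. By invariance of $\mathbb P$ under $\vartheta^t$,
\[
\mathbb E[\mathbf 1_B\,\tilde\pi_t(A)]
=\mathbb E[(\mathbf 1_{B_0}K(\cdot,A))\circ\vartheta^t]
=\mathbb E[\mathbf 1_{B_0}K(\cdot,A)]
=\mathbb P(B_0\cap\{X_0\in A\}).
\]
Applying the same invariance in reverse, the last quantity equals $\mathbb P(B\cap\{X_t\in A\})$, which is what we need.

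\textbf{Step 2: stationarity of $(\tilde\pi_t)$.} The tuple $(\tilde\pi_{t+s},\dots,\tilde\pi_{t+s+k})$ equals $(\tilde\pi_t,\dots,\tilde\pi_{t+k})\circ\vartheta^s$ pointwise. Because $\vartheta^s$ preserves $\mathbb P$, the two tuples have the same law on $\mathcal P(\mathcal X)^{k+1}$.

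\textbf{Step 3: the conditional variance.} Let $\varphi(X_0)\in L^2$. By stationarity, $\mathbb E[\varphi(X_t)^2]=\mathbb E[\varphi(X_0)^2]<\infty$ for every $t$. Define $G(m):=\int\varphi^2\,dm-(\int\varphi\,dm)^2$ when $\int\varphi^2\,dm<\infty$, and $G(m):=0$ otherwise. This $G$ is a measurable map on $\mathcal P(\mathcal X)$, because $m\mapsto\int f\,dm$ is measurable for nonnegative measurable $f$; one proves this by a monotone class argument starting from indicators. Since $\int\varphi^2\,d\tilde\pi_t=\mathbb E[\varphi(X_t)^2\mid\mathcal F_t]<\infty$ almost surely, $G(\tilde\pi_t)$ is a version of $\mathcal V^\varphi_t$. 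Moreover, $G(\tilde\pi_t)=G(\tilde\pi_0)\circ\vartheta^t$, so Step 2's argument gives stationarity of this process as well.

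\textbf{Main obstacle.} The delicate point is Step 1's bookkeeping: we must check that pulling back by $\vartheta^t$ sends a regular version for $\mathcal F_0$ to a regular version for $\mathcal F_t$. The identity $\mathcal F_t=\vartheta^{-t}\mathcal F_0$ from (b) is used both for measurability and to parametrize the test sets $B$. There is also a smaller point in Step 3: measurability of the kernel-to-integral maps when $\varphi$ is unbounded. That is handled by truncating $\varphi$ and applying monotone convergence.
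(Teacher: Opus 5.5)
Your proposal is correct and follows the paper's proof essentially step for step. You fix a regular version of the time-$0$ filter and set $\tilde\pi_t := \pi_0\circ\vartheta^t$; you use (b) both for $\mathcal F_t$-measurability and to write test sets as $\vartheta^{-t}$-preimages; you use shift invariance of $\mathbb P$ for the version property and for stationarity; and you obtain $\mathcal V_t^\varphi$ as a measurable functional of $\tilde\pi_t$. The only differences are minor: you test with indicators rather than bounded $\phi$, and you treat more carefully the measurability and almost-sure finiteness of $\int\varphi^2\,d\tilde\pi_t$ for unbounded $\varphi$, which the paper passes over.
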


Condition (a) ensures that, in distribution,
the statistical problem of estimating the current latent state from the
current information is the same at every time $t$.
Condition (b) ensures that
the information available at time $t$ on a path~$\omega$ coincides with the
information available at time $0$ on the shifted path $\vartheta^t \omega$.
The condition is satisfied when the controller receives
information according to the same rule at every date.
This includes the following two cases:
  \begin{enumerate}[label=(\roman*)]
        \item \label{item:amb_stationary_lookback}
              fixed lookback window:
              $\mathcal F_t = \mathcal G_t^k := \sigma(Y_t,\dots,Y_{t-k+1})$
              for $t \in \mathbb Z$ and a fixed $k \in \mathbb{N}$;
        \item \label{item:amb_stationary_full_history}
              full observation history:
              $\mathcal F_t = \mathcal G_t := \sigma(Y_s : s \le t)$
              for $t \in \mathbb Z$.
  \end{enumerate}
We focus on the second case because information structures as in~(i)
enforce stationary ambiguity by artificially introducing ambiguity over past
values of the process $Y$.
In reality, the entire process $Y$ is progressively observable.
We discuss why such artificially introduced ambiguity is undesirable in
\cref{subsec:info_restrictions}.

\begin{proof}
	Choose a version $\pi_0$ of
	$\mathbb P(X_0 \in \cdot \mid \mathcal F_0)$.
	For each $t \in \mathbb Z$, define
	\begin{equation*}
		\tilde{\pi}_t(\omega) := \pi_0(\vartheta^t \omega),
		\qquad \omega \in \Omega.
	\end{equation*}
	Since $\pi_0$ is $\mathcal F_0$-measurable and
	$\mathcal F_t = \vartheta^{-t}\mathcal F_0$,
	the random measure $\tilde{\pi}_t$ is $\mathcal F_t$-measurable.
	Fix $t \in \mathbb Z$.
	Let $\phi : \mathcal X \to \mathbb R$ be bounded and measurable, and let
	$A \in \mathcal F_t$.
	Write $A = \vartheta^{-t}A_0$ for some $A_0 \in \mathcal F_0$.
	\begin{align*}
		\mathbb E\bigl[
			1_A \int_{\mathcal X} \phi(x)\,\tilde{\pi}_t(dx)
			\bigr]
		 & =
		\mathbb E\bigl[
			\bigl(
			1_{A_0}
			\int_{\mathcal X} \phi(x)\,\pi_0(dx)
			\bigr)
			\circ \vartheta^t
		\bigr]                                          \\
		 & =
		\mathbb E\bigl[
			1_{A_0}
			\int_{\mathcal X} \phi(x)\,\pi_0(dx)
		\bigr]                                          \\
		 & =
		\mathbb E[1_{A_0} \phi(X_0)]                     \\
		 & =
		\mathbb E[(1_{A_0} \phi(X_0)) \circ \vartheta^t] \\
		 & =
		\mathbb E[1_A \phi(X_t)].
	\end{align*}
	Here we used that
	$\mathbb P \circ \vartheta^{-t} = \mathbb P$
	and
	$X_t = X_0 \circ \vartheta^t$ on the canonical path space.
	Hence $\tilde{\pi}_t$ is a version of
	$\mathbb P(X_t \in \cdot \mid \mathcal F_t)$.

	Moreover, for any $h,t_1,\dots,t_n \in \mathbb Z$,
	\begin{equation*}
			(\tilde{\pi}_{t_1+h},\dots,\tilde{\pi}_{t_n+h})
			=
			(\tilde{\pi}_{t_1},\dots,\tilde{\pi}_{t_n})
			\circ \vartheta^h .
	\end{equation*}
	Since $\mathbb P$ is invariant under $\vartheta$,
	$(\tilde{\pi}_t)_{t\in\mathbb Z}$ is strictly stationary.
	Thus $(\tilde{\pi}_t)_{t\in\mathbb Z}$ is a strictly stationary
	version of the filter process.

	Finally, define
	\begin{equation*}
			\tilde{\mathcal V}_t^\varphi
			=
			\int_{\mathcal X}\varphi(x)^2\,\tilde{\pi}_t(dx)
			-
			\Bigl(
					\int_{\mathcal X}\varphi(x)\,\tilde{\pi}_t(dx)
			\Bigr)^2 .
	\end{equation*}
	This is a version of
	$\Var(\varphi(X_t)\mid\mathcal F_t)$.
	Since $\tilde{\mathcal V}_t^\varphi$ is a measurable function of
	$\tilde{\pi}_t$, the process
	$(\tilde{\mathcal V}_t^\varphi)_{t\in\mathbb Z}$
	is strictly stationary.
\end{proof}

\subsubsection{Stationary randomization}

\begin{proof}[Proof of \cref{prop:stationary_randomization}]

This result is due to \citet{stenflo_2001_marko}, who studies iterations in
which the map applied at each step is drawn along a stationary sequence.
Our recursion is of this form, with the latent process $X$ as the stationary
driving sequence.
Since the setup and notation in that work differ substantially from ours,
we give the argument in detail.

Recall that the randomized simulator is
\begin{equation}\label{eq:app_stenflo_recursion}
	Y_t = F(X_t,Y_{t-1},W_t),
	\qquad t\in\mathbb Z,
\end{equation}
where $X$ is stationary, the noise $W$ is i.i.d., and $X$ and $W$ are
independent.
By assumption~(ii), $d$ is a complete and separable metric on $\mathcal Y$, and
there exist $c\in(0,1)$ and $y_\ast\in\mathcal Y$ such that
\begin{align}
	\mathbb E\big[d(F(x,y,W_0),F(x,\tilde y,W_0))\big]
		&\le c\,d(y,\tilde y),
		\qquad x\in\mathcal X,\ y,\tilde y\in\mathcal Y,
		\label{eq:app_stenflo_contraction}\\
	\sup_{x\in\mathcal X}\mathbb E\big[d(y_\ast,F(x,y_\ast,W_0))\big]
		&< \infty .
		\label{eq:app_stenflo_moment}
\end{align}
It is convenient to write the recursion in terms of the random map applied at
each step.
For $t\in\mathbb Z$, define
\begin{equation*}
	\Phi_t(y) := F(X_t,y,W_t),
	\qquad y\in\mathcal Y,
\end{equation*}
so that \eqref{eq:app_stenflo_recursion} becomes $Y_t=\Phi_t(Y_{t-1})$.
Each $\Phi_t$ is a random map on $\mathcal Y$, and the sequence
$(\Phi_t)_{t\in\mathbb Z}$ is driven by the stationary process $(X,W)$.
\citet[Theorem~1]{stenflo_2001_marko} shows that the
backward composition of the most recent maps,
\begin{equation*}
	G_t^{(n)} := \Phi_t\circ\Phi_{t-1}\circ\cdots\circ\Phi_{t-n+1}(y_\ast),
\end{equation*}
converges, as $n\to\infty$, almost surely and in $L^1(d)$ to a limit $Z_t$ that
does not depend on the base point $y_\ast$.\footnote{
Note that this construction uses only stationarity of the driving sequence; its
ergodicity is not needed here.}
Note that $Z_t$ is obtained from the history $(X_s, W_s)_{s \le t}$ of the driving
process by the same measurable map for every $t$.
Since $(X, W)$ is stationary,
the joint process $(X, Z)$ is therefore also stationary.

Finally, note that $Z=(Z_t)_{t\in\mathbb Z}$ solves the recursion, that is,
$Z_t=\Phi_t(Z_{t-1})$ almost surely for every $t$.
Let ${\mathcal H:=\sigma\big(X,(W_s)_{s\le t-1}\big)}$.
The variables $G_{t-1}^{(n-1)}$, $Z_{t-1}$, and $X_t$ are $\mathcal H$-measurable,
while $W_t$ is independent of $\mathcal H$.
Hence, by~\eqref{eq:app_stenflo_contraction},
\begin{equation*}
      \mathbb E\big[d\big(\Phi_t(G_{t-1}^{(n-1)}),\Phi_t(Z_{t-1})\big)\mid\mathcal H\big]
      \le c\,d\big(G_{t-1}^{(n-1)},Z_{t-1}\big).
\end{equation*}
Taking expectations and using that $G_{t-1}^{(n-1)}\to Z_{t-1}$ in $L^1(d)$,
\begin{equation*}
      \mathbb E\big[d\big(\Phi_t(G_{t-1}^{(n-1)}),\Phi_t(Z_{t-1})\big)\big]
      \le c\,\mathbb E\big[d\big(G_{t-1}^{(n-1)},Z_{t-1}\big)\big]\to 0 .
\end{equation*}
By the triangle inequality, for every $n$,
\begin{equation*}
      \mathbb E\big[d\big(Z_t,\Phi_t(Z_{t-1})\big)\big]
      \le \mathbb E\big[d\big(Z_t,G_t^{(n)}\big)\big]
      + \mathbb E\big[d\big(\Phi_t(G_{t-1}^{(n-1)}),\Phi_t(Z_{t-1})\big)\big],
\end{equation*}
and both terms vanish as $n\to\infty$.
As the left side does not depend on $n$, it is zero, so
${Z_t=\Phi_t(Z_{t-1})}$ almost surely.

Finally, suppose $X$ is in addition ergodic.
Then, $(X,W)$ is ergodic and since $(X,Z)$ is a measurable factor of $(X,W)$,
it is ergodic as well.
\end{proof}

\subsection{Optimal investment}\label{app:sec:optimal_investment_example}

In this section, we provide all the results for the optimal investment problems
from \cref{subsec:opt_inv_example}, which are of the type
\begin{equation}\label{eq:app_opt_inv_example_objective}
	\inf_{U \; \mathbb G\text{-predictable}}
	\quad
	-
	\sum_{t=1}^T
	\mathbb E\bigl[
		u(U_tY_t)
	\bigr],
	\qquad
	u(r) := 1-\exp(-\lambda r),
\end{equation}
where $\lambda > 0$ and
$\mathbb G$ is the filtration generated by $Y$.
Equivalently, one may drop the additive constant and minimize
$\sum_{t=1}^T \mathbb E[\exp(-\lambda U_tY_t)]$.
The numerical parameters used in the figures are listed in
\cref{tab:app_opt_inv_parameters}.

\begin{table}[t]
	\centering
	\small
	\caption{Parameter values for the optimal investment experiments
	and visualizations.}
	\label{tab:app_opt_inv_parameters}
	\begin{tabular}{ll}
	\toprule
	Parameter & Value \\
	\midrule
	Trading days per year & $252$ \\
	Horizon & $T=10 \cdot 252=2520$ \\
	Prior mean drift & $\bar x=0.10/252$ \\
	Prior drift standard deviation & $\tau=0.20/252$ \\
	Conditional return standard deviation & $\sigma=0.15/\sqrt{252}$ \\
	Risk aversion & $\lambda=200$ \\
	LG-SSM persistence & $\phi=0.9925$ \\
	Regime-switch upper drift & $x_+=\bar x+\tau\Phi^{-1}(0.90)\approx 0.36/252$ \\
	Regime-switch lower drift & $x_-=\bar x+\tau\Phi^{-1}(0.10)\approx -0.16/252$ \\
	Regime-switch time & $t_\ast=T/2=1260$ \\
	\bottomrule
	\end{tabular}
\end{table}

\subsubsection{Known drift}
We begin with the base simulator in which the drift is known and constant.
Since the drift is known, the policy does not learn about it from its observations
and the optimal investment amount is constant over time.

\begin{proposition}[Known drift]
\label{prop:app_opt_inv_example_known}
	Suppose that $\bar x \in \mathbb R$, $\sigma^2 > 0$, and
	$Y_t \stackrel{\mathrm{iid}}{\sim} N(\bar x,\sigma^2)$
	for $t = 1,\dots,T$.
	Then the unique optimizer of \cref{eq:app_opt_inv_example_objective}
	is the constant control
	\begin{equation*}
		U_t^\star = \frac{\bar x}{\lambda \sigma^2},
		\qquad t = 1,\dots,T .
	\end{equation*}
\end{proposition}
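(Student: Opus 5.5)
The plan is to decouple the objective across time and then solve a deterministic scalar problem at each date. Since the objective in \cref{eq:app_opt_inv_example_objective} is additive, minimizing it amounts to minimizing $\sum_{t=1}^T \mathbb E[\exp(-\lambda U_tY_t)]$, and each summand depends only on $U_t$. For any $\mathbb G$-predictable control, $U_t$ is $\mathcal G_{t-1}$-measurable. Because the returns are i.i.d., $Y_t$ is independent of $\mathcal G_{t-1}=\sigma(Y_1,\dots,Y_{t-1})$. Conditioning on $\mathcal G_{t-1}$ (using the freezing lemma for independent variables, which is fine for the nonnegative integrand) gives
\begin{equation*}
	\mathbb E\bigl[\exp(-\lambda U_tY_t)\bigr]
	=
	\mathbb E\bigl[g(U_t)\bigr],
	\qquad
	g(u):=\mathbb E\bigl[\exp(-\lambda uY_1)\bigr]
	=\exp\bigl(-\lambda u\bar x+\tfrac12\lambda^2u^2\sigma^2\bigr),
\end{equation*}
where the last equality is the Gaussian moment generating function.

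Next I would minimize $g$ over $u\in\mathbb R$. The exponent is a strictly convex quadratic in $u$, so $g$ has a unique global minimizer, found by setting the derivative of the exponent to zero: $u^\star=\bar x/(\lambda\sigma^2)$. Consequently, for every predictable $U$ we have $\mathbb E[g(U_t)]\ge g(u^\star)$ for each $t$. The constant control $U_t\equiv u^\star$ is deterministic and hence predictable, and it attains this lower bound in every summand, so it is optimal.

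For uniqueness, suppose $U$ is another optimizer. Then $\sum_t \mathbb E[g(U_t)-g(u^\star)]=0$, and since each term is nonnegative, each term must be zero. This forces $g(U_t)=g(u^\star)$ almost surely. Because the minimizer of $g$ is unique, $U_t=u^\star$ almost surely for every $t$.

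The only point that needs care is the conditioning step, where $U_t$ may be unbounded. I would handle this with the freezing lemma for independent random variables, which applies because the integrand is nonnegative and measurable. With that in place, no integrability assumption on $U$ is required.
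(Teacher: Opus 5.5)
Your proposal is correct and follows essentially the same route as the paper: decouple the additive objective, condition on $\mathcal G_{t-1}$ and use the independence of $Y_t$ to obtain $\exp\bigl(-\lambda \bar x u + \tfrac12\lambda^2\sigma^2u^2\bigr)$, then minimize this strictly convex expression pointwise. Your use of the freezing lemma for possibly unbounded $U_t$, and your termwise argument for almost-sure uniqueness, fill in details the paper leaves implicit.
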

\begin{proof}
For each $t$, the $t$-th summand in the objective depends only on $U_t$.
Since $U_t$ is $\mathcal G_{t-1}$-measurable and $Y_t$ is independent of $\mathcal G_{t-1}$ with law $N(\bar x,\sigma^2)$,
\begin{equation*}
	\mathbb E\bigl[
		\exp(-\lambda U_t Y_t)
		\,\mid \,
		\mathcal G_{t-1}
	\bigr]
	=
	\exp\bigl(
		-\lambda \bar x U_t + \frac{1}{2}\lambda^2 \sigma^2 U_t^2
	\bigr).
\end{equation*}
The right-hand side is strictly convex in $U_t$, hence uniquely minimized at
$U_t^\star = \bar x / (\lambda\sigma^2)$.
Since the objective is a sum of these coordinate-wise terms, this choice is optimal for every $t$, and uniqueness follows from strict convexity of each summand.
\end{proof}

\subsubsection{Statically randomized drift}

We next consider statically randomizing the drift of the base simulator.
Here, the optimal control is stochastic as it depends on the posterior
distribution over the realization of the random drift.

\begin{proposition}[Optimal investment under static randomization]
\label{prop:app_opt_inv_example_slm}
	Suppose that
	\begin{equation*}
		X_1 \sim N(\bar x,\tau^2),
		\qquad
		Y_t \mid X_1 \stackrel{\mathrm{iid}}{\sim} N(X_1,\sigma^2),
		\qquad t=1,\dots,T,
	\end{equation*}
	with $\sigma^2,\tau^2>0$.
	Let
	\begin{equation*}
		m_t := \mathbb E[X_1 \mid \mathcal G_{t-1}],
		\qquad
		r_t := \Var(X_1 \mid \mathcal G_{t-1})
		=
		\frac{\tau^2\sigma^2}{\sigma^2+(t-1)\tau^2}.
	\end{equation*}
	Then the unique optimal control in
	\cref{eq:app_opt_inv_example_objective} is
	\begin{equation}\label{eq:app_opt_inv_example_slm_control}
		U_t^\star
		=
		\frac{m_t}{\lambda(\sigma^2+r_t)} .
	\end{equation}
	Moreover,
	\begin{equation*}
		U_t^\star
		\sim
		N\Bigl(
			\frac{\bar x}{\lambda(\sigma^2+r_t)},
			\frac{\tau^2-r_t}{\lambda^2(\sigma^2+r_t)^2}
		\Bigr).
	\end{equation*}
\end{proposition}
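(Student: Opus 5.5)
The argument copies the proof of \cref{prop:app_opt_inv_example_known}, with the known law of $Y_t$ replaced by its predictive law given $\mathcal G_{t-1}$. The objective is again a sum of terms in which the $t$-th summand depends only on $U_t$. Since $U_t$ is $\mathcal G_{t-1}$-measurable, the tower property gives
\begin{equation*}
	\mathbb E\bigl[\exp(-\lambda U_tY_t)\bigr]
	=
	\mathbb E\Bigl[\mathbb E\bigl[\exp(-\lambda U_tY_t)\mid\mathcal G_{t-1}\bigr]\Bigr],
\end{equation*}
and the inner conditional expectation can be minimized pointwise in $U_t$.

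The first step is standard Gaussian conjugacy. Given $\mathcal G_{t-1}$, the posterior of $X_1$ is $N(m_t,r_t)$. Here $r_t^{-1}=\tau^{-2}+(t-1)\sigma^{-2}$, which matches the stated formula for $r_t$, and $m_t=r_t\bigl(\bar x/\tau^2+\sum_{s<t}Y_s/\sigma^2\bigr)$. Because $Y_t=X_1+\eta_t$ with $\eta_t\sim N(0,\sigma^2)$ independent of $(X_1,\mathcal G_{t-1})$, the predictive law of $Y_t$ given $\mathcal G_{t-1}$ is $N(m_t,\sigma^2+r_t)$.

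Evaluating the Gaussian moment generating function then gives
\begin{equation*}
	\mathbb E\bigl[\exp(-\lambda U_tY_t)\mid\mathcal G_{t-1}\bigr]
	=
	\exp\bigl(-\lambda m_tU_t+\tfrac12\lambda^2(\sigma^2+r_t)U_t^2\bigr).
\end{equation*}
This is strictly convex in $U_t$ and is uniquely minimized at $U_t^\star=m_t/(\lambda(\sigma^2+r_t))$. Since $U_t^\star$ is $\mathcal G_{t-1}$-measurable, it is admissible. It minimizes every summand, so it minimizes the sum. Uniqueness (almost surely) holds because any other admissible control gives a strictly larger conditional value on a set of positive probability, and hence a strictly larger expectation. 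The step that needs some care is integrability: one must check that the objective is finite at $U^\star$ so that the comparison is meaningful. This follows because $m_t$ is Gaussian and the exponent evaluated at the minimizer is $-m_t^2/(2(\sigma^2+r_t))\le 0$, so the integrand is bounded by $1$.

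For the distributional claim, note that $r_t$ is deterministic. So $U_t^\star$ is a deterministic multiple of $m_t$, and $m_t$ is Gaussian as a linear function of jointly Gaussian observations. The martingale property gives $\mathbb E[m_t]=\mathbb E[X_1]=\bar x$. The law of total variance gives $\Var(m_t)=\Var(X_1)-\mathbb E[\Var(X_1\mid\mathcal G_{t-1})]=\tau^2-r_t$. Dividing by $\lambda(\sigma^2+r_t)$ and its square yields exactly the stated mean and variance of $U_t^\star$.
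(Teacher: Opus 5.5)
Your proposal is correct and follows the paper's proof essentially line for line: Gaussian conjugacy gives the posterior $N(m_t,r_t)$ and the predictive law $N(m_t,\sigma^2+r_t)$, the conditional moment generating function is minimized pointwise, and the law of $U_t^\star$ follows because $m_t$ is Gaussian with mean $\bar x$ (tower property) and variance $\tau^2-r_t$ (law of total variance). Your extra observation is a small point the paper leaves implicit: the conditional objective at $U_t^\star$ equals $\exp\bigl(-m_t^2/(2(\sigma^2+r_t))\bigr)\le 1$, which is what makes the uniqueness comparison rigorous.
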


\begin{proof}
	Standard Gaussian conjugacy, see for instance \citet{murphy_2007_conju},
	gives
	\begin{equation*}
		X_1 \mid \mathcal G_{t-1} \sim N(m_t,r_t) ,
	\end{equation*}
	and the explicit, deterministic expression for $r_t$
	stated above.
	Hence
	\begin{equation*}
		Y_t \mid \mathcal G_{t-1}
		\sim
		N(m_t,\sigma^2+r_t).
	\end{equation*}
	Therefore, for any $U_t \in L^0(\mathcal G_{t-1})$,
	\begin{equation*}
		\mathbb E\bigl[
			\exp(-\lambda U_t Y_t)
			\,\mid \,
			\mathcal G_{t-1}
		\bigr]
		=
		\exp\bigl(
			-\lambda m_t U_t
			+
			\frac{1}{2}\lambda^2(\sigma^2+r_t)U_t^2
		\bigr).
	\end{equation*}
	The right-hand side is strictly convex in $U_t$, and is uniquely minimized at
	\begin{equation*}
		U_t^\star
		=
		\frac{m_t}{\lambda(\sigma^2+r_t)}.
	\end{equation*}
	Since the objective in \cref{eq:app_opt_inv_example_objective} is a sum of such terms, this proves the formula for the optimal control.

	Finally, $(X_1,Y_1,\dots,Y_{t-1})$ is jointly Gaussian, so $m_t$ is Gaussian.
	Moreover,
	\begin{equation*}
		\mathbb E[m_t]=\mathbb E[X_1]=\bar x,
	\end{equation*}
	and by the law of total variance,
	\begin{equation*}
		\tau^2
		=
		\mathbb E\bigl[\Var(X_1\mid \mathcal G_{t-1})\bigr]
		+
		\Var\bigl(\mathbb E[X_1\mid \mathcal G_{t-1}]\bigr)
		=
		r_t+\Var(m_t).
	\end{equation*}
	Thus
	\begin{equation*}
		m_t \sim N(\bar x,\tau^2-r_t),
	\end{equation*}
	and the stated law of $U_t^\star$ follows.
\end{proof}

\subsubsection{Dynamically randomized drift}

We next randomize the drift dynamically via a stationary autoregressive process of order $1$.
This results in a scalar linear Gaussian state-space model (LG-SSM),
so the filtering equations are given by the Kalman filter
\citep[see e.g.,][]{durbin_2012_time}.
Throughout this subsection, we work on the bi-infinite time axis and write
\begin{equation*}
	\mathcal G_t := \sigma(Y_s : s \le t),
	\qquad t \in \mathbb Z.
\end{equation*}
We consider the stationary version of the model, so that the Kalman filter is in steady state and the law of the optimal control is time-homogeneous.

\begin{proposition}[Optimal investment under dynamic randomization]
\label{prop:app_opt_inv_example_lg}
	Suppose that $|\phi|<1$ and $0<\tau^2<\sigma^2$, and consider
	\begin{equation}\label{eq:app_opt_inv_example_lgssm}
		X_t = \bar x + \phi(X_{t-1}-\bar x) + \varepsilon_t,
		\qquad
		Y_t = X_t + \eta_t,
		\qquad t \in \mathbb Z,
	\end{equation}
	with independent centered Gaussian noises satisfying
	\begin{equation*}
		\Var(\eta_t)=\sigma^2,
		\qquad
		\Var(\varepsilon_t)
		=
		\tau^2
		\bigl(
			\frac{\sigma^2}{\sigma^2-\tau^2}-\phi^2
		\bigr).
	\end{equation*}
	Let
	\begin{equation*}
		m_t := \mathbb E[X_t \mid \mathcal G_{t-1}],
		\qquad
		s_\star := \sigma^2 + \frac{\tau^2\sigma^2}{\sigma^2-\tau^2}.
	\end{equation*}
	Then the unique optimal control in
	\cref{eq:app_opt_inv_example_objective} is
	\begin{equation}\label{eq:app_opt_inv_example_lg_control}
		U_t^\star
		=
		\frac{m_t}{\lambda s_\star}.
	\end{equation}
	Moreover,
	\begin{equation*}
		U_t^\star
		\sim
		N\Bigl(
			\frac{\bar x}{\lambda s_\star},
			\frac{\phi^2\tau^4}
			     {\lambda^2 s_\star^2(\sigma^2-\tau^2)(1-\phi^2)}
		\Bigr).
	\end{equation*}
\end{proposition}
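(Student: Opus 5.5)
The plan is to mirror the proof of \cref{prop:app_opt_inv_example_slm}, using the steady-state Kalman filter in place of Gaussian conjugacy. The process is stationary on $\mathbb Z$ and $(X,Y)$ is jointly Gaussian, so for every $t$ the conditional law of $X_t$ given $\mathcal G_{t-1}$ is Gaussian, $N(m_t,P)$. The variance $P$ is deterministic and equals the one-step prediction variance. Hence $Y_t\mid\mathcal G_{t-1}\sim N(m_t,\sigma^2+P)$.

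Repeating the moment-generating-function computation from the static case gives
\begin{equation*}
\mathbb E[\exp(-\lambda U_tY_t)\mid\mathcal G_{t-1}]=\exp\bigl(-\lambda m_tU_t+\tfrac12\lambda^2(\sigma^2+P)U_t^2\bigr).
\end{equation*}
This is strictly convex in $U_t$ and uniquely minimized at $U_t^\star=m_t/(\lambda(\sigma^2+P))$. Since the objective is additive across $t$, this control is the unique optimizer. It remains to show $\sigma^2+P=s_\star$, i.e.\ $P=\tau^2\sigma^2/(\sigma^2-\tau^2)$.

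\textbf{Identifying the Riccati fixed point.} This is the main step. Write $q:=\Var(\varepsilon_t)$ and $F:=\Var(X_t\mid\mathcal G_t)$. The Kalman recursions read
\begin{equation*}
F=\frac{P\sigma^2}{P+\sigma^2},\qquad P=\phi^2F+q.
\end{equation*}
The intended design is $F=\tau^2$, matching the main text's claim $\Var(X_t\mid\mathcal G_t)=\tau^2$. Solving $\tau^2=P\sigma^2/(P+\sigma^2)$ gives $P=\tau^2\sigma^2/(\sigma^2-\tau^2)$, which is positive because $\tau^2<\sigma^2$. Then
\begin{equation*}
\phi^2\tau^2+q=\phi^2\tau^2+\tau^2\Bigl(\frac{\sigma^2}{\sigma^2-\tau^2}-\phi^2\Bigr)=\frac{\tau^2\sigma^2}{\sigma^2-\tau^2},
\end{equation*}
so this pair is a fixed point. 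For it to be the one the filter actually uses, two facts are needed. First, $q>0$, which holds since $\sigma^2/(\sigma^2-\tau^2)>1>\phi^2$. Second, the stationary full-history filter uses the unique positive stabilizing solution of the scalar Riccati equation. I would justify the second fact by noting that the scalar map $P\mapsto\phi^2P\sigma^2/(P+\sigma^2)+q$ is increasing and concave on $[0,\infty)$ with a unique positive fixed point. The finite-history prediction variances $\Var(X_t\mid Y_{t-n:t-1})$ iterate this map starting from $\Var(X_t)$. By martingale convergence, as in \cref{subsec:approx_stationarity}, they converge to $\Var(X_t\mid\mathcal G_{t-1})$, and this limit must therefore be the fixed point.

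\textbf{Law of $U_t^\star$.} $m_t$ is Gaussian, being an $L^2$-limit of linear functions of Gaussian observations, and $\mathbb E[m_t]=\bar x$. By the law of total variance, $\Var(m_t)=\Var(X_t)-P$. The stationary AR(1) variance is
\begin{equation*}
\Var(X_t)=\frac{q}{1-\phi^2}=\frac{\tau^2\sigma^2/(\sigma^2-\tau^2)-\phi^2\tau^2}{1-\phi^2}.
\end{equation*}
Subtracting $P=\tau^2\sigma^2/(\sigma^2-\tau^2)$ gives
\begin{equation*}
\Var(m_t)=\frac{\phi^2\bigl(\tau^2\sigma^2/(\sigma^2-\tau^2)-\tau^2\bigr)}{1-\phi^2}=\frac{\phi^2\tau^4}{(\sigma^2-\tau^2)(1-\phi^2)}.
\end{equation*}
Dividing $m_t$ by $\lambda s_\star$ yields the stated mean $\bar x/(\lambda s_\star)$ and variance $\phi^2\tau^4/\bigl(\lambda^2s_\star^2(\sigma^2-\tau^2)(1-\phi^2)\bigr)$.
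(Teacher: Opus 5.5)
Your proof is correct. The first half (Gaussian one-step predictive law, the moment-generating-function computation, strict convexity, additivity over $t$) is the same as the paper's, but you depart from the paper in two places.

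First, the paper simply invokes the steady-state Kalman formulas to obtain $X_t\mid\mathcal G_{t-1}\sim N(m_t,p_\star)$ with $p_\star=\tau^2\sigma^2/(\sigma^2-\tau^2)$. You instead check that this value solves the scalar Riccati equation, and then show that the full-history prediction variance must equal it. The finite-history variances are the iterates of $g(P)=\phi^2P\sigma^2/(P+\sigma^2)+q$ started at $\Var(X_t)$, and they converge to $\Var(X_t\mid\mathcal G_{t-1})$ by martingale convergence. The limit is a fixed point of the continuous map $g$. Finally, $g$ has exactly one fixed point on $[0,\infty)$, since $g(0)=q>0$ and $g$ is concave and bounded. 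This fills in a step the paper leaves to the literature.

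Second, for the law of $m_t$ the paper uses the innovations form of the filter. With gain $k_\star=\tau^2/\sigma^2$ and i.i.d.\ innovations $\xi_t=Y_t-m_t\sim N(0,s_\star)$, it writes $m_{t+1}-\bar x=\phi(m_t-\bar x)+\phi k_\star\xi_t$ and reads off the stationary AR(1) variance $\phi^2k_\star^2s_\star/(1-\phi^2)$. You instead use the law of total variance, $\Var(m_t)=\Var(X_t)-p_\star$ with $\Var(X_t)=\Var(\varepsilon_t)/(1-\phi^2)$, exactly as in the static-randomization proof.

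Your route is more elementary: it needs only the value of $p_\star$, not the whiteness of the innovations. However, it yields only the marginal law at a fixed $t$. The paper's route additionally shows that $m_t$, and hence $U_t^\star$, is itself a stationary Gaussian AR(1) process with persistence $\phi$, which describes how the optimal investment moves along a path.
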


\begin{proof}
	For the scalar linear Gaussian state-space model
	\eqref{eq:app_opt_inv_example_lgssm}, the standard steady-state Kalman filter formulas yield
	\begin{equation*}
		X_t \mid \mathcal G_{t-1}
		\sim
		N(m_t,p_\star),
		\qquad
		p_\star
		=
		\frac{\tau^2\sigma^2}{\sigma^2-\tau^2},
	\end{equation*}
	and hence
	\begin{equation*}
		Y_t \mid \mathcal G_{t-1}
		\sim
		N(m_t,s_\star).
	\end{equation*}
	Therefore, for any $U_t \in L^0(\mathcal G_{t-1})$,
	\begin{equation*}
		\mathbb E\bigl[
			\exp(-\lambda U_t Y_t)
			\, \mid \,
			\mathcal G_{t-1}
		\bigr]
		=
		\exp\bigl(
			-\lambda m_t U_t
			+
			\frac{1}{2}\lambda^2 s_\star U_t^2
		\bigr).
	\end{equation*}
	This is strictly convex in $U_t$, and is uniquely minimized at
	\begin{equation*}
		U_t^\star
		=
		\frac{m_t}{\lambda s_\star}.
	\end{equation*}
	Since the objective in \cref{eq:app_opt_inv_example_objective} is a sum of such terms, this proves \eqref{eq:app_opt_inv_example_lg_control}.

	Define
	\begin{equation*}
		k_\star := \frac{p_\star}{p_\star+\sigma^2} = \frac{\tau^2}{\sigma^2},
		\qquad
		\xi_t := Y_t - m_t.
	\end{equation*}
	Again by the steady-state Kalman filter equations,
	$(\xi_t)_{t \in \mathbb Z}$ is i.i.d.\ $N(0,s_\star)$ and
	\begin{equation*}
		m_{t+1}-\bar x
		=
		\phi(m_t-\bar x) + \phi k_\star \xi_t.
	\end{equation*}
	Thus $(m_t)_{t \in \mathbb Z}$ is a stationary Gaussian AR$(1)$ process with mean $\bar x$ and variance
	\begin{equation*}
		\Var(m_t)
		=
		\frac{\phi^2 k_\star^2 s_\star}{1-\phi^2}
		=
		\frac{\phi^2\tau^4}
		     {(\sigma^2-\tau^2)(1-\phi^2)}.
	\end{equation*}
	Hence
	\begin{equation*}
		m_t
		\sim
		N\bigl(
			\bar x,
			\frac{\phi^2\tau^4}
			     {(\sigma^2-\tau^2)(1-\phi^2)}
		\bigr),
	\end{equation*}
	and the stated law of $U_t^\star$ follows from
	\eqref{eq:app_opt_inv_example_lg_control}.
\end{proof}

\subsubsection{Worst-case drift}
\label{app:worst_case_regret_investment}

We use the fixed-drift model to contrast worst-case value with worst-case regret.
Throughout, $\mathbb P_x$ denotes the law under
$Y_t \stackrel{\mathrm{iid}}{\sim} N(x,\sigma^2)$,
where $x\in[\underline x,\overline x]$ is fixed throughout the episode, and
$\mathbb E_x$ denotes the corresponding expectation.

\begin{proposition}[Worst-case value]
\label{prop:app_opt_inv_worst_case_value}
Let
\begin{equation*}
		x_\dagger
		\in
		\operatorname*{argmin}_{x\in[\underline x,\overline x]} |x| .
\end{equation*}
Then the worst-case value problem
\begin{equation*}
		\inf_{U \; \mathbb G\text{-predictable}}
		\sup_{x\in[\underline x,\overline x]}
		\sum_{t=1}^T
		\mathbb E_x[\exp(-\lambda U_tY_t)]
\end{equation*}
is solved by
\begin{equation*}
		U_t^\star
		=
		\frac{x_\dagger}{\lambda\sigma^2},
		\qquad t=1,\dots,T .
\end{equation*}
\end{proposition}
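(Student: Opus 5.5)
This is a saddle-point argument: we exhibit a constant control and an adversarial drift that bound the minimax value from both sides. Write $J(U,x):=\sum_{t=1}^T \mathbb E_x[\exp(-\lambda U_tY_t)]$.

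\emph{Upper bound.} Take the constant control $U^\star_t\equiv c:=x_\dagger/(\lambda\sigma^2)$. As in the proof of \cref{prop:app_opt_inv_example_known}, each summand equals
\begin{equation*}
	\mathbb E_x[\exp(-\lambda c Y_t)]=\exp\bigl(-\lambda x c+\tfrac12\lambda^2\sigma^2c^2\bigr)=\exp\bigl(-x x_\dagger/\sigma^2+x_\dagger^2/(2\sigma^2)\bigr).
\end{equation*}
Maximizing over $x\in[\underline x,\overline x]$ means minimizing $x x_\dagger$. Since the interval contains no point of smaller modulus than $x_\dagger$, there are three cases. If $x_\dagger=0$, the expression is constant. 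If $x_\dagger>0$, then $\underline x=x_\dagger$ and the minimizer is $x=x_\dagger$. If $x_\dagger<0$, then $\overline x=x_\dagger$ and again the minimizer is $x=x_\dagger$. In every case
\begin{equation*}
	\sup_x J(U^\star,x)=J(U^\star,x_\dagger)=T\exp\bigl(-x_\dagger^2/(2\sigma^2)\bigr).
\end{equation*}

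\emph{Lower bound.} For any $\mathbb G$-predictable $U$ we have $\sup_x J(U,x)\ge J(U,x_\dagger)$. Under the fixed-drift law $\mathbb P_{x_\dagger}$, \cref{prop:app_opt_inv_example_known} with $\bar x=x_\dagger$ gives that $U^\star$ is the unique minimizer of $J(\cdot,x_\dagger)$ among predictable controls. Hence $J(U,x_\dagger)\ge J(U^\star,x_\dagger)=\sup_x J(U^\star,x)$, and $U^\star$ attains the infimum of the worst-case problem. This is the standard fact that a minimizer against a point $x_\dagger$ at which it is worst-case solves the minimax problem.

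\emph{Main obstacle.} The only nontrivial step is showing that $x_\dagger$ is indeed the worst case for $U^\star$. This is the sign analysis of $x x_\dagger$ over the interval, and it relies on the fact that the closest-to-zero point lies at the endpoint nearest the origin, or equals $0$ when the interval contains $0$. One should also note that $J$ may be $+\infty$ for some controls, which does not affect the inequalities.
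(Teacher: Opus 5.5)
Your proof is correct and follows the same route as the paper. You lower-bound the worst case of any predictable $U$ by evaluating the supremum at $x_\dagger$ and invoking the known-drift result, and you show that the exponent $x_\dagger^2/(2\sigma^2) - x x_\dagger/\sigma^2$ of $U^\star$'s per-period cost is maximized at $x = x_\dagger$; your three-case sign analysis simply spells out the step the paper states in one line.
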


\begin{proof}
Evaluating the supremum at $x_\dagger$ and applying
\cref{prop:app_opt_inv_example_known} gives, for any policy $U$,
\begin{equation*}
		\sup_{x\in[\underline x,\overline x]}
		\sum_{t=1}^T
		\mathbb E_x[\exp(-\lambda U_tY_t)]
		\ge
		T\exp\bigl(-\frac{x_\dagger^2}{2\sigma^2}\bigr).
\end{equation*}
For $U_t^\star=x_\dagger/(\lambda\sigma^2)$,
\begin{equation*}
		\mathbb E_x[\exp(-\lambda U_t^\star Y_t)]
		=
		\exp\bigl(
				\frac{x_\dagger^2}{2\sigma^2}
				-
				\frac{x_\dagger x}{\sigma^2}
		\bigr).
\end{equation*}
The exponent is maximized over $[\underline x,\overline x]$ at $x=x_\dagger$
(or is constant when $x_\dagger=0$).
Thus $U^\star$ attains the lower bound.
\end{proof}

Define the known-drift action and fixed-$x$ utility by
\begin{equation*}
		a^\star(x)
		:=
		\frac{x}{\lambda\sigma^2},
		\qquad
		j_x(u)
		:=
		\mathbb E_x[1-\exp(-\lambda uY_t)] .
\end{equation*}
For a policy $U$, define its fixed-$x$ regret by
\begin{equation*}
		R_T(U,x)
		:=
		\sum_{t=1}^T
		\mathbb E_x\bigl[
				j_x(a^\star(x))-j_x(U_t)
		\bigr].
\end{equation*}

\begin{proposition}[Worst-case regret]
\label{prop:app_opt_inv_worst_case_regret}
Suppose $U^{(T)}$ solves
\begin{equation*}
		\inf_{U \; \mathbb G\text{-predictable}}
		\sup_{x\in[\underline x,\overline x]} R_T(U,x).
\end{equation*}
Then, for every $x\in[\underline x,\overline x]$ and every $\epsilon>0$,
\begin{equation*}
		\frac{1}{T}
		\sum_{t=1}^T
		\mathbb P_x\bigl(
				|U_t^{(T)}-a^\star(x)|>\epsilon
		\bigr)
		\longrightarrow 0 .
\end{equation*}
\end{proposition}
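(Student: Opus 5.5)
The plan is to sandwich the minimax regret. An explicit plug-in policy shows that the optimal value $\sup_x R_T(U^{(T)},x)$ grows only like $O(\log T)$. A pointwise lower bound on the per-step regret then converts this into the frequency statement.

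\emph{Step 1: per-step regret formula.} Fix $x$. Since $U_t$ is $\mathcal G_{t-1}$-measurable and, under $\mathbb P_x$, $Y_t\sim N(x,\sigma^2)$ is independent of $\mathcal G_{t-1}$, we can condition on $\mathcal G_{t-1}$ as in the proof of \cref{prop:app_opt_inv_example_known}. This gives
\begin{equation*}
\mathbb E_x\bigl[j_x(a^\star(x))-j_x(U_t)\bigr]=\mathbb E_x\bigl[g_x(U_t)\bigr],
\qquad
g_x(u):=e^{-x^2/(2\sigma^2)}\Bigl(\exp\bigl(\tfrac12\lambda^2\sigma^2(u-a^\star(x))^2\bigr)-1\Bigr),
\end{equation*}
which follows by completing the square in $-\lambda x u+\tfrac12\lambda^2\sigma^2u^2$. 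In particular $g_x\ge 0$, and $g_x$ is increasing in $|u-a^\star(x)|$.

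\emph{Step 2: lower bound via Markov.} Let $\kappa_\epsilon(x):=e^{-x^2/(2\sigma^2)}\bigl(e^{\lambda^2\sigma^2\epsilon^2/2}-1\bigr)>0$. Since $g_x(u)\ge \kappa_\epsilon(x)\mathbf 1\{|u-a^\star(x)|>\epsilon\}$, we get
\begin{equation*}
\frac1T\sum_{t=1}^T\mathbb P_x\bigl(|U_t^{(T)}-a^\star(x)|>\epsilon\bigr)\le \frac{R_T(U^{(T)},x)}{\kappa_\epsilon(x)\,T}\le \frac{\inf_U\sup_{x'}R_T(U,x')}{\kappa_\epsilon(x)\,T}.
\end{equation*}
The second inequality uses optimality of $U^{(T)}$; an $o(T)$-near-optimal policy would also suffice. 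It therefore remains to show that the minimax regret is $o(T)$.

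\emph{Step 3: explicit upper bound (the main step).} Use the clipped plug-in policy. Set $U_1:=a^\star(\underline x)$ and, for $t\ge2$, $U_t:=a^\star(\hat m_{t})$, where $\hat m_t:=\Pi_{[\underline x,\overline x]}\bigl(\tfrac1{t-1}\sum_{s<t}Y_s\bigr)$ and $\Pi$ denotes projection onto the interval.

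Substituting into Step 1 gives
\begin{equation*}
g_x(a^\star(\hat m_t))=e^{-x^2/(2\sigma^2)}\bigl(e^{(\hat m_t-x)^2/(2\sigma^2)}-1\bigr).
\end{equation*}
The difficulty is uniformity in $x$, because without clipping this quantity would require exponential moments of the sample mean. Clipping resolves this. Both $\hat m_t$ and $x$ lie in the interval, so $|\hat m_t-x|\le D:=\overline x-\underline x$. Convexity of $z\mapsto e^z-1$ on $[0,D^2/(2\sigma^2)]$ then gives $e^z-1\le C_D z$ there. Moreover, projection is nonexpansive and $x$ lies in the interval, so $|\hat m_t-x|\le|\bar Y_{t-1}-x|$.

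Hence the per-step regret is at most $C_D\,\sigma^2/(2\sigma^2(t-1))=C_D/(2(t-1))$ for $t\ge2$. The term at $t=1$ is bounded by a constant depending only on $D$. Summing yields $\sup_x R_T\le C(1+\log T)$.

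\emph{Conclusion.} Combining Steps 2 and 3, the left-hand side in the display of Step 2 is at most $C(1+\log T)/(\kappa_\epsilon(x)T)\to0$, which is the claim.
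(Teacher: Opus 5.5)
Your proposal is correct and follows essentially the same route as the paper: an $O(\log T)$ upper bound on the minimax regret via the clipped sample-mean plug-in policy, using the closed-form one-step regret and the bound $e^a-1\le C a$ on the bounded range, combined with the pointwise lower bound on the one-step regret by a constant times $\mathbf 1\{|u-a^\star(x)|>\epsilon\}$. The only difference is that you use the explicit $x$-dependent constant $\kappa_\epsilon(x)$ where the paper uses a uniform $c_\epsilon$ over the compact interval; this is harmless since $x$ is fixed.
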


\begin{proof}
First, the minimax-regret value is at most logarithmic in $T$.
For $t\ge2$, let
\begin{equation*}
		\widehat x_{t-1}
		=
		\Pi_{[\underline x,\overline x]}
		\bigl(
				\frac{1}{t-1}\sum_{s=1}^{t-1}Y_s
		\bigr),
		\qquad
		U_t
		=
		\frac{\widehat x_{t-1}}{\lambda\sigma^2},
\end{equation*}
with any fixed $U_1$.
Projection onto $[\underline x,\overline x]$ gives
\begin{equation*}
		\mathbb E_x[(\widehat x_{t-1}-x)^2]
		\le
		\frac{\sigma^2}{t-1}.
\end{equation*}
Moreover, the one-period regret is
\begin{equation*}
		j_x(a^\star(x))-j_x(u)
		=
		\exp\bigl(-\frac{x^2}{2\sigma^2}\bigr)
		\bigl[
				\exp\bigl(
						\frac{(\lambda\sigma^2u-x)^2}{2\sigma^2}
				\bigr)
				-1
		\bigr].
\end{equation*}
Taking $u=z/(\lambda\sigma^2)$, this becomes
\begin{equation*}
		j_x(a^\star(x))
		-
		j_x\bigl(\frac{z}{\lambda\sigma^2}\bigr)
		=
		\exp\bigl(-\frac{x^2}{2\sigma^2}\bigr)
		\bigl[
				\exp\bigl(\frac{(z-x)^2}{2\sigma^2}\bigr)
				-1
		\bigr].
\end{equation*}
Since $x,z\in[\underline x,\overline x]$, the quantity $(z-x)^2/(2\sigma^2)$ is bounded,
so that $e^a-1\le C_0a$ on this bounded interval for some $C_0<\infty$.
Therefore, with $C:=C_0/(2\sigma^2)$,
\begin{equation*}
		j_x(a^\star(x))
		-
		j_x\bigl(\frac{z}{\lambda\sigma^2}\bigr)
		\le
		C(z-x)^2
\end{equation*}
for all $x,z\in[\underline x,\overline x]$.
Applying this with $z=\widehat x_{t-1}$ gives, for $t\ge2$,
\begin{equation*}
		\mathbb E_x\bigl[
				j_x(a^\star(x))-j_x(U_t)
		\bigr]
		\le
		C\mathbb E_x[(\widehat x_{t-1}-x)^2]
		\le
		\frac{C\sigma^2}{t-1}.
\end{equation*}
The first-period regret is bounded uniformly in $x\in[\underline x,\overline x]$ because
the interval is compact and $U_1$ is fixed.
Summing over time therefore gives, for some constant $C^\prime<\infty$,
\begin{equation*}
		\sup_{x\in[\underline x,\overline x]} R_T(U,x)
		\le
		C^\prime(1+\log T).
\end{equation*}
Since $U^{(T)}$ is minimax optimal, it satisfies the same bound:
\begin{equation*}
		\sup_{x\in[\underline x,\overline x]} R_T(U^{(T)},x)
		\le
		C^\prime(1+\log T).
\end{equation*}

Now fix $x\in[\underline x,\overline x]$ and $\epsilon>0$.
If $|u-a^\star(x)|>\epsilon$, then
$|\lambda\sigma^2u-x|>\lambda\sigma^2\epsilon$.
Since $[\underline x,\overline x]$ is compact, the exact regret formula gives a constant
$c_\epsilon>0$ such that
\begin{equation*}
		j_x(a^\star(x))-j_x(u)
		\ge
		c_\epsilon
		\mathbf 1\{|u-a^\star(x)|>\epsilon\}
\end{equation*}
for all $x\in[\underline x,\overline x]$ and $u\in\mathbb R$.
Therefore
\begin{equation*}
		c_\epsilon
		\sum_{t=1}^T
		\mathbb P_x\bigl(
				|U_t^{(T)}-a^\star(x)|>\epsilon
		\bigr)
		\le
		R_T(U^{(T)},x)
		\le
		C^\prime(1+\log T).
\end{equation*}
Dividing by $T$ proves the claim.
\end{proof}

\paragraph{Numerical minimax-regret policy.}
We visualize a policy that approximately solves the worst-case regret
problem of \cref{prop:app_opt_inv_worst_case_regret}
in \cref{fig:worst_case_regret_regime_switch_control_quantiles}.
The visualized policy is parameterized as
\begin{equation*}
	U_t^\theta
	=
	\frac{
			w_{1,\theta}(q_t)\bar Y_{t-1}
			+
			w_{2,\theta}(q_t)m_t^{\mathrm{KF}}
			+
			d_\theta(q_t)
	}{
			\lambda(\sigma^2+r_\theta(q_t))
	},
	\qquad
	q_t=\frac{\log t}{\log(1+T)} ,
\end{equation*}
where $\bar Y_{t-1}$ is the empirical mean of the observed returns
and $m_t^{\mathrm{KF}}$ is the predictive mean from the steady-state
LG-SSM Kalman filter in \cref{prop:app_opt_inv_example_lg}.
This parameterization lets the policy use both the full historical sample mean
$\bar Y_{t-1}$ and the Kalman mean $m_t^{\mathrm{KF}}$,
which forgets older observations.
It therefore does not force the policy to inherit the specialization behavior
of the static latent policy.
The functions $w_{1,\theta},w_{2,\theta},d_\theta,r_\theta$
are piecewise-linear splines in $q_t$, with $r_\theta\ge0$.
For each fixed drift $x$, the pair
$(\bar Y_{t-1},m_t^{\mathrm{KF}})$ is jointly Gaussian,
so each one-step regret term can be evaluated in closed form.
We optimize the spline coefficients by minimizing a log-sum-exp approximation
of $\sup_x R_T(U^\theta,x)$ over a finite grid of drift values.

\subsection{Randomized Heston model}\label{app:subsec:randomized_heston_proofs}

\begin{proposition}[Uniform contraction of the Heston update]
\label{prop:heston_uniform_contraction}
Fix $\Delta>0$ and let
\begin{equation*}
\mathcal P \subset
[\underline\kappa,\overline\kappa]
\times
[\underline v,\overline v]
\times
[\underline\xi,\overline\xi]
\times
[-1,1],
\end{equation*}
where
\begin{equation*}
0<\underline\kappa\le \overline\kappa<\infty,
\qquad
0<\underline v\le \overline v<\infty,
\qquad
0<\underline\xi\le \overline\xi<\infty.
\end{equation*}
For $x=(\kappa,\bar v,\xi,\rho)\in\mathcal P$ and $v\in\mathbb R_+$,
let $(V_t^{x,v})_{0\le t\le \Delta}$ be the nonnegative solution of
\begin{equation*}
dV_t^{x,v}
=
\kappa(\bar v-V_t^{x,v})\,dt
+
\xi\sqrt{V_t^{x,v}}\,dB_t,
\qquad
V_0^{x,v}=v,
\end{equation*}
where $B$ is a Brownian motion on $[0,\Delta]$. Let $B^\perp$ be an
independent Brownian motion and define
\begin{equation*}
I^{x,v}:=\int_0^\Delta V_t^{x,v}\,dt
\end{equation*}
and
\begin{equation*}
R^{x,v}
:=
-\frac12 I^{x,v}
+
\rho\int_0^\Delta \sqrt{V_t^{x,v}}\,dB_t
+
\sqrt{1-\rho^2}
\int_0^\Delta \sqrt{V_t^{x,v}}\,dB_t^\perp .
\end{equation*}
Writing $U=(B,B^\perp)$, define the one-step update on
$\mathcal Y=\mathbb R\times\mathbb R_+$ by
\begin{equation*}
F_x((r,v),U):=(R^{x,v},V_\Delta^{x,v}).
\end{equation*}
Then there exist $\beta>0$, $c\in(0,1)$, $M<\infty$, and
$y_\star\in\mathcal Y$ such that, with
\begin{equation*}
d_\beta((r,v),(\tilde r,\tilde v))
:=
\beta |r-\tilde r|
+
|v-\tilde v|
+
\sqrt{|v-\tilde v|},
\end{equation*}
one has
\begin{equation}
\label{eq:heston_uniform_contraction}
\mathbb E\bigl[
d_\beta(F_x(y,U),F_x(\tilde y,U))
\bigr]
\le
c\,d_\beta(y,\tilde y)
\end{equation}
for all $x\in\mathcal P$ and all $y,\tilde y\in\mathcal Y$, and
\begin{equation}
\label{eq:heston_uniform_moment_bound}
\sup_{x\in\mathcal P}
\mathbb E\bigl[
d_\beta(y_\star,F_x(y_\star,U))
\bigr]
\le M.
\end{equation}
Moreover, $d_\beta$ is a complete separable metric on $\mathcal Y$.
\end{proposition}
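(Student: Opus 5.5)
The approach is a \emph{synchronous coupling}: for fixed $x\in\mathcal P$ we run the two one-step updates $F_x(y,U)$ and $F_x(\tilde y,U)$, with $y=(r,v)$ and $\tilde y=(\tilde r,\tilde v)$, on the same pair of Brownian motions $U=(B,B^\perp)$. The update does not depend on the return coordinate $r$ of the input. So the term $\beta|r-\tilde r|$ in $d_\beta(y,\tilde y)$ only enlarges the right-hand side of \eqref{eq:heston_uniform_contraction}, and it suffices to prove
\begin{equation*}
\mathbb E\bigl[d_\beta(F_x(y,U),F_x(\tilde y,U))\bigr]\le c\bigl(|v-\tilde v|+\sqrt{|v-\tilde v|}\bigr).
\end{equation*}

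\textbf{Step 1: contraction of the variance component.} The CIR equation has pathwise unique strong nonnegative solutions by Yamada--Watanabe; no Feller condition is needed for nonnegativity. By the comparison theorem for one-dimensional SDEs with H\"older-$1/2$ diffusion coefficient, $v\le\tilde v$ implies $V^{x,v}_t\le V^{x,\tilde v}_t$ for all $t$ almost surely. Hence $|V^{x,v}_t-V^{x,\tilde v}_t|$ equals the signed difference up to a fixed sign. The stochastic integrals are true martingales because CIR moments are finite, so taking expectations in the linear drift gives the exact formula
\begin{equation*}
\mathbb E|V^{x,v}_t-V^{x,\tilde v}_t|=e^{-\kappa t}|v-\tilde v|.
\end{equation*}
Integrating in $t$ gives $\mathbb E|I^{x,v}-I^{x,\tilde v}|=\kappa^{-1}(1-e^{-\kappa\Delta})|v-\tilde v|\le\Delta|v-\tilde v|$. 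Jensen's inequality then gives $\mathbb E\sqrt{|V_\Delta-\tilde V_\Delta|}\le e^{-\kappa\Delta/2}\sqrt{|v-\tilde v|}$.

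\textbf{Step 2: control of the return component.} Write $W:=\rho B+\sqrt{1-\rho^2}B^\perp$, which is a Brownian motion, so that $R^{x,v}=-\tfrac12 I^{x,v}+\int_0^\Delta\sqrt{V^{x,v}_t}\,dW_t$. Combine the triangle inequality, Cauchy--Schwarz, the It\^o isometry and the elementary bound $(\sqrt a-\sqrt b)^2\le|a-b|$. This yields
\begin{equation*}
\mathbb E|R^{x,v}-R^{x,\tilde v}|\le\tfrac{\Delta}{2}|v-\tilde v|+\Bigl(\int_0^\Delta\mathbb E|V_t-\tilde V_t|\,dt\Bigr)^{1/2}\le\tfrac{\Delta}{2}|v-\tilde v|+\sqrt{\Delta}\sqrt{|v-\tilde v|}.
\end{equation*}

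\textbf{Step 3: choice of $\beta$.} Adding Steps 1 and 2, the left-hand side of the reduced inequality is at most
\begin{equation*}
(\beta\Delta/2+e^{-\underline\kappa\Delta})\,|v-\tilde v|+(\beta\sqrt\Delta+e^{-\underline\kappa\Delta/2})\sqrt{|v-\tilde v|}.
\end{equation*}
Choose $\beta>0$ so small that $c:=\max\{\beta\Delta/2+e^{-\underline\kappa\Delta},\ \beta\sqrt\Delta+e^{-\underline\kappa\Delta/2}\}<1$. This is uniform over $\mathcal P$ because only the lower bound $\underline\kappa>0$ enters.

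\textbf{Step 4: moment bound and metric properties.} Take $y_\star=(0,\underline v)$. We have $\mathbb E V^{x,v}_t=\bar v+(v-\bar v)e^{-\kappa t}$ and $\mathbb E|R|\le\tfrac12\mathbb E I+(\mathbb E I)^{1/2}$. Together with $\mathbb E\sqrt{|V_\Delta-\underline v|}\le(\mathbb E|V_\Delta-\underline v|)^{1/2}$, these are bounded uniformly over the compact set $\mathcal P$, which gives $M$ in \eqref{eq:heston_uniform_moment_bound}. For the metric claim, $\sqrt{|v-\tilde v|}$ is a metric on $\mathbb R_+$ because $\sqrt{\cdot}$ is concave, subadditive and vanishes only at $0$. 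Hence $d_\beta$ is a metric. A sequence is $d_\beta$-Cauchy if and only if it is Euclidean-Cauchy, and the limits coincide; $\mathbb R\times\mathbb R_+$ is closed in $\mathbb R^2$. So $d_\beta$ is complete, and it is separable with the rationals as a countable dense set.

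The main obstacle is Step 1: justifying the ordered (comparison) coupling for the CIR process, whose diffusion coefficient is not Lipschitz at $0$. This is what turns $\mathbb E|V-\tilde V|$ into an exact linear ODE. If the comparison route runs into trouble, I would fall back on the Yamada--Watanabe local-time argument: applying It\^o--Tanaka to $|V-\tilde V|$, the local time at zero vanishes under H\"older-$1/2$ noise, which gives $\frac{d}{dt}\mathbb E|V-\tilde V|\le-\kappa\,\mathbb E|V-\tilde V|$ directly.
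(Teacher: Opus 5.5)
Your proposal is correct and follows essentially the same route as the paper's proof: a synchronous coupling, the comparison theorem giving the exact identity $\mathbb E|V_t^{x,v}-V_t^{x,\tilde v}|=e^{-\kappa t}|v-\tilde v|$, Jensen for the square-root term, and Cauchy--Schwarz plus It\^o's isometry with $(\sqrt a-\sqrt b)^2\le|a-b|$ for the return coordinate, followed by a small $\beta$ depending only on $\underline\kappa$ and $\Delta$. The differences are cosmetic: you merge $\rho B+\sqrt{1-\rho^2}B^\perp$ into a single Brownian motion, which gives $\sqrt{\Delta}$ where the paper has $2h_\ast^{1/2}$, and you bound $h_\ast$ by $\Delta$; neither changes the argument.
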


\begin{proof}
First note that $d_\beta$ is a metric. Indeed, $|\cdot|$ is a metric and
$a\mapsto\sqrt a$ is subadditive on $\mathbb R_+$. Hence
$\sqrt{|v-\tilde v|}$ satisfies the triangle inequality. Completeness and
separability follow from the corresponding properties of
$\mathbb R\times\mathbb R_+$ under the Euclidean metric, since convergence
in $d_\beta$ is equivalent to coordinate-wise Euclidean convergence.

Set
\begin{equation*}
q:=e^{-\underline\kappa\Delta}\in(0,1),
\qquad
h_\ast
:=
\sup_{\kappa\in[\underline\kappa,\overline\kappa]}
\int_0^\Delta e^{-\kappa t}\,dt
=
\frac{1-e^{-\underline\kappa\Delta}}{\underline\kappa}.
\end{equation*}
Fix $x=(\kappa,\bar v,\xi,\rho)\in\mathcal P$ and
$y=(r,v),\tilde y=(\tilde r,\tilde v)\in\mathcal Y$. Write
\begin{equation*}
\delta:=|v-\tilde v|.
\end{equation*}
Couple $V^{x,v}$ and $V^{x,\tilde v}$ by driving them with the same
Brownian motion $B$, and set
\begin{equation*}
D_t:=V_t^{x,v}-V_t^{x,\tilde v}.
\end{equation*}
Then
\begin{equation*}
dD_t
=
-\kappa D_t\,dt
+
\xi\bigl(\sqrt{V_t^{x,v}}-\sqrt{V_t^{x,\tilde v}}\bigr)dB_t.
\end{equation*}
By the one-dimensional comparison theorem for SDEs
\citep[see e.g.,][chapter~IX]{revuz_1999_conti},
the two variance paths cannot cross
so that $D_t$ keeps the sign of $D_0=v-\tilde v$.
Moreover, the stochastic integral in the
equation for $D_t$ has mean zero, and hence
\begin{equation*}
\mathbb E\bigl[D_t\bigr]
=
D_0-\kappa\int_0^t \mathbb E\bigl[D_s\bigr]\,ds .
\end{equation*}
Solving this linear integral equation gives
\begin{equation*}
\mathbb E\bigl[D_t\bigr]
=
e^{-\kappa t}(v-\tilde v).
\end{equation*}
Since $D_t$ has the same sign as $v-\tilde v$, it follows that
\begin{equation*}
\mathbb E\bigl[|D_t|\bigr]
=
e^{-\kappa t}\delta
\le
e^{-\underline\kappa t}\delta .
\end{equation*}

Consequently,
\begin{equation}
\label{eq:heston_v_terminal_l1}
\mathbb E\bigl[
|V_\Delta^{x,v}-V_\Delta^{x,\tilde v}|
\bigr]
\le q\delta ,
\end{equation}
and by Jensen's inequality,
\begin{equation}
\label{eq:heston_v_terminal_sqrt}
\mathbb E\bigl[
\sqrt{|V_\Delta^{x,v}-V_\Delta^{x,\tilde v}|}
\bigr]
\le
q^{1/2}\sqrt\delta .
\end{equation}
For the integrated variance, the triangle inequality, Fubini's theorem,
and the definition of $h_\ast$ give
\begin{equation}
\label{eq:heston_integrated_variance_l1}
\mathbb E\bigl[
|I^{x,v}-I^{x,\tilde v}|
\bigr]
\le
\mathbb E\bigl[
\int_0^\Delta |D_t|\,dt
\bigr]
=
\int_0^\Delta
\mathbb E\bigl[|D_t|\bigr]\,dt
\le
\delta\int_0^\Delta e^{-\underline\kappa t}\,dt
=
h_\ast\delta .
\end{equation}

For the return coordinate,
\begin{align*}
R^{x,v}-R^{x,\tilde v}
&=
-\frac12(I^{x,v}-I^{x,\tilde v}) \\
&\quad
+
\rho
\int_0^\Delta
\bigl(\sqrt{V_t^{x,v}}-\sqrt{V_t^{x,\tilde v}}\bigr)dB_t \\
&\quad
+
\sqrt{1-\rho^2}
\int_0^\Delta
\bigl(\sqrt{V_t^{x,v}}-\sqrt{V_t^{x,\tilde v}}\bigr)dB_t^\perp .
\end{align*}

Using Cauchy--Schwarz and Itô's isometry, we obtain
\begin{align*}
&\mathbb E\bigl[
\bigl|
\int_0^\Delta
\bigl(\sqrt{V_t^{x,v}}-\sqrt{V_t^{x,\tilde v}}\bigr)dB_t
\bigr|
\bigr] \\
&\qquad\le
\biggl(
\mathbb E\bigl[
\int_0^\Delta
\bigl(\sqrt{V_t^{x,v}}-\sqrt{V_t^{x,\tilde v}}\bigr)^2 dt
\bigr]
\biggr)^{1/2} \\
&\qquad\le
\biggl(
\mathbb E\bigl[
\int_0^\Delta |V_t^{x,v}-V_t^{x,\tilde v}|\,dt
\bigr]
\biggr)^{1/2}
=
\biggl(
\mathbb E\bigl[
\int_0^\Delta |D_t|\,dt
\bigr]
\biggr)^{1/2}.
\end{align*}
Here the second inequality uses
$(\sqrt a-\sqrt b)^2\le |a-b|$ for $a,b\ge 0$.
By \eqref{eq:heston_integrated_variance_l1} we therefore obtain
\begin{equation*}
\mathbb E\bigl[
\bigl|
\int_0^\Delta
\bigl(\sqrt{V_t^{x,v}}-\sqrt{V_t^{x,\tilde v}}\bigr)dB_t
\bigr|
\bigr]
\le
h_\ast^{1/2}\sqrt\delta .
\end{equation*}
The same bound holds with $B^\perp$ in place of $B$. Since $|\rho|\le 1$
and $\sqrt{1-\rho^2}\le 1$, it follows that
\begin{equation}
\label{eq:heston_return_l1}
\mathbb E\bigl[
|R^{x,v}-R^{x,\tilde v}|
\bigr]
\le
\frac12 h_\ast\delta
+
2h_\ast^{1/2}\sqrt\delta .
\end{equation}

Combining
\eqref{eq:heston_v_terminal_l1},
\eqref{eq:heston_v_terminal_sqrt}, and
\eqref{eq:heston_return_l1}, we get
\begin{align*}
&\mathbb E\bigl[
d_\beta(F_x(y,U),F_x(\tilde y,U))
\bigr] \\
&\qquad=
\beta\mathbb E\bigl[
|R^{x,v}-R^{x,\tilde v}|
\bigr]
+
\mathbb E\bigl[
|V_\Delta^{x,v}-V_\Delta^{x,\tilde v}|
\bigr]
+
\mathbb E\bigl[
\sqrt{|V_\Delta^{x,v}-V_\Delta^{x,\tilde v}|}
\bigr] \\
&\qquad\le
\bigl(q+\frac12\beta h_\ast\bigr)\delta
+
\bigl(q^{1/2}+2\beta h_\ast^{1/2}\bigr)\sqrt\delta \\
&\qquad\le
\underbrace{
\max\bigl\{
q+\frac12\beta h_\ast,\,
q^{1/2}+2\beta h_\ast^{1/2}
\bigr\}
}_{=:c_\beta}
(\delta+\sqrt\delta)
\le
c_\beta d_\beta(y,\tilde y) ,
\end{align*}
where the last inequality follows from
$\delta+\sqrt\delta\le d_\beta(y,\tilde y)$. Because $q<1$ and
$q^{1/2}<1$, we may choose
\begin{equation*}
0<\beta<
\min\bigl\{
\frac{2(1-q)}{h_\ast},
\frac{1-q^{1/2}}{2h_\ast^{1/2}}
\bigr\}.
\end{equation*}
For this choice of $\beta$, we have $c_\beta<1$. Taking
$c:=c_\beta$ proves \eqref{eq:heston_uniform_contraction}.

It remains to prove the uniform moment bound. Fix any $v_\star>0$ and set
$y_\star:=(0,v_\star)$. For $x=(\kappa,\bar v,\xi,\rho)\in\mathcal P$, the
first moment of the CIR process is
\begin{equation*}
\mathbb E\bigl[V_t^{x,v_\star}\bigr]
=
\bar v+e^{-\kappa t}(v_\star-\bar v).
\end{equation*}
Hence, with $C_\star:=\max\{v_\star,\overline v\}$,
\begin{equation*}
\mathbb E\bigl[V_t^{x,v_\star}\bigr]
\le C_\star
\qquad
\text{for all }t\in[0,\Delta]\text{ and all }x\in\mathcal P.
\end{equation*}
Therefore
\begin{equation*}
\mathbb E\bigl[I^{x,v_\star}\bigr]
\le
\Delta C_\star,
\qquad
\mathbb E\bigl[
|V_\Delta^{x,v_\star}-v_\star|
\bigr]
\le
C_\star+v_\star,
\end{equation*}
and Jensen's inequality gives
\begin{equation*}
\mathbb E\bigl[
\sqrt{|V_\Delta^{x,v_\star}-v_\star|}
\bigr]
\le
\sqrt{C_\star+v_\star}.
\end{equation*}
For the return coordinate, Cauchy--Schwarz and Itô's isometry yield
\begin{align*}
\mathbb E\bigl[|R^{x,v_\star}|\bigr]
&\le
\frac12\mathbb E\bigl[I^{x,v_\star}\bigr]
+
\mathbb E\bigl[
\bigl|
\int_0^\Delta \sqrt{V_t^{x,v_\star}}\,dB_t
\bigr|
\bigr]
+
\mathbb E\bigl[
\bigl|
\int_0^\Delta \sqrt{V_t^{x,v_\star}}\,dB_t^\perp
\bigr|
\bigr] \\
&\le
\frac12\Delta C_\star
+
2(\Delta C_\star)^{1/2}.
\end{align*}
Combining these estimates, uniformly over $x\in\mathcal P$,
\begin{align*}
\mathbb E\bigl[
d_\beta(y_\star,F_x(y_\star,U))
\bigr]
&\le
\beta\bigl(
\frac12\Delta C_\star
+
2(\Delta C_\star)^{1/2}
\bigr)
+
C_\star+v_\star
+
\sqrt{C_\star+v_\star}.
\end{align*}
The right-hand side is finite and independent of $x$. This proves
\eqref{eq:heston_uniform_moment_bound}.
\end{proof}

\end{document}